\pdfoutput=1

\documentclass{article}
\usepackage{arxiv}
\usepackage{graphicx}
\usepackage{float}

\usepackage{amsmath}
\usepackage{amssymb}
\usepackage{mathtools}
\usepackage{amsthm}
\usepackage{bm}
\usepackage{dsfont}
\usepackage{times}

\usepackage{tabularx}
\usepackage{amsfonts}       
\usepackage{nicefrac}       
\usepackage{microtype}      
\usepackage{xcolor}         
\usepackage[colorlinks=true,citecolor=blue]{hyperref}       
\usepackage{enumitem}
\usepackage{subcaption}
\usepackage[english]{babel}
\usepackage[capitalize,noabbrev]{cleveref}
\usepackage{parskip}
\usepackage{overpic}
\geometry{left=2.5cm,right=2.5cm,top=3cm,bottom=3cm}
\usepackage{cite}
\bibliographystyle{unsrt}

\usepackage{multirow}
\usepackage{makecell}
\usepackage{tablefootnote}

\theoremstyle{remark}
\newtheorem{remark}{Remark}
\theoremstyle{plain}
\newtheorem{theorem}{Theorem}
\newtheorem{corollary}{Corollary}
\newtheorem{lemma}{Lemma}

\theoremstyle{definition}
\newtheorem{definition}{Definition}
\newtheorem{assumption}{Assumption}

\crefname{equation}{Eq.}{Eqs.}
\crefname{assumption}{Assumption}{Assumptions}
\crefname{definition}{Definition}{Definitions}
\crefname{theorem}{Theorem}{Theorems}
\crefname{lemma}{Lemma}{Lemmas}
\crefname{remark}{Remark}{Remarks}
\crefname{corollary}{Corollary}{Corollaries}

\usepackage[toc,page,header]{appendix}
\usepackage{titletoc}

\usepackage{comment}

\providecommand{\ind}{\mathds{1}}
\providecommand{\E}{\mathbb{E}}

\providecommand{\Cov}{\mathrm{Cov}}
\newcommand{\ceil}[1]{\left\lceil #1 \right\rceil}
\newcommand{\floor}[1]{\left\lfloor #1 \right\rfloor}
\newcommand{\norm}[1]{\left\lVert #1 \right\rVert}
\newcommand{\abs}[1]{\left| #1 \right|}

\providecommand{\KL}[2]{\mathrm{KL}( #1 || #2 )}
\providecommand{\TV}[2]{\mathrm{TV}( #1 , #2 )}
\newcommand{\brc}[1]{\left( #1 \right)}
\newcommand{\sbrc}[1]{\left[ #1 \right]}
\newcommand{\cbrc}[1]{\left\{ #1 \right\}}
\providecommand{\eps}{\varepsilon}
\providecommand{\T}{\intercal}
\renewcommand{\d}{\mathrm{d}}
\providecommand{\rmW}{\mathrm{W}}
\providecommand{\Tr}{\mathrm{Tr}}
\providecommand{\mbR}{\mathbb{R}}
\providecommand{\mbN}{\mathbb{N}}
\providecommand{\calN}{\mathcal{N}}
\providecommand{\calL}{\mathcal{L}}
\providecommand{\calB}{\mathcal{B}}

\providecommand{\calO}{\mathcal{O}}
\providecommand{\calZ}{\mathcal{Z}}
\newcommand{\const}{\mathrm{const}}

\newcommand{\highlight}[1]{\textcolor{blue}{#1}}

\DeclareMathOperator*{\argmin}{arg\,min}

\providecommand{\yuchen}[1]{#1}
\let\citep\cite

\title{Broadening Target Distributions for Accelerated Diffusion Models via a Novel Analysis Approach}

%


\author{Yuchen Liang$^\dagger$, \quad Peizhong Ju$^\ddagger$, \quad Yingbin Liang$^\dagger$, \quad Ness Shroff$^\dagger$ \\
$^\dagger$The Ohio State University \qquad $^\ddagger$University of Kentucky
}

\begin{document}

\maketitle

\begin{abstract}
Accelerated diffusion models hold the potential to significantly enhance the efficiency of standard diffusion processes. Theoretically, these models have been shown to achieve faster convergence rates than the standard $\mathcal O(1/\epsilon^2)$ rate of vanilla diffusion models, where $\epsilon$ denotes the target accuracy. However, current theoretical studies have established the acceleration advantage only for restrictive target distribution classes, such as those with smoothness conditions imposed along the entire sampling path or with bounded support. In this work, we significantly broaden the target distribution classes with a new accelerated stochastic DDPM sampler. In particular, we show that it achieves accelerated performance for three broad distribution classes not considered before. Our first class relies on the smoothness condition posed only to the target density $q_0$, which is far more relaxed than the existing smoothness conditions posed to all $q_t$ along the entire sampling path. Our second class requires only a finite second moment condition, allowing for a much wider class of target distributions than the existing finite-support condition. Our third class is Gaussian mixture, for which our result establishes the first acceleration guarantee. Moreover, among accelerated DDPM type samplers, our results specialized for bounded-support distributions show an improved dependency on the data dimension $d$. Our analysis introduces a novel technique for establishing performance guarantees via constructing a tilting factor representation of the convergence error and utilizing Tweedie's formula to handle Taylor expansion terms. This new analytical framework may be of independent interest.
\end{abstract}

\section{Introduction}
\label{sec:intro}

Generative modeling is a fundamental task in machine learning, aiming to generate samples out of a distribution similar to that of training data. Classical generative models include variational autoencoders (VAE) \citep{kingma2013vae}, generative adversarial networks (GANs) \citep{goodfellow2014gan}, and normalizing flows \cite{rezende2015normalizingflow}, etc. Recently, diffusion models \citep{sohldickstein2015,ho2020ddpm,song2019ncsn} have arisen as an appealing generative model and have received wide popularity due to their excellent performance over a variety of tasks
and applications as summarized in many surveys of diffusion models \citep{diffusion-survey-song,diffusion-survey-croitoru,diffusion-survey-medical}. 

The empirical success of diffusion models has also inspired extensive theoretical studies, aiming to characterize the convergence guarantee for diffusion models. The convergence rate (i.e., the total number of steps to attain a target accuracy $\eps$) for standard vanilla Denoising Diffusion Probabilistic Models (DDPMs) has been established to be $\calO(\eps^{-2})$ for wide classes of target distributions \citep{chen2023improved,benton2023linear,conforti2023fisher} (see \Cref{app:intro-works} for a more complete summary). More recently, various {\bf accelerated} samplers have been proposed and been shown to achieve an improved convergence rate of $\calO(\eps^{-1})$. 
One such acceleration approach is to redesign the (stochastic) DDPM reverse process. This includes augmenting the original reverse process with an additional estimate \citep{li2023faster}, introducing intermediate sampling points along the generation path \citep{li2024accl-prov}, and employing special Markov-chain Monte-Carlo (MCMC) algorithms \citep{huang2024rev-trans-kern}.
Another acceleration method is to sample with the corresponding probability ODE \citep{li2023faster,chen2023probode,huang2024pfode,li2024sharpode}. 

However, existing results on the acceleration guarantee suffer from strong assumptions on the target distribution. (i) For smooth target distributions, the analyses of \cite{chen2023probode,huang2024pfode,huang2024rev-trans-kern} require that all the scores (or their close estimates or both) satisfy certain Lipschitz-smooth condition \textit{along the entire sampling path}, i.e., the smoothness condition is posed to the density $q_t$ for all iteration time $t$. However, such smoothness at intermediate steps is generally restrictive and hard to verify in practice. (ii) For (possibly) non-smooth targets, the analysis of \cite{li2024accl-prov,li2023faster,li2024sharpode} requires the distribution to have finite support for early-stopped sampling procedures. Such an assumption is, however, restrictive if compared to that for early-stopped vanilla samplers, where convergence guarantees have been established only under the assumption of finite variance \citep{chen2023improved,benton2023linear}. The above discussions raise the following important open question:

{\em Question 1: Can we obtain an accelerated convergence rate for a much broader set of target distributions? Namely, for smooth target distributions, can the smoothness condition be imposed only on the target distribution; and for (possibly) non-smooth targets, can we broaden the target distribution to only have finite variance? 
}

Further, the existing accelerated diffusion samplers suffer as high dimensional dependencies as $\calO\brc{d^3}$ or $\calO\brc{d^2}$ \citep{li2024accl-prov,li2023faster} for target distributions with bounded support. This motivates us to explore the following intriguing question:

{\em Question 2: While addressing Question 1 to relax the assumption from finite support to finite variance for possibly non-smooth distributions, can we achieve a lower dimensional dependency?} 

This paper will provide affirmative answers to both of the above questions.

\begin{table}
    \crefname{theorem}{Thm}{Thms}
    \centering
    \begin{tabular}{p{0.24\linewidth}|c|c|c}
        \textbf{Target distribution $Q_0$} & \textbf{Method} & \textbf{Num of steps} & \textbf{Results} \\ \hline
        $\nabla \log q_t, s_t$ $L$-Lips. $\forall t$ & ODE-based & {$\calO\brc{\frac{\sqrt{d} L^2}{\eps}}$} & {\cite[Thm~3]{chen2023probode}} \\ \hline
        $\nabla \log q_t$ $L$-Lips. $\forall t$ & DDPM accl. & {$\calO\brc{\frac{\sqrt{d} L^2}{\eps}}$} & {\cite[Thm~4.4]{huang2024rev-trans-kern}$^{\dagger}$} \\ \hline
        $\abs{\partial^k_{\bm{a}} s_t(x)} \leq L$ $\forall x,t,\bm{a}$ and $\forall k \leq p+1$, $Q_0$ Bounded Support & ODE & {$\calO\brc{\frac{d^{\frac{p+1}{p}}}{\eps^{\frac{1}{p}}}}^{*}$} & \cite[Thm~3.10]{huang2024pfode}$^{\dagger}$ \\ \hline
        \highlight{$\nabla^2 \log q_0$ $M$-Lips.} & \highlight{DDPM accl.} & \highlight{$\calO\brc{\frac{d^{1.5} \log^{1.5}M}{\eps}}$} & \highlight{(This paper, \cref{cor:accl_lip_q0_hess})} \\ \hline \hline
        \highlight{$Q_0$ Gaussian Mixture} & \highlight{DDPM accl.} & \highlight{$\calO\brc{\frac{d^{1.5} N^{1.5}}{\eps}}$} & \highlight{(This paper, \cref{cor:accl_gauss_mix})} \\ \hline
        \hline
        \multirow{4}{*}{$Q_0$ Bounded Support} & DDPM accl. & {$\calO\brc{\frac{d^3}{\eps}}^{*}$} & \makecell{\cite[Thm~4]{li2023faster} \\ 
        \cite[Thm~2]{li2024accl-prov}$^{\dagger}$} \\ \cline{2-4}
        & ODE & {$\calO\brc{\frac{d^3}{\sqrt{\eps}}}^{*}$} & \makecell{\cite[Thm~2]{li2023faster} \\ 
        \cite[Thm~1]{li2024accl-prov}$^{\dagger}$} \\ \cline{2-4}
        & ODE & {$\calO\brc{\frac{d^2}{\eps}}^{*}$} & \cite[Thm~1]{li2023faster} 
        \\ \hline
        \highlight{$Q_0$ Finite Variance} & \highlight{DDPM accl.} & \highlight{$\calO\brc{\frac{d^{1.5}}{\eps}}^{*}$} & \highlight{(This paper, \cref{cor:accl_bdd_supp})}
    \end{tabular}
    \caption{\em Summary of accelerated convergence results in terms of the number of steps needed to achieve $\eps$-accuracy in total variation, where $d$ is the dimension.
    For Gaussian mixture, assume that $N \leq d$.
    The first 4 rows of this table correspond to the results under those target distributions with some smoothness conditions imposed, while the last 4 rows correspond to the results under (possibly) non-smooth targets with finite variance.
    $(*)$ Those results correspond to an {\em early-stopped} procedure that compares the sampling distribution to $Q_1(\delta)$, where 
    $\rmW_2\brc{Q_0,Q_1}^2 \lesssim \delta d$. Here the dependencies on $\delta$ are omitted.
    $(\dagger)$ Those studies are concurrent to our work based on the time that they were posted on arXiv.
    Note that this table does not include the studies within two months of the conference submission, but those are discussed in the related works.
    }
    \label{tab:literature}
    \vspace{-6mm}
\end{table}

\subsection{Our Contributions}

Our main contribution is to provide accelerated convergence results for a significantly wider range of distributions than those addressed in previous works (see \Cref{tab:literature} (particularly column 1) for a comparison). To this end, we design a new accelerated stochastic DDPM sampler and develop a novel analytical technique that characterizes its acceleration guarantees across this broader spectrum of distributions.
Our detailed contributions are summarized as follows.


\textbf{Broadening Target Distributions}: Inspired by optimization methods, we design a new Hessian-based accelerated sampler for the stochastic diffusion processes. 
We show that our accelerated sampler achieves an accelerated convergence rate of $\calO\brc{d^{1.5} \min\{d,N\}^{1.5} / \eps}$, $\calO\brc{d^{1.5} / \eps}$, and $\calO\brc{d^{1.5} \log^{1.5} M / \eps}$ respectively for Gaussian mixtures, any target distributions having finite variance (with early-stopping), and any target distributions having $M$-Lipschitz Hessian of log-densities. In particular, (i) for smoothness $Q_0$ that has p.d.f., the smoothness condition is only imposed on the log-density of $Q_0$, which is much less restrictive than that imposed on all $Q_t$'s \citep{chen2023probode,huang2024pfode,huang2024rev-trans-kern}; (ii) for possibly non-smooth $Q_0$, 
we only require $Q_0$ to have finite variance for the early-stopped procedure, which is a much broader class of distributions than those having bounded support \citep{li2024accl-prov,li2023faster,li2024sharpode}; (iii) we provide the first accelerated convergence result for Gaussian mixture $Q_0$'s.\footnote{Although the technique in \cite{huang2024pfode} may be applied to Gaussian mixtures, the authors do not provide explicit dependencies in their paper. Also, \cite{huang2024pfode} is posted on arXiv after our first draft.}

For possibly non-smooth targets with bounded support, our sampler improves the dependency of the convergence rate on $d$ by $\calO\brc{d^{1.5}}$ compared with previous accelerated diffusion samplers \citep{li2024accl-prov,li2023faster}.


\textbf{Novel Analysis Technique}: We develop a novel technique for analyzing the accelerated DDPM process. Our approach features two new elements: (i) characterization of the error incurred at each discrete step of the reverse process using {\em tilting factor}; and (ii) analysis of the mean value of tilting factor via {\em Tweedie's formula} to handle power terms in the Taylor expansion. Such a technique enables us to (a) analyze more general distributions beyond those with restrictive distribution assumptions; (b) tightly identify the dominant term and reduce the dimensional dependency; 
and (c) handle the estimation error in accelerated samplers for both score and Hessian estimation. This analytical framework is different from the main previous theoretical techniques for analyzing the convergence of diffusion models: (a) the SDE-type analysis for regular diffusion samplers \citep{chen2023improved,benton2023linear,conforti2023fisher}, (b) any ODE-type analysis \citep{li2024sharpode,huang2024pfode,gao2024wass}, and (c) the use of typical sets \citep{li2024accl-prov,li2023faster}. 

\subsection{Related Works on Accelerated Sampling} \label{subsec:related_works}

Here, we focus on the related studies of accelerated samplers. Note that all of these works we discuss below, only except \cite{chen2023probode,chen2023ddim,li2023faster}, are concurrent to or after ours based on their posting time on arXiv.
In \Cref{app:intro-works}, we provide a thorough summary of convergence analysis of standard samplers as well as other theoretical perspectives of diffusion models.

\textbf{Accelerated Stochastic Samplers:}
In \cite{li2023faster}, accelerated stochastic variants to the original DDPM sampler are proposed and analyzed, \textit{when there is no estimation error}. In \cite{li2024accl-prov}, a new accelerated stochastic sampler are proposed by inserting intermediate sampling points along the diffusion path. Both algorithms are analyzed only when the target distribution has bounded support and suffer from large dimensional dependencies. In \cite{huang2024rev-trans-kern}, the authors proposed the RTK-MALA and RTK-ULD algorithms which uses MCMC algorithms, such as the Metropolis-adjusted Langevin Algorithm or the Underdamped Langevin Dynamics, at each diffusion step. The analysis is performed under the assumption that all the scores of $\log q_t$'s are Lipschitz-smooth. In comparison, our work substantially broadens the set of target distributions to include those with unbounded support and with smooth log-density only imposed upon $Q_0$ with a completely different analytical technique. Our result also improves the dimensional dependencies of accelerated stochastic samplers in \cite{li2024accl-prov,li2023faster} for distributions with bounded support.

\textbf{Deterministic Samplers:}
Beyond stochastic samplers, another line of research to achieve an accelerated convergence rate is to sample from the corresponding probability flow ordinary differential equation (PF-ODE). 
Early work provided polynomial guarantees under rather restrictive Lipschitz conditions \cite{chen2023ddim}. Later in \cite{chen2023probode}, an accelerated convergence rate was first derived with the DPUM sampler by mixing the deterministic predictor steps with stochastic corrector steps. The analysis was performed under the assumption of Lipschitz $\nabla \log q_t$'s and $s_t$'s. Note that this assumption is relatively restrictive and hard to verify in practice.
After that, for target distributions having bounded support, \cite{li2023faster} provided the first analysis of a purely deterministic sampler (along with an accelerated deterministic sampler), albeit with a high dimensional dependency. Recently, under strong assumptions on $s_t$'s, \cite{huang2024pfode} provided an accelerated rate using the $p$-th order Runge-Kutta time integrator for ODEs for those target distributions having bounded support. Specifically, for first-order Runge-Kutta methods, it is assumed that the first two orders of partial derivatives of $s_t$'s are uniformly bounded in space and time, which implies Lipschitz-smoothness of $s_t$ and its derivative along the entire sampling path.
Most recently, \cite{li2024sharpode} obtained a linear convergence rate both in $d$ and $\eps^{-1}$ using PF-ODEs as long as $s_t$'s (and their derivatives) are well estimated. However, it is analyzed only on bounded-support targets.
Beyond these works, further acceleration to deterministic samplers is sought in \cite{li2024accl-prov,li2023faster} that gives the convergence rate of $\calO(\eps^{-1/2})$, which are still performed under bounded-support targets. In comparison, our work substantially broadens the target distributions to include those with unbounded support (yet with finite variance) while achieving an accelerated convergence rate.

\section{Preliminaries of DDPM}
\label{sec:prelim}


In this section, we provide the background of the DDPM sampler \citep{ho2020ddpm}.

\subsection{Forward Process}

Let $x_0 \in \mbR^d$ be the initial data, and let $x_t \in \mbR^d, \yuchen{t \in \{1,\dots,T\}}$ be the latent variables in the diffusion algorithm. 
Let $Q_0$ be the initial data distribution, and let $Q_t$ be the marginal latent distribution at time $t$ in the forward process, for all $ 1 \leq t \leq T$. 
In the forward process, white Gaussian noise is gradually added to the data: $x_t = \sqrt{1-\beta_t} x_{t-1} + \sqrt{\beta_t} w_t,~\yuchen{\forall t \in \{1,\dots, T\} }$,
where $w_t \stackrel{i.i.d.}{\sim} \calN(0,I_d)$. Equivalently, this can be expressed as a conditional distribution at each time $t$:
\begin{equation} \label{eq:def_forward_proc}
    Q_{t|t-1}(x_t|x_{t-1}) = \calN(x_t; \sqrt{1-\beta_t} x_{t-1}, \beta_t I_d),
\end{equation}
which means that under $Q$, $X_0 \to X_1 \to \dots \to X_T$. Here $\beta_t \in (0,1)$ captures the ``amount'' of noise that is injected at time $t$, and $\beta_t$'s are called the \textit{noise schedule}. Define
\[ \textstyle \alpha_t := 1-\beta_t, \quad \Bar{\alpha}_t := \prod_{i=1}^t \alpha_i, \quad 1 \leq t \leq T. \]
An immediate result by accumulating the steps is that
\begin{equation} \label{eq:forward_proc_agg}
    Q_{t|0}(x_t|x_0) = \calN(x_t; \sqrt{\Bar{\alpha}_t} x_{0}, (1-\Bar{\alpha}_t) I_d),
\end{equation}
or, written equivalently, $x_t = \sqrt{\Bar{\alpha}_t} x_0 + \sqrt{1-\Bar{\alpha}_t} \Bar{w}_t,~\yuchen{\forall t \in \{1,\dots, T\} }$,
where $\Bar{w}_t \sim \calN(0,I_d)$ denotes the \textit{aggregated} noise at time $t$.
Intuitively, for large $T$, since $Q_{T|0} \approx \calN(0,I_d)$ (which is independent of $x_0$), it is expected that $Q_{T} \approx \calN(0,I_d)$ when $T$ becomes large, as long as the variance under $Q_0$ is finite.
Finally, since the conditional noises are Gaussian, each $Q_t (t \geq 1)$ is absolutely continuous w.r.t the Lebesgue measure. Let the corresponding p.d.f. of each $Q_t$ be $q_t (t \geq 1)$. Similarly define $q_{t,t-1}$, $q_{t|t-1}$, and $q_{t-1|t}$ for $t \geq 1$. In case $Q_0$ is also absolutely continuous w.r.t. the Lebesgue measure, let $q_0$ be the corresponding p.d.f. of $Q_0$.

\subsection{Regular Reverse Process}
\label{sec:reg_rev_proc}

The goal of the reverse sampling process is to generate samples approximately from the data distribution $Q_0$. We first draw the latent variable at time $T$ from a Gaussian distribution: $x_T \sim \calN(0,I_d) =: P_T$.
Then, to achieve effective sampling, each forward step is approximated by a reverse sampling step, in which the \textit{mean} matches the posterior mean of $Q_{t-1|t}$. Define
\begin{align}\label{eq:def_mu_t}
\textstyle \mu_t(x_t):= \frac{1}{\sqrt{\alpha_t}} \brc{ x_t + (1-\alpha_t) \nabla \log q_t(x_t) }.
\end{align}
Here $\nabla \log q_t(x)$ is called the \textit{score} of $q_t$, which can be estimated via a training process called score matching.
At each time $t = T,T-1,\dots,1$, the \textit{true} regular reverse process is defined as $x_{t-1} = \mu_t(x_t) + \sigma_t z$,
where $z \sim \calN(0,I_d)$. 
Two choices of $\sigma_t^2$ are commonly used in practice, where $\sigma_t^2 = 1-\alpha_t$ or $\sigma_t^2 = \frac{1-\Bar{\alpha}_{t-1}}{1-\Bar{\alpha}_t} (1-\alpha_t)$, and similar results are reported for these choices \citep{ho2020ddpm}.
Let $P_t$ be the marginal distributions of $x_t$ in the true regular reverse process, and let $p_t$ be the corresponding p.d.f. of $P_t$ w.r.t. the Lebesgue measure. 

\subsection{Metrics}

\yuchen{In case where $Q$ is absolutely continuous w.r.t. the Lebesgue measure, we are interested in measuring the mismatch between $Q$ and $P$ through the total-variation distance, defined as
\[ \textstyle \TV{Q}{P} := \sup_{A \subseteq \calB(\mbR^d)} \abs{Q(A)-P(A)} \]
where $\calB(\mbR^d)$ contains all Borel-measureable sets in $\mbR^d$.
This metric is commonly used in prior theoretical studies \citep{chen2023improved}. From Pinsker's inequality, the total-variation (TV) distance is upper bounded as $\TV{Q}{P}^2$ $\leq \frac{1}{2} \KL{Q}{P}$,
where the KL divergence is defined as $\KL{Q}{P} := \int \log \frac{\d Q}{\d P} \d Q \geq 0$.
Thus, we control the KL divergence when $Q$ is absolutely continuous w.r.t. $P$.}

When $q_0$ does not exist (say, when $Q_0$ has point masses), we use the Wasserstein distance to measure the mismatch at $t = 0$, namely $\rmW_2(Q_0,Q_1)$, which is a technique commonly adopted \citep{chen2023improved,benton2023linear}. The Wasserstein-2 distance is defined as $\rmW_2(Q_0,Q_1) :=\sqrt{\min_{\Gamma\in\Pi(Q_0,Q_1)} \int_{\mbR^d \times \mbR^d} \norm{x-y}^2 \d \Gamma (x, y) }$, where $\Pi(Q_0,Q_1)$ is the set of all joint probability measures on $\mbR^d \times \mbR^d$ with marginal distributions $Q_0$ and $Q_1$, respectively.

\section{Accelerated Diffusion Sampler}

To generate samples from the data distribution $Q_0$, the idea of DDPM is to design a reverse process in which each reverse sampling step well approximates the corresponding forward step. 
Below, we propose a new {\bf accelerated} sampler along with a new variance estimator, in which both the conditional \textit{mean and variance} of the reverse process match the corresponding posterior quantities. 

\subsection{Accelerated Reverse Process}


At each time $t = T,T-1,\dots,1$, define the true \textit{accelerated} reverse process as $x_{t-1} = \mu_t(x_t) + \Sigma_t^{\frac{1}{2}}(x_t) z$,
where $\mu_t$ is defined in \eqref{eq:def_mu_t}, $z \sim \calN(0,I_d)$, and (cf. \cref{lem:tweedie})
\begin{equation} \label{eq:def_accl_Sigma_t}
\textstyle     \Sigma_t(x_t) := \frac{1-\alpha_t}{\alpha_t} \brc{I_d + (1-\alpha_t) \nabla^2 \log q_t(x_t)}.
\end{equation}
Let $P'_t$ be the marginal distributions of $x_t$ in the true accelerated reverse process, and let $p'_t$ be the corresponding p.d.f.. Thus, the transition kernel can be written as $P'_{t-1|t} = \calN(x_{t-1};\mu_t(x_t), \Sigma_t(x_t))$, and we let $P'_T := P_T = \calN(0,I_d)$.
When $(1-\alpha_t)$ is vanishing for large $T$, $\Sigma_t(x_t) \succ 0$ for all large $T$'s, and thus the conditional Gaussian process is well-defined.\footnote{\yuchen{More rigorously, we can project the matrices $\Sigma_t$ and $\widehat{\Sigma}_t$ onto the space of positive-semi definite (PSD) matrices for those $x_t$’s where either of these two matrices is not PSD. Since the probability of the events containing such bad $x_t$’s decreases to zero asymptotically, all theoretical results in this paper, which are derived in expectation, will not be affected.}}
The above accelerated sampler has a close relationship to Ozaki's discretization method to approximate a continuous-time stochastic process \yuchen{\citep{ozaki1992disc,shoji1998ozaki-disc,stramer1999langevin}}.

In practice, one has no access to either $\nabla \log q_t$ or $\nabla^2 \log q_t$. Thus, their \yuchen{estimates}, denoted as $s_t$ and $H_t$, are used. Define the \textit{estimated} accelerated reverse process: $x_{t-1} = \widehat{\mu}_t(x_t) + \widehat{\Sigma}_t^{\frac{1}{2}}(x_t) z$, where
\begin{align}
    \widehat{\mu}_t(x_t) &:= x_t + (1-\alpha_t) s_t(x_t), \label{eq:def_mu_hat_t} \\
    \widehat{\Sigma}_t(x_t) & := \textstyle  \frac{1-\alpha_t}{\alpha_t} \brc{I_d + (1-\alpha_t) H_t(x_t)}. \label{eq:def_accl_Sigma_hat_t}
\end{align}
Here, $s_t$ can be obtained through score-matching \citep{song2019ncsn}.
In \Cref{app:accl-hess-est}, we propose an estimator for $\nabla^2 \log q_t$, which we \yuchen{refer to} as Hessian matching. 
\yuchen{Let $\widehat{P}'_t$ be the marginal distributions of $x_t$ in the estimated reverse process with corresponding p.d.f. $\widehat{p}'_t$.}

\subsection{Hessian Matching Estimator for Acceleration}
\label{app:accl-hess-est}


Below we provide a method to obtain $H_t(x)$, which estimates $\nabla^2 \log q_t(x)$. Note that
\begin{align} \label{eq:accl-hess-log-qt}
    \nabla^2 \log q_t(x) &= \textstyle \frac{\nabla^2 q_t(x)}{q_t(x)} - (\nabla \log q_t(x)) (\nabla \log q_t(x))^\T \nonumber\\
    &= \textstyle \brc{ \frac{\nabla^2 q_t(x)}{q_t(x)} + \frac{1}{1-\Bar{\alpha}_t} I_d} - \frac{1}{1-\Bar{\alpha}_t} I_d - (\nabla \log q_t(x)) (\nabla \log q_t(x))^\T.
\end{align}
Apart from the original score estimate, we require an additional Hessian estimate:
\begin{align*} \label{eq:accl-hess-term1-est}
    v_t(x) &:= \textstyle \argmin_{v_\theta: \mbR^d \to \mbR^{d \times d}} \E_{X_t \sim Q_t} \norm{ v_\theta(X_t) - \brc{\frac{\nabla^2 q_t(X_t)}{q_t(X_t)} + \frac{1}{1-\Bar{\alpha}_t} I_d} }_F^2.
\end{align*}
\yuchen{In order to train for $v_t$, the following lemma provides an analogy to score matching, which we refer to as \textit{Hessian matching}.}
\begin{lemma} \label{lem:accl-hess-match}
With the forward process in \eqref{eq:def_forward_proc}, we have
\begin{multline*}
    \textstyle \argmin_{v_\theta: \mbR^d \to \mbR^{d \times d}} \E_{X_t \sim Q_t} \norm{ v_\theta(X_t) - \brc{\frac{\nabla^2 q_t(X_t)}{q_t(X_t)} + \frac{1}{1-\Bar{\alpha}_t} I_d} }_F^2 \\
    \textstyle = \argmin_{v_\theta: \mbR^d \to \mbR^{d \times d}} \E_{\yuchen{(X_0, \Bar{W}_t) \sim Q_0 \otimes \calN(0,I_d)}} \norm{ v_\theta(\sqrt{\Bar{\alpha}_t} X_0 + \sqrt{1-\Bar{\alpha}_t} \Bar{W}_t) - \frac{1}{1-\Bar{\alpha}_t} \Bar{W}_t \Bar{W}_t^\T }_F^2.
\end{multline*}
\end{lemma}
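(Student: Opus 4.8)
The plan is to establish the claimed identity by showing that both minimization problems have the \emph{same objective function} up to an additive constant that does not depend on $v_\theta$, so that their minimizers coincide. First I would rewrite the expectation on the left-hand side by conditioning: since $X_t = \sqrt{\bar\alpha_t} X_0 + \sqrt{1-\bar\alpha_t}\,\bar W_t$ under the forward process \eqref{eq:forward_proc_agg}, the outer expectation $\E_{X_t \sim Q_t}$ equals $\E_{X_0 \sim Q_0}\E_{\bar W_t \sim \calN(0,I_d)}$. The key reduction is then to identify the quantity $\frac{\nabla^2 q_t(x)}{q_t(x)} + \frac{1}{1-\bar\alpha_t} I_d$ as a conditional expectation of $\frac{1}{1-\bar\alpha_t}\bar W_t \bar W_t^\T$ given $X_t = x$. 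Concretely, because $q_t(x) = \int q_0(x_0)\, g_t(x \mid x_0)\, \d x_0$ with $g_t(x\mid x_0) = \calN(x; \sqrt{\bar\alpha_t} x_0, (1-\bar\alpha_t) I_d)$, one computes $\nabla^2_x g_t(x\mid x_0) = g_t(x\mid x_0)\big[\frac{(x-\sqrt{\bar\alpha_t}x_0)(x-\sqrt{\bar\alpha_t}x_0)^\T}{(1-\bar\alpha_t)^2} - \frac{1}{1-\bar\alpha_t}I_d\big]$, and dividing by $q_t(x)$ and using $x - \sqrt{\bar\alpha_t}x_0 = \sqrt{1-\bar\alpha_t}\,\bar w_t$ along the forward path yields exactly $\frac{\nabla^2 q_t(x)}{q_t(x)} + \frac{1}{1-\bar\alpha_t}I_d = \E\big[\frac{1}{1-\bar\alpha_t}\bar W_t \bar W_t^\T \,\big|\, X_t = x\big]$.

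Given this conditional-expectation identity, the rest is the standard $L^2$-projection (bias–variance) argument. Write $Y := \frac{1}{1-\bar\alpha_t}\bar W_t \bar W_t^\T$ and $m(x) := \E[Y \mid X_t = x]$, which by the above equals the target $\frac{\nabla^2 q_t(X_t)}{q_t(X_t)} + \frac{1}{1-\bar\alpha_t}I_d$ when $x = X_t$. Then for any candidate $v_\theta$,
\[
\E\norm{v_\theta(X_t) - Y}_F^2 = \E\norm{v_\theta(X_t) - m(X_t)}_F^2 + \E\norm{m(X_t) - Y}_F^2,
\]
because the cross term $\E\big[\langle v_\theta(X_t) - m(X_t),\, m(X_t) - Y\rangle_F\big]$ vanishes after conditioning on $X_t$ (the inner factor $v_\theta(X_t) - m(X_t)$ is $X_t$-measurable and $\E[m(X_t) - Y \mid X_t] = 0$). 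The second term on the right is a constant independent of $v_\theta$, and the first term is precisely $\E_{X_t\sim Q_t}\norm{v_\theta(X_t) - (\frac{\nabla^2 q_t(X_t)}{q_t(X_t)} + \frac{1}{1-\bar\alpha_t}I_d)}_F^2$, the objective of the left-hand minimization. Since the objective of the right-hand minimization is $\E\norm{v_\theta(X_t) - Y}_F^2$ rewritten via the conditioning $\E_{X_t} = \E_{X_0,\bar W_t}$, the two objectives differ by the constant $\E\norm{m(X_t)-Y}_F^2$, so the $\argmin$ sets are identical.

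The main obstacle — really the only non-routine point — is justifying the differentiation under the integral sign needed to write $\nabla^2 q_t(x) = \int q_0(x_0)\,\nabla^2_x g_t(x\mid x_0)\,\d x_0$ and the associated interchange of $\E_{X_0}$ with the Hessian; this requires a dominated-convergence / locally-uniform-integrability argument exploiting the Gaussian tails of $g_t(\cdot \mid x_0)$ together with \cref{ass:m2} (finite second moment of $Q_0$), which guarantees the relevant integrals converge. Everything else — the explicit Hessian of a Gaussian density, the substitution $x - \sqrt{\bar\alpha_t}x_0 = \sqrt{1-\bar\alpha_t}\,\bar w_t$, and the orthogonality of the $L^2$ projection — is direct computation. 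I would also note in passing that $q_t$ is smooth for $t \ge 1$ (the forward kernel is Gaussian), so no extra regularity hypothesis on $Q_0$ is needed for the left-hand side to be well-defined.
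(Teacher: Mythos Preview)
Your proposal is correct and follows essentially the same route as the paper: both arguments show the two objectives differ by an additive constant independent of $v_\theta$, with the key step being the Gaussian-Hessian computation $\nabla^2_x g_t(x\mid x_0) = g_t(x\mid x_0)\big[\tfrac{(x-\sqrt{\bar\alpha_t}x_0)(x-\sqrt{\bar\alpha_t}x_0)^\T}{(1-\bar\alpha_t)^2} - \tfrac{1}{1-\bar\alpha_t}I_d\big]$. Your framing via the conditional-expectation identity and the $L^2$ Pythagorean decomposition is a slightly cleaner packaging of what the paper does by entry-wise expansion of the cross term, but the content is the same.
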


\yuchen{With the Hessian estimate $v_t$ using \Cref{lem:accl-hess-match}}, from \eqref{eq:accl-hess-log-qt}, an estimate for $\nabla^2 \log q_t(x)$ is given by
\begin{equation} \label{eq:accl-hess-est}
    \textstyle H_t(x) = v_t(x) - \frac{1}{1-\Bar{\alpha}_t} I_d - s_t(x) s_t^\T(x).
\end{equation}
With the estimator of $H_t$ in \eqref{eq:accl-hess-est}, the Hessian-based sampler using the $\widehat{\Tilde{\Sigma}}_t$ later in \eqref{eq:def_accl_Sigma_tilde_t} is the same as the accelerated stochastic sampler in \cite{li2023faster}.
Yet, our analysis is applicable when estimation errors exist, whereas in \cite{li2023faster} the estimators are assumed to be perfect for the accelerated sampler.
In the literature, several other estimators have been proposed for higher order derivatives of $\log q_t(x)$ \citep{meng2021high-order,lu2022mltrain-ode-high-order,dockhorn2022genie}.
In our paper, we proposed another method, the Hessian matching method, which can guarantee accurate Hessian estimations with extra computation resources.
Yet, our analysis can be applied to any estimator for $H_t$ as long as \cref{ass:accl-score-hess} is satisfied.


\section{Accelerated Convergence Bounds for Broader Targets}

In this section, we provide convergence guarantees for the accelerated stochastic samplers for general $Q_0$. We will first establish our main result for smooth $Q_0$, and then extend it for more general (possibly non-smooth) $Q_0$. We will also provide a sketch of proof to describe key analysis techniques. 

\subsection{Technical Assumptions for Accelerated Sampler}
\label{sec:accl-smooth-tech-ass}


\yuchen{We first provide the following four technical assumptions for the accelerated sampler.}

\begin{assumption}[Finite Second Moment] \label{ass:m2}
    There exists a constant $M_2 < \infty$ (that does not depend on $d$ and $T$) such that $\E_{X_0 \sim Q_0} \norm{X_0}^2 \leq M_2 d$.
\end{assumption}

\begin{assumption}[Absolute Continuity] \label{ass:cont}
    $Q_0$ is absolutely continuous w.r.t. the Lebesgue measure, and thus $q_0$ exists. Also, suppose that $q_0$ is analytic \footnote{\yuchen{Here a function is analytic if its Taylor series converges to the functional value at each point in the domain.}} \yuchen{and that $q_0(x) > 0$.}
\end{assumption}

\yuchen{The above \cref{ass:m2,ass:cont} are commonly adopted in the literature \citep{chen2023improved,chen2023sampling}.}

\begin{assumption}[Score and Hessian Estimation Error] 
\label{ass:accl-score-hess}
    The estimates $s_t$'s and $H_t$'s satisfy
    \begin{align*}
        \textstyle \frac{1}{T} \sum_{t=1}^T \E_{X_t \sim Q_t} \norm{s_t(X_t) - \nabla \log q_t(X_t)}^2 &\leq \eps^2 = \Tilde{O}(T^{-2}),\\
        \textstyle \frac{1}{T} \sum_{t=1}^T \E_{X_t \sim Q_t} \norm{H_t(X_t)-\nabla^2 \log q_t(X_t)}^2_F &\leq \eps_H^2 = \Tilde{O}(T^{-1}).
    \end{align*}
    Also, suppose that $H_t$ satisfies \yuchen{$\sup_{\ell \geq 1} \brc{\E_{X_t \sim Q_t} \norm{H_t(X_t)}^\ell}^{1/\ell} = \Tilde{O}(1)$}.
\end{assumption}

\yuchen{The above assumption (\Cref{ass:accl-score-hess}) describes the estimation error for both the score and Hessian. In particular, compared with regular samplers, the score function needs to be estimated at a higher accuracy in order to achieve acceleration. Such higher accuracy is also required in previous analyses of ODE samplers (e.g., \cite{li2024accl-prov,li2024sharpode}). 
The regularity condition on $H_t$ can be satisfied, for example, when $\norm{H_t}$ is bounded as $\Tilde{O}(1)$.
As another example, it suffices that $\norm{H_t(x)}$ has a polynomial upper bound in $x$ when $Q_t$ is sub-exponential. 
\yuchen{In \Cref{lem:accl-hess-match-score-hess} (in \Cref{app:hess-est}), we provide sufficient conditions such that the $H_t$ in \eqref{eq:accl-hess-est} \yuchen{satisfies} \cref{ass:accl-score-hess}.}}

\begin{assumption}[Regular Partial Derivatives] \label{ass:regular-drv}
For all $t \geq 1$, $\ell \geq 1$, and $\bm{a} \in [d]^p$ such that $\abs{\bm{a}} = p \geq 1$,
\yuchen{\[ \E_{X_t \sim Q_t} \abs{\partial_{\bm{a}}^p \log q_{t}(X_t) }^\ell = O\brc{1},\quad \E_{X_t \sim Q_t} \abs{\partial_{\bm{a}}^p \log q_{t-1}(\mu_t(X_t)) }^\ell = O\brc{1}. \]}
When $q_0$ does not exist, this is required only for $t \geq 2$.\footnote{In the Appendix, we have provided the more general \Cref{ass:regular-drv-plus} under which \Cref{thm:accl-main} would hold.}
\end{assumption}

The above regularity assumption (\cref{ass:regular-drv}) on the partial derivatives is needed for our analysis based on Taylor expansion. It is rather soft, and it can be verified on the following two common cases: (1) when $Q_0$ has finite variance, 
and (2) when $Q_0$ is Gaussian mixture (see \Cref{sec:specialq0_accl}).
Case 1 clearly covers a broad set of target distributions of practical interest, such as images, and many theoretical studies of diffusion models have been specially focused on such a distribution \citep{li2024accl-prov,li2023faster}. Case 2 has also been well studied for diffusion models \citep{chen2024gm-learning,gatmiry2024gm-learning}.

\subsection{Accelerated Convergence Bounds}

We first define a new noise schedule as follows, which will be useful for acceleration.

\begin{definition}[Noise Schedule for Acceleration] 
\label{def:accl-noise-smooth}
For large $T$'s, the step-size $\alpha_t$ satisfies that
\begin{equation*}
    \textstyle 1-\alpha_t \lesssim \frac{\log T}{T},~\yuchen{\forall t \in \{1,\dots, T\} },\quad \Bar{\alpha}_T = \prod_{t=1}^T \alpha_t = o\brc{T^{-2}}.
\end{equation*}
When $q_0$ does not exist, the upper bound on $1-\alpha_t$ is only required for $t \geq 2$.
\end{definition}
In \cref{def:accl-noise-smooth}, the upper bound on $1-\alpha_t$ requires that $\alpha_t$ is large enough to control the reverse-step error, while the upper bound on $\Bar{\alpha}_T$ requires that $\alpha_t$ is small enough to control the initialization error. 
An example of $\alpha_t$ that satisfies \cref{def:accl-noise-smooth} is the constant step-size: $1-\alpha_t \equiv \frac{c \log T}{T},~\forall t \geq 1$ with $c > 2$. Then, $\Bar{\alpha}_T = \brc{1 - \frac{c \log T}{T}}^{T} = \exp\brc{T \log \brc{1 - \frac{c \log T}{T} } } = O\brc{e^{T \frac{- c \log T}{T} } } = o\brc{ T^{-2} }$. Thus, such $\alpha_t$ satisfies \Cref{def:accl-noise-smooth}.

The following theorem provides the {\em first} convergence result for accelerated diffusion samplers for general smooth target distributions that have \textit{finite second moment} (along with some mild regularity conditions). The complete proof is given in \Cref{app:proof_thm_accl_main}.

\begin{theorem}[Accelerated Sampler for Smooth $Q_0$] \label{thm:accl-main}
Under \cref{ass:m2,ass:regular-drv,ass:accl-score-hess,ass:cont}, with the $\alpha_t$ satisfying \cref{def:accl-noise-smooth}, we have
\begin{align*}
     \KL{Q_0}{\widehat{P}'_0} \lesssim & \textstyle (\log T) \eps^2 + \frac{\log^2 T}{T} \eps_H^2 \\
    &\textstyle + \sum_{t=1}^T (1-\alpha_t)^3 \E_{X_t \sim Q_t} \sum_{i,j,k=1}^d \partial^3_{ijk} \log q_{t-1}(\mu_t(X_t)) \partial^3_{ijk} \log q_t(X_t).
\end{align*}
\end{theorem}

\cref{thm:accl-main} characterizes the convergence in terms of KL divergence (and thus TV distance) for smooth (possibly unbounded) $Q_0$. The bound in \cref{thm:accl-main} will be further instantiated with explicit dependency on system parameters for example distributions $Q_0$ in \Cref{sec:specialq0_accl}. To further explain the upper bound in \cref{thm:accl-main}, the first two terms arise from the score and Hessian estimation error, and the last term captures the errors accumulated during the reverse steps over $t = T,\dots,1$, which can be further bounded by $\Tilde{O}(T^{-2})$ under \cref{ass:regular-drv} (cf. \eqref{eq:accl-bdd-main}). Thus, when $\eps_H^2$ satisfies \cref{ass:accl-score-hess}, the upper bound in \cref{thm:accl-main} can be more explicitly characterized w.r.t.\ $T$ as $\KL{Q_0}{\widehat{P}_0} \lesssim \Tilde{O}(T^{-2}) + (\log T) \varepsilon^2$ (where the dependency on $d$ will be explicitly characterized for specific distributions in \Cref{sec:specialq0_accl}). Thus, in order to achieve $\calO(\eps^2)$ error in KL divergence, the number of steps required is $\calO(\eps^{-1})$. This improves the dependency of the convergence rate on $\eps$ of the regular sampler by a factor of $\mathcal O(\eps^{-1})$.


We next extend \cref{thm:accl-main} for smooth $Q_0$ to general $Q_0$ that can be possibly non-smooth and hence the density function $q_0$ does not exist. Such distributions occur often in practice; for example, when $Q_0$ has a discrete support such as for images, or when $Q_0$ is supported on a low-dimensional manifold. For non-smooth $Q_0$, its one-step perturbation $Q_1$ does have a p.d.f. $q_1$, which is further analytic (\cref{lem:finite_der_log_q}). This enables us to apply \cref{thm:accl-main} on $Q_1$ to obtain the following convergence bound. Also, we use the Wasserstein distance to measure the perturbation between $Q_0$ and $Q_1$ \citep{chen2023sampling,chen2023improved,lee2023general}.
\begin{corollary}[General (possibly non-smooth) $Q_0$] \label{thm:main_non_smooth_accl}
Under \cref{ass:m2,ass:accl-score-hess,ass:regular-drv}, if the noise schedule satisfies \cref{def:accl-noise-smooth} at $t \geq 2$, the distribution $\widehat{P}_1'$ satisfies
\begin{align*}
     \KL{Q_1}{\widehat{P}_1'}\lesssim &\textstyle(\log T) \eps^2 + \frac{\log^2 T}{T} \eps_H^2 \\
    &\textstyle  + \sum_{t=2}^T (1-\alpha_t)^3 \E_{X_t \sim Q_t} \sum_{i,j,k=1}^d \partial^3_{ijk} \log q_{t-1}(\mu_t(X_t)) \partial^3_{ijk} \log q_t(X_t) ,
\end{align*}
where $Q_1$ is such that $\rmW_2(Q_0,Q_1)^2 \lesssim (1-\alpha_1) d$.
\end{corollary}
In particular, \cref{thm:main_non_smooth_accl} applies to any general target distribution when the second moment is finite.

\subsection{Proof Sketch of \texorpdfstring{\cref{thm:accl-main}}{Theorem 1}}
\label{sec:thm1-proof-sketch}

We next provide a proof sketch of \cref{thm:accl-main} to describe the idea of our analysis approach.
The full proof is provided in \Cref{app:proof_thm_accl_main}. Our approach is very different from previous SDE-type approaches, which invoke Fokker-Planck equation to express the evolution of p.d.f. and use Girsanov's Theorem to bound the divergence, both along the {\em continuous} diffusion path. In comparison, we develop a novel Bayesian approach based on \yuchen{tilting} factor representation and Tweedie's formula to handle power terms, which is applicable to a much wider class of target distributions, including those having infinite support. In particular, compared with \cite{li2024accl-prov,li2023faster,li2024sharpode}, our approach does not assume that the target distribution has finite support.


To begin, we decompose the total error as
\begin{align*}
    &\KL{Q_0}{\widehat{P}'_0} \leq \underbrace{\textstyle \E_{X_T \sim Q_T} \sbrc{\log \frac{q_T(X_T)}{p'_T(X_T)}}}_{\text{initialization error}} \\
    &\quad + \underbrace{\textstyle \sum_{t=1}^T \E_{X_t,X_{t-1} \sim Q_{t,t-1}} \sbrc{\log \frac{p'_{t-1|t}(X_{t-1}|X_t)}{\widehat{p}'_{t-1|t}(X_{t-1}|X_t)}}}_{\text{estimation error}} + \underbrace{\textstyle \sum_{t=1}^T \E_{X_t,X_{t-1} \sim Q_{t,t-1}} \sbrc{\log \frac{q_{t-1|t}(X_{t-1}|X_t)}{p'_{t-1|t}(X_{t-1}|X_t)}}}_{\text{reverse-step error}}.
\end{align*}
The initialization error can be bounded easily (\cref{lem:init_err}). Below we focus on the \yuchen{remaining} two terms in five steps.

\textbf{Step 1: Bounding estimation error (\cref{lem:accl-score-est}).} At each time $t=1,\dots,T$, rather than upper-bounding via typical sets as in \cite{li2023faster}, we directly evaluate the expected value of $\log (p'_{t-1|t}(x_{t-1}|x_t) / \widehat{p}'_{t-1|t}(x_{t-1}|x_t))$. This is straightforward since $P'_{t-1|t}$ and $\widehat{P}'_{t-1|t}$ are Gaussian. We then use Taylor expansion for the $\log \det(\cdot)$ function and the matrix inverse to identify the dominant-order terms under the mismatched variance.

\textbf{Step 2: Tilting factor expression of log-likelihood ratio (\cref{lem:accl-rev-err-tilt-factor,lem:finite_der_log_q,eq:accl_zeta_prime_taylor_expansion}).} 
With Bayes' rule, we show that $q_{t-1|t}$ is an exponentially tilted form of $p'_{t-1|t}$ with tilting factor:
\begin{align*}
    &\zeta'_{t,t-1} = (\nabla \log q_{t-1}(\mu_t) - \sqrt{\alpha_t} \nabla \log q_t(x_t))^\T (x_{t-1}-\mu_t) \\
    & \textstyle + \frac{1}{2} (x_{t-1}-\mu_t)^\T \brc{\nabla^2 \log q_{t-1}(\mu_t) - \frac{\alpha_t}{1-\alpha_t} B_t(x_t)} (x_{t-1}-\mu_t) + \sum_{p=3}^\infty T_p(\log q_{t-1},x_{t-1},\mu_t).
\end{align*}
where $B_t(x_t)$ describes the correction due to the modified variance for acceleration (see \eqref{eq:def-accl-At-Bt}), and $T_p(f,x,\mu)$ is the $p$-th order Taylor power term of function $f$ around $x=\mu$.
With this tilting factor, we can upper-bound the reverse-step error as, for each fixed $x_t$,
\[ \textstyle \E_{X_{t-1},X_t \sim Q_{t-1,t}} \sbrc{\log \frac{q_{t-1|t}(X_{t-1}|x_t)}{p'_{t-1|t}(X_{t-1}|x_t)}} \leq \E_{X_t, X_{t-1} \sim Q_{t,t-1}} [\zeta'_{t,t-1}] - \E_{X_t \sim Q_t, X_{t-1} \sim P'_{t-1|t}} [\zeta'_{t,t-1}]. \]
\yuchen{For regular DDPMs, there is no control for the variance of the reverse sampling process, and thus $B_t(x_t) \equiv 0$. 
In this case, the dominating rate is determined by the expected values of $T_2$. 
With the variance correction in our accelerated sampler, the corresponding $B_t(x_t)$ enables us to cancel out the second-order Taylor term (see Lemma 11). As a result, the rate-determining term becomes the expected values of $T_3$, which decays faster. Thus, the acceleration is achieved.}

\textbf{Step 3: Explicit expression for $\E_{X_t \sim Q_t, X_{t-1} \sim P'_{t-1|t}} [\zeta'_{t,t-1}]$ (\cref{lem:accl-Ep-moments}).}
Given the Taylor expansion of $\zeta'_{t,t-1}$, this step can be reduced to calculating the expected values of the power terms, which are the Gaussian centralized moments. They are calculated using the classical Isserlis's Theorem.

\textbf{Step 4: Explicit expression for $\E_{X_t, X_{t-1} \sim Q_{t,t-1}} [\zeta'_{t,t-1}]$ (\cref{lem:tweedie,lem:accl-tweedie,lem:tweedie_high_mom}).} 
While $Q_{t|t-1}$ is Gaussian, $Q_{t-1|t}$ is not Gaussian in general, rendering the calculation of all moments non-trivial. \yuchen{To calculate posterior moments, we extend Tweedie's formula \citep{tweedie2011efron} in a non-trivial way. Whereas the original Tweedie's formula provides an explicit expression for the posterior mean for Gaussian perturbed observations,}
we explicitly calculate the first six centralized posterior moments 
and provide the asymptotic order of all higher-order moments, drawing techniques from combinatorics. The results also justify the expressions of $\mu_t$ and $\Sigma_t$ in \eqref{eq:def_mu_t} and \eqref{eq:def_accl_Sigma_t}.

\textbf{Step 5: Bounding reverse-step error (\cref{lem:accl-Eq-Ep-zeta})}
In order to employ the moment results for Taylor expansion, we guarantee that it is valid to change the limit (in the Taylor expansion) and the expectation operator.
Finally, substituting the calculated moments into $\E_{X_t, X_{t-1} \sim Q_{t,t-1}} [\zeta'_{t,t-1}] - \E_{X_t \sim Q_t, X_{t-1} \sim P'_{t-1|t}} [\zeta'_{t,t-1}]$ and noting that higher-order partial derivatives do not affect the rate (by \cref{ass:regular-drv}), we can determine the dominating term and obtain the desirable result.

\section{Example \texorpdfstring{$Q_0$'s}{Targets}: Accelerated Convergence Rate with Explicit Parameter Dependency}\label{sec:specialq0_accl}

Now, we specialize \cref{thm:accl-main,thm:main_non_smooth_accl} to several interesting distribution classes, for which convergence bounds with explicit dependency on system parameters can be derived. The key is to locate the dependency in the dominating terms in the reverse-step error.


\subsection{Gaussian Mixture \texorpdfstring{$Q_0$}{Target}}

We first investigate the case where $Q_0$ is Gaussian mixture. This is a rich class of distributions with strong approximation power \citep{bacharoglou2010gauss-mm-convex,diakonikolas2017lower-bound-gauss-mm}. The following theorem establishes the first accelerated convergence result with explicit dimensional dependencies for such a distribution class.

\begin{theorem}[Accelerated Sampler for Gaussian Mixture $Q_0$] 
\label{cor:accl_gauss_mix}
Suppose that $Q_0$ is Gaussian mixture, whose p.d.f. is given by $q_0(x_0) = \sum_{n=1}^N \pi_n q_{0,n}(x_0)$,
where $q_{0,n}$ is the p.d.f. of $\calN(\mu_{0,n}, \Sigma_{0,n})$ and $\pi_n \in [0,1]$ is the mixing coefficient where $\sum_{n=1}^N \pi_n = 1$. Under \cref{ass:accl-score-hess}, if the $\alpha_t$ satisfies \cref{def:accl-noise-smooth}, we have
\begin{equation*}
   \textstyle \KL{Q_0}{\widehat{P}'_0} \lesssim \frac{d^3 \min\{d, N\}^3 \log^3 T}{T^2} + (\log T) \eps^2 + \frac{\log^2 T}{T} \eps_H^2.
\end{equation*}
\end{theorem}

Therefore, for any Gaussian mixture target $Q_0$ with $N \leq d$, it takes the accelerated algorithm $\calO\brc{d^{1.5} N^{1.5} / \eps}$ steps to reach convergence under accurate score and Hessian estimation. This is the first result for accelerated DDPM samplers to achieve an accelerated convergence rate for Gaussian mixture targets under score and Hessian estimation error.
Compared with the results for regular samplers, the number of convergence steps improves by a factor of $\calO(\eps^{-1})$.

The proof of \cref{cor:accl_gauss_mix} is non-trivial because in order to show that \cref{ass:regular-drv} holds for Gaussian mixture distributions with any $\alpha_t$ according to \cref{def:accl-noise-smooth}, it is generally difficult to evaluate and provide an upper bound for {\em all orders} of partial derivatives of the logarithm of a mixture density. To this end, we employ the multivariate Fa\'a di Bruno's formula \citep{constantine1996multi-faadi-bruno} to develop an explicit bound (\cref{lem:all-deriv-bounded-general,cor:all-deriv-bounded-gauss-mix}). 

Below we numerically evaluate the performance of our Hessian-accelerated DDPM when $Q_0$ is Gaussian mixture. The original accelerator requires calculating the square-root matrix of $\widehat{\Sigma}_t$ (see \eqref{eq:def_accl_Sigma_t}), which might be computational burdensome.
Below, we propose an approximated Hessian-based accelerated sampler, where $\widehat{\mu}_t$ is still defined in \eqref{eq:def_mu_hat_t} and $\widehat{\Sigma}_t$ is replaced by $\widehat{\Tilde{\Sigma}}_t(x_t)$ where
\begin{equation} \label{eq:def_accl_Sigma_tilde_t}
    \textstyle \Tilde{\Sigma}_t(x_t) := \frac{1-\alpha_t}{\alpha_t} \brc{I_d + \frac{1-\alpha_t}{2}\nabla \log q_t(x_t)}^2,~ \widehat{\Tilde{\Sigma}}_t(x_t) := \frac{1-\alpha_t}{\alpha_t} \brc{I_d + \frac{1-\alpha_t}{2} H_t(x_t)}^2.
\end{equation}
With a similar tilting-factor analysis as in \Cref{thm:accl-main}, we can verify that the approximated sampler still achieves an accelerated convergence rate (see \Cref{cor:accl-score-est-approx,rmk:At_Bt_approx,lem:accl-Eq-Ep-zeta-approx}).

In \Cref{fig:accl-comp}, we compare the following four accelerated samplers: (1) the regular DDPM sampler (in blue); (2) our Hessian-accelerated sampler (in red); (3) the accelerated stochastic sampler in \cite{li2024accl-prov} (in cyan); 
and (4) the deterministic sampler using PF-ODE, which is analyzed in \cite{li2023faster,li2024sharpode,huang2024pfode}. Here $N = 4$ and $d = 4$. The performance is averaged over 30 different trials. In a single trial, 200000 samples are used to estimate the KL divergence. The $\alpha_t$ in \eqref{eq:alpha_genli} is used with $c = 4$ and $\delta = 0.001$. From the comparison, it is observed that our Hessian-based sampler achieves the best convergence (at similar computation levels) in non-asymptotic regimes.

\begin{figure}[ht]
\begin{center}
\includegraphics[height=3cm]{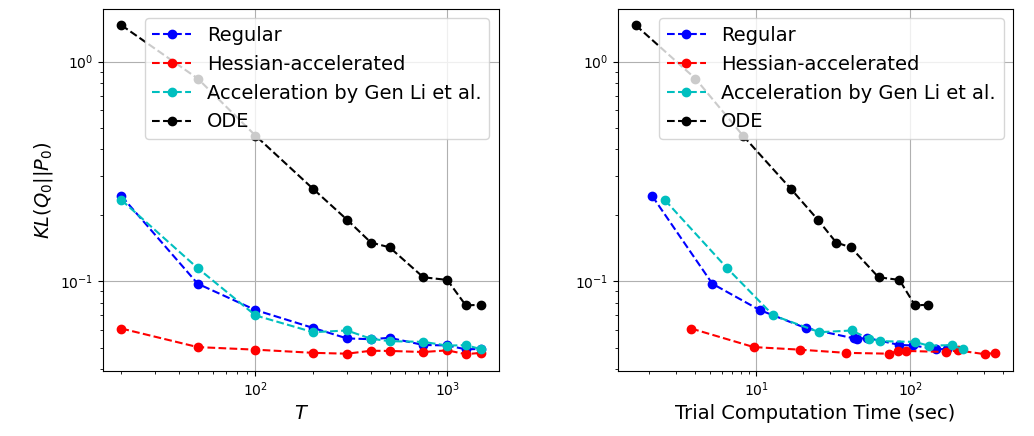}
\end{center}
\caption{Comparison of different accelerated samplers for Gaussian mixture $Q_0$'s. The $x$-axes are the number of steps (left) and the computation time of a trial (right), respectively.
}
\label{fig:accl-comp}
\end{figure}

\subsection{Finite Variance \texorpdfstring{$Q_0$}{Target} with Early-Stopping}
\label{sec:boundedsupport}

Next, we specialize \cref{thm:main_non_smooth_accl} to a special noise schedule, first proposed in \cite{li2023faster}:
\begin{equation} \label{eq:alpha_genli}
\textstyle 1-\alpha_t = \frac{c \log T}{T} \min\cbrc{\delta \brc{1 + \frac{c \log T}{T}}^t, 1},~\forall 2 \leq t \leq T,
\end{equation}
and $1-\alpha_1 = \delta$. Here $c$ and $\delta$ satisfy that $c > 2$ and $\delta e^c > 1$.
Intuitively, $\delta$ characterizes the amount of perturbation between $Q_1$ and $Q_0$ (\cref{lem:wass}). 
Note that any noise schedule satisfying the above condition also satisfies \Cref{def:accl-noise-smooth} at $t \geq 2$ (see \eqref{eq:alpha_genli_rate_alphabar}), and hence \cref{thm:main_non_smooth_accl} still holds here.

\begin{theorem}[Accelerated Sampler for $Q_0$ with Finite Variance] 
\label{cor:accl_bdd_supp}
Under \cref{ass:m2,ass:accl-score-hess}, using the $\alpha_t$ defined in \eqref{eq:alpha_genli} with $c > 2$ and $c \asymp \log(1/\delta)$, we have
\[ \textstyle \KL{Q_1}{\widehat{P}'_1} \lesssim \frac{d^3 \log^3(1/\delta) \log^3 T}{T^2} + (\log T) \eps^2 + \frac{\log^2 T}{T} \eps_H^2, \]
where $Q_1$ is such that $\rmW_2(Q_0,Q_1)^2 \lesssim \delta d$.
\end{theorem}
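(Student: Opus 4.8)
The plan is to obtain \cref{cor:accl_bdd_supp} from the general accelerated bound \cref{cor:accl-main-cor}, paralleling the way \cref{cor:bdd_supp} is derived from \cref{thm:main_non_smooth}. Since the schedule \eqref{eq:alpha_genli} with $c>4$ is exactly of the form required by \cref{cor:accl-main-cor}, the work reduces to three tasks: (i) verifying the hypotheses of that corollary for a bounded-support $Q_0$ --- namely \cref{ass:m2} (immediate, since $\norm{X_0}^2\leq R^2 d$ a.s.\ gives $\E[X_0 X_0^\T]\preceq R^2 d\, I_d$, with the $d$-dependence tracked explicitly), \cref{ass:regular-drv} for $q_t$ with $t\geq 1$, and \cref{ass:accl-score-hess} (assumed); (ii) verifying, as in \cref{cor:bdd_supp} but now through second order, the structural conditions \eqref{eq:first_order_is_small_smooth} and \eqref{eq:accl_second_order_is_small_smooth} underpinning the accelerated framework; and (iii) bounding the dominant reverse-step term
\[ \sum_{t=2}^T (1-\alpha_t)^3\, \E_{X_t\sim Q_t}\!\left[\sum_{i,j,k=1}^d \partial^3_{ijk}\log q_{t-1}(\mu_t(X_t))\,\partial^3_{ijk}\log q_t(X_t)\right] \]
with explicit dependence on $d$, $\delta$, $T$. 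The Wasserstein bound $\rmW_2(Q_0,Q_1)^2\lesssim\delta d$ follows from the one-step forward map \eqref{eq:def_forward_proc}: coupling $X_0\sim Q_0$ with $X_1=\sqrt{\alpha_1}X_0+\sqrt{\delta}\,\bar W_1$ gives $\E\norm{X_1-X_0}^2\lesssim(1-\sqrt{\alpha_1})^2 R^2 d+\delta d\lesssim\delta d$.

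For (i)--(ii), although $Q_0$ may be non-smooth, every $q_t$ with $t\geq 1$ is a continuous Gaussian mixture, $q_t(x)=\int \phi_{(1-\bar\alpha_t)I_d}\!\big(x-\sqrt{\bar\alpha_t}\,y\big)\,\d Q_0(y)$, so all of its log-derivatives admit posterior moment/cumulant expressions in $X_0$ given $X_t=x$ (as in \eqref{eq:accl-hess-log-qt} for the Hessian). Applying the multivariate Fa\`a di Bruno formula --- exactly as in the proof of \cref{cor:bdd_supp} --- produces a bound $\max_{\bm a\in[d]^k}\abs{\partial^k_{\bm a}\log q_t(x)}\leq U_k(x)$ with $U_k(x)$ polynomial of degree $\leq k$ in $(\norm{x}+R\sqrt d)/(1-\bar\alpha_t)$; the two requirements on $U_k$ in \cref{ass:regular-drv} then follow from the sub-Gaussian concentration of $X_t=\sqrt{\bar\alpha_t}X_0+\sqrt{1-\bar\alpha_t}\,\bar W_t$ (using $\norm{X_0}\leq R\sqrt d$ a.s.) and from the factorial dominating the polynomial growth in $k$. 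For the smallness conditions, use the explicit forms $\nabla\log q_t(x)=\frac{1}{1-\bar\alpha_t}\big(\sqrt{\bar\alpha_t}\,\E[X_0\mid X_t=x]-x\big)$ and $\nabla^2\log q_t(x)=\frac{\bar\alpha_t}{(1-\bar\alpha_t)^2}\Cov(X_0\mid X_t=x)-\frac{1}{1-\bar\alpha_t}I_d$, relate the $t-1$ and $t$ quantities via $\bar\alpha_{t-1}=\bar\alpha_t/\alpha_t$ and the fact that $\mu_t(x_t)$ is the one-step posterior-mean surrogate, and Taylor-expand in the small parameter $1-\alpha_t\lesssim\frac{\log T}{T}$; the first-order estimate is the one already carried out in \cref{cor:bdd_supp}, and the second-order estimate proceeds in the same spirit.

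For (iii), apply Cauchy--Schwarz over $(i,j,k)$ to split the inner sum as $\sqrt{\sum_{ijk}(\partial^3_{ijk}\log q_{t-1}(\mu_t(X_t)))^2}\cdot\sqrt{\sum_{ijk}(\partial^3_{ijk}\log q_t(X_t))^2}$, then use a change-of-measure argument (as in \cref{cor:bdd_supp}) to transfer the expectation over $X_t\sim Q_t$ onto the $q_{t-1}$ factor, and bound $\E_{X_t\sim Q_t}\big[\sum_{ijk}(\partial^3_{ijk}\log q_t(X_t))^2\big]$ via the posterior-cumulant representation together with $\norm{X_0}\leq R\sqrt d$. Plugging in \eqref{eq:alpha_genli} one has $\frac{1-\alpha_t}{1-\bar\alpha_t}\asymp\frac{\log T}{T}$ for all $t\geq 2$, while $1-\bar\alpha_t$ increases geometrically from $\asymp\delta$ near $t=1$ up to $\asymp 1$; hence the summand decays geometrically as $t$ leaves the small-$t$ regime, and summing the geometric series gives $\lesssim\frac{d^3\log^3 T}{\delta^3 T^2}$. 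Combined with the $(\log T)\eps^2+\frac{\log^2 T}{T}\eps_H^2$ terms supplied by \cref{cor:accl-main-cor}, this yields the claimed bound.

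I expect task (iii) to be the main obstacle, precisely in getting the sharp power of $\delta$: the third-derivative products peak in the small-$t$ (large-noise-level, $1-\bar\alpha_t\asymp\delta$) regime, and one must execute the change-of-measure/averaging over $X_t\sim Q_t$ carefully enough to land at $\delta^{-3}$ rather than a larger negative power, while keeping the dimensional prefactor at $d^3$. A secondary difficulty --- already flagged after \cref{cor:bdd_supp} --- is the second-order condition \eqref{eq:accl_second_order_is_small_smooth} for general bounded-support $Q_0$: small perturbation between $Q_{t-1}$ and $Q_t$ does not by itself force $\nabla^2\log q_{t-1}$ and $\nabla^2\log q_t$ to be close pointwise, so this requires working directly with the explicit mixture representations above.
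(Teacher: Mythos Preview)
Your proposal is correct and follows essentially the same route as the paper: apply \cref{cor:accl-main-cor}, verify \cref{ass:regular-drv} and the first/second-order smallness conditions via the posterior (mixture) representation of $q_t$, and bound the third-derivative sum explicitly. Two clarifications on part~(iii), which you flagged as the hard part but which is in fact routine. First, the paper does not Cauchy--Schwarz over $(i,j,k)$ but writes each $\partial^3_{ijk}\log q_t(x)$ as the posterior third cumulant $\frac{\bar\alpha_t^{3/2}}{(1-\bar\alpha_t)^3}\,\E_{\tilde Q_{0|t}(\cdot|x)}\big[\prod_{\ell\in\{i,j,k\}}(X_0^\ell-m_t^\ell(x))\big]$ and then recognizes $\sum_{ijk}a_ia_ja_kb_ib_jb_k=(\sum_i a_ib_i)^3$, so Cauchy--Schwarz on the single inner product plus $\norm{x_a-x_b}\leq 2R\sqrt d$ gives the \emph{pointwise} bound $\sum_{ijk}\partial^3_{ijk}\log q_{t-1}(\mu_t)\,\partial^3_{ijk}\log q_t(x_t)\lesssim d^3R^6\,(1-\bar\alpha_{t-1})^{-3}(1-\bar\alpha_t)^{-3}$; no ``transfer of expectation over $X_t$'' is needed (the change-of-measure in \cref{cor:bdd_supp} refers only to introducing $\tilde Q_{0|t}$). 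Second, the $\delta^{-3}$ then drops out immediately from \cref{lem:alpha_genli_rate} with $p=2$, which gives $\frac{1-\alpha_t}{(1-\bar\alpha_{t-1})^2}\lesssim\frac{\log T}{\delta T}$ and hence $(1-\alpha_t)^3\cdot(1-\bar\alpha_{t-1})^{-3}(1-\bar\alpha_t)^{-3}\lesssim(\log T)^3/(\delta^3 T^3)$ uniformly in $t$; summing over $t$ yields $\frac{d^3\log^3 T}{\delta^3 T^2}$ without any geometric-series bookkeeping.
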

\Cref{cor:accl_bdd_supp} indicates that for any $Q_0$ having \textit{finite variance}, it takes the accelerated algorithm $\calO\brc{d^{1.5} \log^{1.5}(1/\delta) / \eps}$ steps to approximate an early-stopped data distribution $Q_1$ within $\calO(\eps^2)$ error in KL divergence (or $\calO(\eps)$ in TV distance). 
For early-stopped procedures, this theorem significantly relaxes the previous assumption on the target distribution that requires $Q_0$ to have bounded support \citep{li2024accl-prov,li2023faster,huang2024pfode,li2024sharpode}.
Compared to previous accelerated diffusion samplers for bounded-support targets \citep{li2024accl-prov,li2023faster}, our number of convergence steps to achieve $\eps$-TV distance has improved by a factor of $\mathcal O(d^{1.5})$. 


The proof of \cref{cor:accl_bdd_supp} involves the following novel elements. (i) Verifying \cref{ass:regular-drv} requires evaluating and providing an upper bound for {\em all orders} of partial derivatives of the logarithm of a \textit{continuous} mixture density. Differently from the case of Gaussian (discrete) mixture, here we can only have an upper bound in expectation (i.e., in $\calL^p(Q_t)$) (\cref{lem:mat-exp-partial-non-smooth}). (ii) The second half of \cref{ass:regular-drv} requires an upper bound for the one-step perturbed score, which can be shown using the change-of-variable formula and the data processing inequality for large $T$ (\cref{lem:ptb-mm-non-smooth,lem:all-deriv-bounded-exp-non-smooth}).

\subsection{\texorpdfstring{$Q_0$}{Target} with Lipschitz Hessian Log-Density}
\label{sec:lipschitz-q0}

With the $\alpha_t$ in \eqref{eq:alpha_genli}, we derive a convergence result when only the log-density of $Q_0$ is smooth.

\begin{theorem}[Accelerated Sampler for Smooth Hessian Log-Density] \label{cor:accl_lip_q0_hess}
Suppose that 
$\nabla^2 \log q_0(x)$ is 2-norm $M$-Lipschitz. This means that $\exists M > 0$ such that 
\begin{equation*}
    \textstyle \norm{\nabla^2 \log q_0(x) - \nabla^2 \log q_0(y)} \leq M \norm{x-y},\quad \forall x,y \in \mbR^d.
\end{equation*}
Then, under \cref{ass:m2,ass:accl-score-hess}, using the $\alpha_t$ in \eqref{eq:alpha_genli} with $\delta = 1 / (M^{\frac{2}{3}} T^{\frac{3}{2}})$ and $c \geq \log(M^{\frac{2}{3}} T^{\frac{3}{2}})$, we have
\[ \textstyle \KL{Q_0}{\widehat{P}_0'} \lesssim \frac{d^3 (\log^3 M + \log^3 T) \log^3 T}{T^2} + (\log T) \eps^2 + \frac{\log^2 T}{T} \eps_H^2. \]
\end{theorem}

We also provide an accelerated convergence result with linear $d$ dependency when all the $\nabla^2 \log q_t(x)~(t \geq 0)$ are 2-norm $M$-Lipschitz (see \Cref{cor:accl_lip_int_hess} in \Cref{app:accl_lip_int_hess}).

\Cref{cor:accl_lip_q0_hess} provides us with the \textit{first} accelerated DDPM result with only a smoothness constraint on $\log q_0$, under the score and Hessian estimation error. In words, in order to reach $\calO(\eps)$ TV-distance when $\eps_H^2 / T \lesssim \eps^2$, the number of steps needed under Lipschitz-Hessian $Q_0$'s is $\calO(d^{1.5} \log^{1.5} M / \eps)$. This is different from \cite{chen2023probode,huang2024pfode,huang2024rev-trans-kern} in which some smoothness condition is imposed on all $\nabla \log q_t$'s (or $s_t$'s or both). Compared with \Cref{cor:accl_bdd_supp}, this upper bound in \Cref{cor:accl_lip_q0_hess} is directly over $\KL{Q_0}{\widehat{P}_0'}$ instead of for some early-stopped distribution.
\yuchen{Our results provide new contributions that complement existing studies by exploring different assumptions of distributions, which enriches the existing set of distributions studied in the literature.}

Our analysis is significantly different from that in \cite[Theorem 5]{chen2023improved}. There, the Poincar\'e inequality is key to guarantee that the Lipschitz smoothness in $\nabla \log q_0$ is preserved when $\delta$ is small, but this inequality may not hold in our case with smoothness only in $\nabla^2 \log q_0$. Instead, with smooth $\nabla^2 \log q_0$, we expand the tilting factor only to its third-order Taylor polynomial and directly provide an upper bound with techniques used in proving \Cref{cor:accl_bdd_supp,cor:accl_lip_int_hess}.

\section{Conclusion}

In this paper, we have provided accelerated convergence guarantees for a much larger set of target distributions than in prior literature, including both smooth $Q_0$ and general $Q_0$ with early-stopping. The accelerated rates are achieved with a new accelerated Hessian-based DDPM sampler using a novel analysis technique. One future direction is to further shrink the $d$ dependency for general $Q_0$. It is also interesting to investigate other acceleration schemes to further improve diffusion samplers.

\section*{Acknowledgements}
    This work has been supported in part by the U.S. National Science Foundation under the grants: CCF-1900145, NSF AI Institute (AI-EDGE) 2112471, CNS-2312836, CNS-2223452, CNS-2225561, and was sponsored by the Army Research Laboratory under Cooperative Agreement Number W911NF-23-2-0225. The views and conclusions contained in this document are those of the authors and should not be interpreted as representing the official policies, either expressed or implied, of the Army Research Laboratory or the U.S. Government. The U.S. Government is authorized to reproduce and distribute reprints for Government purposes notwithstanding any copyright notation herein.

\bibliography{diffusion}

\newpage
\appendix
\begin{center}
    \huge \textbf{Appendix}
\end{center}


\allowdisplaybreaks
\startcontents[section]
{
\hypersetup{linkcolor=blue}
\printcontents[section]{l}{1}{\setcounter{tocdepth}{2}}
}

\crefalias{section}{appendix} 





\begin{center}
    \huge \textbf{Appendix}
\end{center}
\allowdisplaybreaks
\startcontents[section]
{
\hypersetup{linkcolor=blue}
\printcontents[section]{l}{1}{\setcounter{tocdepth}{2}}
}
\crefalias{section}{appendix}

\section{Related Works}
\label{app:intro-works}

\textbf{Theory on Regular DDPM Samplers:} Many works have explored the performance guarantees of regular DDPM models. Specifically, a number of studies perform analyses under the $L^\infty$ score estimation error \citep{debortoli2021bridge,debortoli2022manifold}. Later, under $L^2$ score estimation error, \cite{lee2022poly} developed polynomial\footnote{By ``polynomial'' we mean that the number of steps has polynomial dependency on the score estimation error, along with other parameters.} bounds for distributions that have Lipschitz scores and satisfy log-Sobolev inequality. Soon after, \cite{chen2023sampling,lee2023general} concurrently developed polynomial bounds for those smooth distributions having Lipschitz scores and those non-smooth distributions having bounded support using early stopping. Later, \cite{chen2023improved} improved the number of steps for those target distributions with finite second moment. 
Recently, the convergence result was further improved to linear dimensional dependency using stochastic localization \citep{benton2023linear}. In \cite{conforti2023fisher}, by transforming the original process to the relative-score process, it is shown that linear dimensional dependency can also be achieved for those target distributions having finite relative Fisher information against a Gaussian distribution. 
In all the works above, the analysis technique is to discretize some continuous-time diffusion process to use SDE-type analyses. 
In \cite{li2023faster}, by carefully design a typical set, polynomial-time guarantees are obtained directly for the discrete-time samplers under the $L^2$ estimation error for target distributions having bounded support.
Other than the works above, \cite{pedrotti2023predcorr} analyzed a different sampling scheme (e.g., predictor-corrector), and \cite{bruno2023wass,gao2023wass,gao2024wass} analyzed sampling errors using a different error measure (the Wasserstein-2 distance).

\textbf{Theory on Score Estimation:} In order to achieve an end-to-end analysis, several works developed sample complexity bounds to achieve the $L^2$ score estimation error for a variety of distributions. To name a few, this includes results for those having bounded support \citep{oko2023dist-est}, Gaussian mixture \citep{shah2023gauss_mm_learning,gatmiry2024gm-learning,chen2024gm-learning}, certain families of sub-Gaussian distributions \citep{cole2024subgauss,zhang2024minimax}, high-dimensional graphical models \citep{mei2023graphical}, and those supported on a low-dimensional linear subspace \citep{chen2023lowdim}. More recently, \cite{li2024generalization} considered the generalizability of the continuous-time diffusion models, and \cite{wibisono2024emp_bayes} proposed a regularized score estimator that attains the minimax rate of estimating the scores.

\textbf{Other Theoretical Works:} Other than the works listed above and in \Cref{subsec:related_works}, \cite{gao2024wass} studied the ODE convergence for strongly-concave target distributions under Wasserstein-2 error. \cite{cao2023opt-coef} compared the performance of SDE and PF-ODE and investigated conditions where one might outperform the other. Besides PF-ODE, \cite{cheng2023flow,benton2024flow,jiao2024flow_match,gao2024flow-match} provided guarantees for the closely-related flow-matching model, which learns a deterministic coupling between any two distributions. \cite{chang2024cond-ode} proposed a novel ODE for sampling from a conditional distribution. \cite{lyu2024cst-model,li2024consistency} provided convergence guarantees for the more recent consistency models \citep{song2023cm}.

\yuchen{\textbf{Relationship to GENIE \citep{dockhorn2022genie}:} To obtain higher-order scores, another method is to use automatic differentiation, as in GENIE \citep{dockhorn2022genie}. 
There, higher-order score functions are used to accelerate the diffusion sampling process empirically. 
In particular, \cite{dockhorn2022genie} shows that GENIE achieves better empirical performance than deterministic samplers such as DDIM \citep{song2021ddim}. Our paper theoretically justifies the accelerated empirical performance of \cite{dockhorn2022genie} in the regime when the Hessian of $\log q_t$ is well-estimated. }

\section{Full List of Notations}\label{app:notations}

For any two functions $f(d,\delta,T)$ and $g(d,\delta,T)$, we write $f(d,\delta,T) \lesssim g(d,\delta,T)$ (resp. $f(d,\delta,T) \gtrsim g(d,\delta,T)$) for some universal constant (not depending on $\delta$, $d$ or $T$) $L < \infty$ (resp. $L > 0$) if $\limsup_{T \to \infty} | f(d,\delta,T)/$ $g(d,\delta,T) | \leq L$ (resp. $\liminf_{T \to \infty} | f(d,\delta,T) / g(d,\delta,T) | \geq L$). We write $f(d,\delta,T) \asymp g(d,\delta,T)$ when both $f(d,\delta,T) \lesssim g(d,\delta,T)$ and $f(d,\delta,T) \gtrsim g(d,\delta,T)$ hold. Note that the dependency on $\delta$ and $d$ is retained with $\lesssim, \gtrsim, \asymp$. We write $f(d,\delta,T) = O(g(T))$ (resp. $f(d,\delta,T) = \Omega(g(T))$) if $f(d,\delta,T) \lesssim L(d,\delta) g(T)$ (resp. $f(d,\delta,T) \gtrsim L(d,\delta) g(T)$) holds for some $L(d,\delta)$ (possibly depending on $\delta$ and $d$). We write $f(d,\delta,T) = o(g(T))$ if $\limsup_{T \to \infty} |f(d,\delta,T)$ $/g(T)|= 0$. We write $f(d,\delta,T) = \Tilde{O}(g(T))$ if $f(d,\delta,T) = O(g(T) (\log g(T))^k)$ for some constant $k$. Note that the big-$O$ notation omits the dependency on $\delta$ and $d$. In the asymptotic when $\eps^{-1} \to \infty$, we write $f(d,\eps^{-1}) = \calO(g(d,\eps^{-1}))$ if $f(d,\delta,\eps^{-1}) \lesssim g(d,\delta,\eps^{-1}) (\log g(\eps^{-1}))^k$ for some constant $k$. Unless otherwise specified, we write $x^i (1\leq i \leq d)$ as the $i$-th element of a vector $x \in \mbR^d$ and $[A]^{ij}$ as the $(i,j)$-th element of a matrix $A$. For a function $f(x): \mbR^d \to \mbR$, we write $\partial_i f(z)$ as a shorthand for $\frac{\partial}{\partial x^i} f(x)\Big|_{x=z}$, 
and similarly for higher moments. 
For matrices $A,B$, $\Tr(A)$ is the trace of $A$, and $A \preceq B$ means that $B-A$ is positive semi-definite. For a positive integer $n$, $[n] := \{1,\dots,n\}$. 

\section{Proofs of \texorpdfstring{\cref{lem:accl-hess-match,lem:accl-hess-match-score-hess}}{Lemmas 1 and 2}}
\label{app:hess-est}

In this section, we provide lemmas and proofs related to Hessian estimation.

\subsection{Proof of \texorpdfstring{\cref{lem:accl-hess-match}}{Lemma 1}}
\label{app:proof-lem-accl-hess-match}

The idea is similar to score matching. Define $v'_\theta(x) := v_\theta(x) - \frac{1}{1-\Bar{\alpha}_t} I_d$. For each $i,j \in [d]$,
\begin{align*}
    &\E_{X_t \sim Q_t}\brc{ v_\theta^{ij}(X_t) - \brc{\frac{\partial^2_{ij} q_t(X_t)}{q_t(X_t)} + \frac{\ind\cbrc{i=j}}{1-\Bar{\alpha}_t}} }^2\\
    &= \E_{X_t \sim Q_t}\brc{ [v'_\theta(X_t)]^{ij} - \frac{\partial^2_{ij} q_t(X_t)}{q_t(X_t)} }^2\\
    &= \E_{X_t \sim Q_t}\brc{ [v'_\theta(X_t)]^{ij} }^2 - 2 \E_{X_t \sim Q_t}\sbrc{ [v'_\theta(X_t)]^{ij} \frac{\partial^2_{ij} q_t(X_t)}{q_t(X_t)} } + \const\\
    &= \E_{X_t \sim Q_t}\brc{ [v'_\theta(X_t)]^{ij} }^2 - 2 \int [v'_\theta(x_t)]^{ij} \partial^2_{ij} q_t(x_t)  \d x_t + \const
\end{align*}
where $\const$ denotes terms that are independent of $\theta$, and
\begin{align*}
    &\int [v'_\theta(x_t)]^{ij} \partial^2_{ij} q_t(x_t)  \d x_t \\
    &= \int [v'_\theta(x_t)]^{ij} \int \partial^2_{ij} q_{t|0}(x_t|x_0) \d Q_0(x_0) \d x_t\\
    &= \int \int  q_{t|0}(x_t|x_0) [v'_\theta(x_t)]^{ij} \frac{\partial^2_{ij} q_{t|0}(x_t|x_0)}{q_{t|0}(x_t|x_0)} \d Q_0(x_0) \d x_t\\
    &\stackrel{(i)}{=} \int \int  q_{t|0}(x_t|x_0) [v'_\theta(x_t)]^{ij} \brc{\partial^2_{ij} \log q_{t|0}(x_t|x_0) + \partial_i \log q_{t|0}(x_t|x_0) \partial_j \log q_{t|0}(x_t|x_0) } \d Q_0(x_0) \d x_t\\
    &= \int \int  q_{t|0}(x_t|x_0) [v'_\theta(x_t)]^{ij} \brc{-\frac{\ind\cbrc{i=j}}{1-\Bar{\alpha}_t} + \frac{x_t^i - \sqrt{\Bar{\alpha}_t} x_0^i}{1-\Bar{\alpha}_t} \cdot \frac{x_t^j - \sqrt{\Bar{\alpha}_t} x_0^j}{1-\Bar{\alpha}_t} } \d Q_0(x_0) \d x_t\\
    &\stackrel{(ii)}{=} \E_{\substack{(X_0, \Bar{W}_t) \sim Q_0 \otimes \calN(0,I_d) \\ X_t = \sqrt{\Bar{\alpha}_t} X_0 + \sqrt{1-\Bar{\alpha}_t} \Bar{W}_t }} \sbrc{ [v'_\theta(X_t)]^{ij}  \brc{-\frac{\ind\cbrc{i=j}}{1-\Bar{\alpha}_t} + \frac{1}{1-\Bar{\alpha}_t} \Bar{W}_t^i \Bar{W}_t^j} }
\end{align*}
where $(i)$ follows because for any function $f(x)$ we have $\partial^2_{ij} \log f(x) = \frac{\partial^2_{ij} f(x)}{f(x)} - (\partial_i \log f(x)) (\partial_j \log f(x))$, and $(ii)$ follows because $x_t = \sqrt{\Bar{\alpha}_t} x_0 + \sqrt{1-\Bar{\alpha}_t} \Bar{w}_t$.
Therefore,
\begin{align*}
    &\E_{X_t \sim Q_t}\brc{ v_\theta^{ij}(X_t) - \brc{\frac{\partial^2_{ij} q_t(X_t)}{q_t(X_t)} - \frac{\ind\cbrc{i=j}}{1-\Bar{\alpha}_t}} }^2\\
    &= \E_{\substack{(X_0, \Bar{W}_t) \sim Q_0 \otimes \calN(0,I_d) \\ X_t = \sqrt{\Bar{\alpha}_t} X_0 + \sqrt{1-\Bar{\alpha}_t} \Bar{W}_t }} \brc{ [v'_\theta(X_t)]^{ij} - \brc{-\frac{\ind\cbrc{i=j}}{1-\Bar{\alpha}_t} + \frac{1}{1-\Bar{\alpha}_t} \Bar{W}_t^i \Bar{W}_t^j} }^2 + \const\\
    &= \E_{\substack{(X_0, \Bar{W}_t) \sim Q_0 \otimes \calN(0,I_d) \\ X_t = \sqrt{\Bar{\alpha}_t} X_0 + \sqrt{1-\Bar{\alpha}_t} \Bar{W}_t }} \brc{ [v_\theta(X_t)]^{ij} - \frac{1}{1-\Bar{\alpha}_t} \Bar{W}_t^i \Bar{W}_t^j }^2 + \const
\end{align*}
and the result follows immediately after we sum up over $i,j \in [d]$.

\subsection{\texorpdfstring{\cref{lem:accl-hess-match-score-hess}}{Lemma 2} and its Proof}
\label{app:proof-lem-accl-hess-match-score-hess}

The following lemma provides sufficient conditions such that the $H_t$ in \eqref{eq:accl-hess-est} \yuchen{satisfies} \cref{ass:accl-score-hess}.
\yuchen{
\begin{lemma} \label{lem:accl-hess-match-score-hess}
Under \cref{ass:regular-drv-plus}, with the $\alpha_t$ defined in \Cref{def:accl-noise-smooth}, suppose that $v_t$ and $s_t$ satisfy, as $T \to \infty$,
\begin{align}
    \textstyle \frac{1}{T} \sum_{t=1}^T \E_{X_t \sim Q_t} \norm{ v_t(X_t) - \brc{\frac{\nabla^2 q_t(X_t)}{q_t(X_t)} + \frac{1}{1-\Bar{\alpha}_t} I_d} }_F^2 = \Tilde{O}(T^{-1}), \label{eq:lem-accl-hess-match-score-hess-cond-vt}\\
    \textstyle \max_{1 \leq t \leq T} (1-\alpha_t)^{-2} \sqrt{ \E_{X_t \sim Q_t} \norm{s_t(X_t)-\nabla \log q_t(X_t)}^4} = \Tilde{O}(1). \label{eq:lem-accl-hess-match-score-hess-cond-st}
\end{align}
Also suppose that the $H_t$ defined in \eqref{eq:accl-hess-est} satisfies $\sup_{\ell \geq 1} \brc{\E_{X_t \sim Q_t} \norm{H_t(X_t)}^\ell}^{1/\ell} = \Tilde{O}(1)$. Then, the $H_t$ and the $s_t$ from score matching \citep{song2019ncsn} satisfy \cref{ass:accl-score-hess}. 
\end{lemma}
}

\begin{proof}[Proof of \Cref{lem:accl-hess-match-score-hess}]

The condition on the score estimation error in \cref{ass:accl-score-hess} is immediately satisfied using Jensen's inequality. We next focus on the condition on the Hessian estimation.
Recall that
\[ H_t(x) = v_t(x) - \frac{1}{1-\Bar{\alpha}_t} I_d - s_t(x) s_t^\T(x). \]
The goal is to show that $H_t$ is close to $\nabla^2 \log q_t$ (i.e., the second relationship in \cref{ass:accl-score-hess}). Given that $\nabla^2 \log q_t(x) = \frac{\nabla^2 q_t(x)}{q_t(x)} - (\nabla \log q_t(x))(\nabla \log q_t(x))^\T$, the key is to control the error incurred by $s_t(x) s_t(x)^\T$, which is
\begin{align*}
    & \E_{X_t \sim Q_t} \sum_{i,j=1}^d \brc{s_t^i(X_t) s_t^j (X_t) - [\nabla \log q_t(X_t)]^i [\nabla \log q_t(X_t)]^j }^2\\
    &= \E_{X_t \sim Q_t} \sum_{i,j=1}^d \brc{(s_t^i(X_t)-[\nabla \log q_t(X_t)]^i) s_t^j (X_t) + [\nabla \log q_t(X_t)]^i (s_t^j (X_t) - [\nabla \log q_t(X_t)]^j ) }^2\\
    &\stackrel{(i)}{\leq} 2 \E_{X_t \sim Q_t} \sum_{i,j=1}^d (s_t^i(X_t)-[\nabla \log q_t(X_t)]^i)^2 (s_t^j (X_t))^2 + ( [\nabla \log q_t(X_t)]^i)^2 (s_t^j (X_t) - [\nabla \log q_t(X_t)]^j )^2\\
    &= 2 \E_{X_t \sim Q_t} \sbrc{ \norm{s_t(X_t)-\nabla \log q_t(X_t)}^2 (\norm{\nabla \log q_t(X_t)}^2 + \norm{s_t (X_t)}^2) }
\end{align*}
where $(i)$ follows because $(a+b)^2 = a^2 + b^2 + 2 a b \leq 2 a^2 + 2 b^2$. To continue, we use the Cauchy-Schwartz inequality and obtain
\begin{align*}
    &\E_{X_t \sim Q_t}\norm{s_t(X_t) s_t^\T(X_t) - (\nabla \log q_t(X_t)) (\nabla \log q_t(X_t))^\T }_F^2 \\
    &\qquad \leq 2 \sqrt{\E_{X_t \sim Q_t} \norm{s_t(X_t)-\nabla \log q_t(X_t)}^4} \sqrt{2 \E_{X_t \sim Q_t} \sbrc{\norm{\nabla \log q_t(X_t)}^4 + \norm{s_t (X_t)}^4} }.
\end{align*}
Here the second term has that
\begin{align*}
    \E[\norm{s_t (X_t)}^4] &\leq 8 \E[\norm{s_t(X_t)-\nabla \log q_t(X_t)}^4] + 8 \E[\norm{\nabla \log q_t(X_t)}^4] \\
    &\lesssim \E[\norm{\nabla \log q_t(X_t)}^4].
\end{align*}
Therefore,
\begin{align*}
    &\frac{1}{T} \sum_{t=1}^T \E_{X_t \sim Q_t} \norm{H_t(X_t)-\nabla^2 \log q_t(X_t)}^2_F\\
    &\leq \frac{1}{T} \sum_{t=1}^T \E_{X_t \sim Q_t} \norm{ v_\theta(X_t) - \brc{\frac{\nabla^2 q_t(X_t)}{q_t(X_t)} + \frac{1}{1-\Bar{\alpha}_t} I_d} }^2_F\\
    &\qquad + \frac{1}{T} \sum_{t=1}^T \E_{X_t \sim Q_t} \norm{ s_t(X_t) s_t^\T(X_t) - (\nabla \log q_t(X_t)) (\nabla \log q_t(X_t))^\T }^2_F\\
    &\lesssim \frac{1}{T} \sum_{t=1}^T \E_{X_t \sim Q_t} \norm{ v_\theta(X_t) - \brc{\frac{\nabla^2 q_t(X_t)}{q_t(X_t)} + \frac{1}{1-\Bar{\alpha}_t} I_d} }^2_F\\
    &\qquad + \frac{1}{T} \sum_{t=1}^T \sqrt{\E_{X_t \sim Q_t} \norm{s_t(X_t)-\nabla \log q_t(X_t)}^4} \sqrt{\E_{X_t \sim Q_t}\norm{\nabla \log q_t(X_t)}^4}\\
    &\stackrel{(ii)}{=} \frac{1}{T} \sum_{t=1}^T \E_{X_t \sim Q_t} \norm{ v_\theta(X_t) - \brc{\frac{\nabla^2 q_t(X_t)}{q_t(X_t)} + \frac{1}{1-\Bar{\alpha}_t} I_d} }^2_F\\
    &\qquad + \Tilde{O}\brc{\sqrt{\frac{1}{T} \sum_{t=1}^T (1-\alpha_t)^2 \E_{X_t \sim Q_t}\norm{\nabla \log q_t(X_t)}^4} }\\
    &\stackrel{(iii)}{=} \frac{1}{T} \sum_{t=1}^T \E_{X_t \sim Q_t} \norm{ v_\theta(X_t) - \brc{\frac{\nabla^2 q_t(X_t)}{q_t(X_t)} + \frac{1}{1-\Bar{\alpha}_t} I_d} }^2_F + \Tilde{O}(T^{-1})
\end{align*}
where $(ii)$ follows from \eqref{eq:lem-accl-hess-match-score-hess-cond-st} using the fact that $\frac{1}{T} \sum_{t=1}^T \sqrt{a_t} \leq \sqrt{\frac{1}{T} \sum_{t=1}^T a_t}$ by Jensen's inequality, and $(iii)$ follows under \cref{ass:regular-drv-plus}. Combining this with \eqref{eq:lem-accl-hess-match-score-hess-cond-vt}, we finally get
\begin{align*}
    \frac{1}{T} \sum_{t=1}^T \E_{X_t \sim Q_t} \norm{ v_t(X_t) - \brc{\frac{\nabla^2 q_t(X_t)}{q_t(X_t)} + \frac{1}{1-\Bar{\alpha}_t} I_d} }_F^2  = \Tilde{O}(T^{-1})
\end{align*}
and thus the second relationship in \cref{ass:accl-score-hess} is satisfied.
The proof is now complete.
\end{proof}

\section{Proof of \texorpdfstring{\cref{thm:accl-main}}{Theorem 1}}
\label{app:proof_thm_accl_main}

Instead of \Cref{ass:regular-drv}, we will prove \Cref{thm:accl-main} under the following more general assumption, which obviously implies \Cref{ass:regular-drv} for any $\alpha_t$.
\begin{assumption}[Regular Partial Derivatives+] \label{ass:regular-drv-plus}
For all $t \geq 1$, $\ell \geq 1$, and $\bm{a} \in [d]^p$ such that $\abs{\bm{a}} = p \geq 1$,
\begin{align*}
    &(1-\alpha_t)^{p \ell/2} \E_{X_t \sim Q_t} \abs{\partial_{\bm{a}}^p \log q_{t}(X_t) }^\ell = \Tilde{O}\brc{(1-\alpha_t)^{p \ell/2}},\\
    &(1-\alpha_t)^{p \ell/2} \E_{X_t \sim Q_t} \abs{\partial_{\bm{a}}^p \log q_{t-1}(\mu_t(X_t)) }^\ell = \Tilde{O}\brc{(1-\alpha_t)^{p \ell/2}}.
\end{align*}
When $q_0$ does not exist, this is required only for $t \geq 2$.
\end{assumption}

To begin the proof of \cref{thm:accl-main}, note that
\begin{align*}
    \KL{Q}{\widehat{P}'} &= \E_{X_0,\dots,X_T \sim Q} \sbrc{\log \frac{q(X_0,\dots,X_T)}{\widehat{p}'(X_0,\dots,X_T)}}\\
    &\stackrel{(i)}{=} \E_{X_0,\dots,X_T \sim Q} \sbrc{\log \frac{q_0(X_0)\prod_{t=1}^T q_{t|t-1}(X_t|X_{t-1})}{\widehat{p}'(X_0,\dots,X_T)}}\\
    &\stackrel{(ii)}{=} \E_{X_0,\dots,X_T \sim Q} \sbrc{\log \frac{q_0(X_0)\prod_{t=1}^T q_{t|t-1}(X_t|X_{t-1})}{\widehat{p}'_0 (X_0) \prod_{t=1}^T \widehat{p}'_{t|t-1}(X_t|X_{t-1})}}\\
    &= \E_{X_0 \sim Q_0} \sbrc{\log \frac{q_0(X_0)}{\widehat{p}'_0(X_0)}} + \sum_{t=1}^T \E_{X_{t-1},X_t \sim Q_{t-1,t}} \sbrc{\log \frac{q_{t|t-1}(X_t|X_{t-1})}{\widehat{p}'_{t|t-1}(X_t|X_{t-1})}}\\
    &= \E_{X_0 \sim Q_0} \sbrc{\log \frac{q_0(X_0)}{\widehat{p}'_0(X_0)}} + \sum_{t=1}^T \E_{X_{t-1} \sim Q_{t-1}} \sbrc{ \E_{X_{t} \sim Q_{t|t-1}} \sbrc{\log \frac{q_{t|t-1}(X_t|X_{t-1})}{\widehat{p}'_{t|t-1}(X_t|X_{t-1})}}}\\
    &= \KL{Q_0}{\widehat{P}'_0} + \sum_{t=1}^T \E_{X_{t-1} \sim Q_{t-1}} \sbrc{ \KL{Q_{t|t-1}(\cdot|X_{t-1})}{\widehat{P}'_{t|t-1}(\cdot|X_{t-1})} }.
\end{align*}
Here $(i)$ holds because of the Markov property of the forward process. We explain $(ii)$ below. By the backward Markov property of the reverse process, for any $t \geq 1$, given $X_{t-1} = x_{t-1}$, each of $X_{t-2},\dots,X_0$ is independent with $X_{t}$. This implies that
\[ \widehat{p}'_{t|t-1,\dots,0}(x_t|x_{t-1},\dots,x_0) = \widehat{p}'_{t|t-1}(x_t|x_{t-1}),\quad \forall t \geq 1. \]
Thus, $\widehat{p}'(x_0,\dots,x_T) = \widehat{p}'_0(x_0) \prod_{t=1}^T \widehat{p}'_{t|t-1}(x_t|x_{t-1})$. In other words, $X_0,\dots,X_t$ is also forward Markov under $\widehat{P}'$. 

Following from similar arguments,
\[ \KL{Q}{\widehat{P}'} = \KL{Q_T}{\widehat{P}'_T} + \sum_{t=1}^T \E_{X_t \sim Q_{t}} \sbrc{ \KL{Q_{t-1|t}(\cdot|X_t)}{\widehat{P}'_{t-1|t}(\cdot|X_t)} }. \]
Since KL-divergence is non-negative, an upper bound on $\KL{Q_0}{\widehat{P}'_0}$ is given by
\begin{align} \label{eq:kl_decomp}
    &\KL{Q_0}{\widehat{P}'_0} \nonumber\\
    &= \KL{Q_T}{\widehat{P}'_T} + \sum_{t=1}^T \E_{X_t \sim Q_{t}} \sbrc{ \KL{Q_{t-1|t}(\cdot|X_t)}{\widehat{P}'_{t-1|t}(\cdot|X_t)} } \nonumber\\
    &\qquad - \sum_{t=1}^T \E_{X_{t-1} \sim Q_{t-1}} \sbrc{ \KL{Q_{t|t-1}(\cdot|X_{t-1})}{\widehat{P}'_{t|t-1}(\cdot|X_{t-1})} } \nonumber\\
    &\leq \KL{Q_T}{\widehat{P}'_T} + \sum_{t=1}^T \E_{X_t \sim Q_{t}} \sbrc{ \KL{Q_{t-1|t}(\cdot|X_t)}{\widehat{P}'_{t-1|t}(\cdot|X_t)} } \nonumber\\
    &= \underbrace{\E_{X_T \sim Q_T} \sbrc{\log \frac{q_T(X_T)}{p'_T(X_T)}}}_{\text{Term 1: initialization error}} + \underbrace{\sum_{t=1}^T \E_{X_t,X_{t-1} \sim Q_{t,t-1}} \sbrc{\log \frac{p'_{t-1|t}(X_{t-1}|X_t)}{\widehat{p}'_{t-1|t}(X_{t-1}|X_t)}}}_{\text{Term 2: estimation error}} \nonumber\\
    &\qquad + \underbrace{\sum_{t=1}^T \E_{X_t,X_{t-1} \sim Q_{t,t-1}} \sbrc{\log \frac{q_{t-1|t}(X_{t-1}|X_t)}{p'_{t-1|t}(X_{t-1}|X_t)}}}_{\text{Term 3: reverse-step error}}.
\end{align}
The last equality holds because $\widehat{p}'_T = p'_T$.

Next, we bound the above three terms separately in a few steps.

\subsection{Step 0: Bounding term 1 -- Initialization Error}

\begin{lemma} \label{lem:init_err}
Suppose $\Bar{\alpha}_T \searrow 0$ as $T \to \infty$. Then, under \cref{ass:m2},
\[ \E_{X_T \sim Q_T} \sbrc{\log \frac{q_T(X_T)}{p'_T(X_T)}} \leq \frac{1}{2} M_2 \Bar{\alpha}_T d + O\brc{\Bar{\alpha}_T^2},~\text{ as } T \to \infty. \]
\end{lemma}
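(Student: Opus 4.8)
The plan is to compute the KL divergence between $Q_T$ and $P_T = \mathcal N(0, I_d)$ directly, exploiting the fact that $Q_T$ is itself a Gaussian mixture: conditionally on $X_0$, the forward marginal is $Q_{T|0}(\cdot|x_0) = \mathcal N(\sqrt{\bar\alpha_T}\,x_0,\,(1-\bar\alpha_T)I_d)$ by \eqref{eq:forward_proc_agg}. The quantity we want, $\E_{X_T\sim Q_T}[\log(q_T/p_T)]$, is exactly $\KL{Q_T}{P_T}$, since $p_T$ is the density of $P_T$. First I would use convexity of KL divergence (the data-processing / joint-convexity inequality) to write
\[
\KL{Q_T}{\mathcal N(0,I_d)} = \KL{\textstyle\int Q_{T|0}(\cdot|x_0)\,\d Q_0(x_0)}{\mathcal N(0,I_d)} \le \E_{X_0\sim Q_0}\sbrc{\KL{\mathcal N(\sqrt{\bar\alpha_T}X_0,(1-\bar\alpha_T)I_d)}{\mathcal N(0,I_d)}}.
\]
This reduces everything to a closed-form Gaussian-to-Gaussian KL.

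Next I would plug in the standard formula: for $\mathcal N(m,\sigma^2 I_d)$ against $\mathcal N(0,I_d)$,
\[
\KL{\mathcal N(m,\sigma^2 I_d)}{\mathcal N(0,I_d)} = \frac12\brc{d\sigma^2 - d - d\log\sigma^2 + \norm{m}^2}.
\]
With $\sigma^2 = 1-\bar\alpha_T$ and $m = \sqrt{\bar\alpha_T}\,X_0$, this becomes $\frac12\brc{-d\bar\alpha_T - d\log(1-\bar\alpha_T) + \bar\alpha_T\norm{X_0}^2}$. Taking expectation over $X_0$ gives the bound $\frac12\brc{-d\bar\alpha_T - d\log(1-\bar\alpha_T) + \bar\alpha_T\,\E_{X_0\sim Q_0}\norm{X_0}^2}$.

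Finally I would expand the $\log$ term as $T\to\infty$: since $\bar\alpha_T\searrow 0$, we have $-\log(1-\bar\alpha_T) = \bar\alpha_T + \frac12\bar\alpha_T^2 + O(\bar\alpha_T^3)$, so $-d\bar\alpha_T - d\log(1-\bar\alpha_T) = \frac{d}{2}\bar\alpha_T^2 + O(\bar\alpha_T^3) = O(\bar\alpha_T^2)$ for fixed $d$. This leaves $\frac12\,\E_{X_0\sim Q_0}\norm{X_0}^2\,\bar\alpha_T + O(\bar\alpha_T^2)$, which is exactly the claimed bound. I do not anticipate a genuine obstacle here — the only mild care needed is justifying the joint-convexity step (alternatively one can avoid it entirely by noting the reverse process at time $T$ and the forward process share the same structure, but convexity of KL is cleanest), and making sure the $O(\bar\alpha_T^2)$ remainder is uniform in the relevant sense (it is, since $d$ and $\E\norm{X_0}^2 \le M_2 d$ are fixed and do not depend on $T$ by Assumption~\ref{ass:m2}).
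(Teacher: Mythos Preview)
Your proposal is correct and takes essentially the same approach as the paper: the paper also reduces the mixture KL to the expected conditional Gaussian-to-Gaussian KL (phrased there as Jensen's inequality on $x\log x$ applied to the entropy term, which is equivalent to your joint-convexity-of-KL step), then uses the closed-form Gaussian KL and Taylor-expands $\log(1-\bar\alpha_T)$ to extract the $\frac{1}{2}\bar\alpha_T\,\E\norm{X_0}^2 + O(\bar\alpha_T^2)$ bound.
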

\begin{remark}
Under \cref{ass:m2}, if the noise schedule satisfies \cref{def:accl-noise-smooth}, we have
\[ \E_{X_T \sim Q_T} \sbrc{\log \frac{q_T(X_T)}{p'_T(X_T)}} = o(T^{-2}).
\]
\end{remark}
\begin{proof}
See \cref{app:proof_init_err}.
\end{proof}

We now introduce the following notation for analyzing the estimation error and the reverse-step error for the accelerated sampler.
\begin{definition} [Big-O in $\calL^r$ space] \label{def:bigO-Lp}
For a random variable $Z_T$, we say that $Z_T(x) = O_{\calL^r(Q)}(1)$ if $\brc{\E_{X \sim Q} \abs{Z_T(X)}^r}^{1/r} = O(1)$ for all $r \geq 1$ as $T \to \infty$.
\end{definition}
One property is that if $Z_T(x) = O_{\calL^r(Q)}(1)$ then $\E_{X \sim Q}\abs{Z_T(X)} = O(1)$.
Another property is that if $Z_1 = O_{\calL^r(Q)}(a_T)$ and $Z_2 = O_{\calL^r(Q)}(b_T)$ for all $r \geq 1$, applying Cauchy-Schwartz inequality we get, for all $r \geq 1$,
\[ \brc{\E\abs{Z_1 Z_2}^r}^{1/r} \leq \brc{\E Z_1^{2 r} \E Z_2^{2 r}}^{1/(2 r)} = O(a_T b_T), \]
which implies that $O_{\calL^r(Q)}(a_T) O_{\calL^r(Q)}(b_T) = O_{\calL^r(Q)}(a_T b_T)$. Now, with this notation, the regularity condition on $H_t$ can be written as
\[ (1-\alpha_t) \norm{H_t(X_t) } = \Tilde{O}_{\calL^r(Q_t)}(1-\alpha_t),~\forall r \geq 1. \]
Also, \cref{ass:regular-drv-plus} can be equivalently written as, $\forall r \geq 1$,
\begin{align*}
    (1-\alpha_t)^{p/2} \abs{\partial_{\bm{a}}^p \log q_{t}(X_t) } &= \Tilde{O}_{\calL^r(Q_t)}\brc{(1-\alpha_t)^{p/2}}, \\
    (1-\alpha_t)^{p/2} \abs{\partial_{\bm{a}}^p \log q_{t-1}(\mu_t(X_t)) } &= \Tilde{O}_{\calL^r(Q_t)}\brc{(1-\alpha_t)^{p/2}}.
\end{align*}

\subsection{Step 1: Bounding term 2 -- Score and Hessian Estimation Error}

We first bound the estimation error, which includes the errors that come from the score and the Hessian estimation.
In particular, \cref{ass:regular-drv-plus} guarantees that all higher Taylor terms are well controlled in expectation over $X_t \sim Q_t$.

\begin{lemma} \label{lem:accl-score-est}
Under \cref{ass:regular-drv-plus,ass:accl-score-hess}, with the $\alpha_t$ satisfying \cref{def:accl-noise-smooth}, we have
\[ \sum_{t=1}^T \E_{X_t,X_{t-1} \sim Q_{t,t-1}} \sbrc{\log \frac{p'_{t-1|t}(X_{t-1}|X_t)}{\widehat{p}'_{t-1|t}(X_{t-1}|X_t)}} \lesssim (\log T) \eps^2 + \frac{\log^2 T}{T} \eps_H^2. \]
\end{lemma}

\begin{remark}
Under \cref{ass:accl-score-hess}, \cref{lem:accl-score-est} guarantees that
\[ \sum_{t=1}^T \E_{X_t,X_{t-1} \sim Q_{t,t-1}} \sbrc{\log \frac{p'_{t-1|t}(X_{t-1}|X_t)}{\widehat{p}'_{t-1|t}(X_{t-1}|X_t)}} = \Tilde{O}\brc{\frac{1}{T^2}}. \]
\end{remark}

\begin{proof}
See \cref{app:proof-lem-accl-score-est}.
\end{proof}


Before we proceed to the reverse-step error, we provide the following lemma to provide an upper bound when we use the $\Tilde{\Sigma}_t$ and its estimate according to \eqref{eq:def_accl_Sigma_tilde_t}.

\begin{corollary} \label{cor:accl-score-est-approx}
Under the same conditions of \Cref{lem:accl-score-est}, the upper bound in \Cref{lem:accl-score-est} on the estimation error still holds with the slightly perturbed $\Tilde{\Sigma}_t$ provided in \eqref{eq:def_accl_Sigma_tilde_t}.
\end{corollary}

\begin{proof}
    See \Cref{app:cor-accl-score-est-approx}.
\end{proof}

\subsection{Step 2: Expressing Log-likelihood Ratio via Tilting Factor}
Next we focus on the reverse-step error for the accelerated process. Recall that $Q_0$ is smooth under \cref{ass:cont}.
We introduce the following notations for analysis. Let
\begin{equation} \label{eq:def-accl-At-Bt}
    A_t(x_t) := (1-\alpha_t) \nabla^2 \log q_t(x_t),\quad B_t(x_t) := I_d - (I_d + A_t(x_t))^{-1},
\end{equation}
which imply that
\[ \Sigma_t(x_t) = \frac{1-\alpha_t}{\alpha_t} (I_d + A_t(x_t)),\quad \Sigma_t^{-1}(x_t) = \frac{\alpha_t}{1-\alpha_t} (I_d - B_t(x_t)). \]

Now, with the notation in \cref{def:bigO-Lp}, for each $i,j \in [d]$, $A_t^{i j}(x_t) = \Tilde{O}_{\calL^r(Q_t)}\brc{1-\alpha_t}$ for all $r \geq 1$ under \Cref{ass:regular-drv-plus}.
Also, when $(1-\alpha_t)$ is small, we can perform Taylor expansion on $B_t(\cdot)$ around $A_t(\cdot)$ and obtain, under \cref{ass:regular-drv-plus},
\begin{equation} \label{eq:accl-At-Bt-rate}
    B_t(X_t) = A_t(X_t) + \Tilde{O}_{\calL^r(Q_t)}\brc{(1-\alpha_t)^2}.
\end{equation}

\begin{remark} \label{rmk:At_Bt_approx}
In general, suppose that we choose $P'_{t-1|t}$ whose conditional covariance satisfies
\[ \Tilde{\Sigma}_t(X_t) = \frac{1-\alpha_t}{\alpha_t} \brc{I_d + A_t(X_t) + \Tilde{O}_{\calL^r(Q_t)}\brc{(1-\alpha_t)^2}} = \Sigma_t(X_t) + \Tilde{O}_{\calL^r(Q_t)}\brc{(1-\alpha_t)^3}, \]
where a small perturbation is added to the covariance matrix.
An immediate consequence is that
\[ \Tilde{\Sigma}_t^{-1}(X_t) = \frac{\alpha_t}{1-\alpha_t} \brc{I_d - B_t(X_t) + \Tilde{O}_{\calL^r(Q_t)}\brc{(1-\alpha_t)^2} } = \Sigma_t^{-1}(X_t) + \Tilde{O}_{\calL^r(Q_t)}\brc{1-\alpha_t}. \]
Then, with such $P'_{t-1|t}$ having a slightly perturbed covariance, the following \Cref{lem:accl-rev-err-tilt-factor,lem:accl-Ep-moments} still hold with $\Tilde{A}_t(x_t)$ and $\Tilde{B}_t(x_t)$ such that
\[ \Tilde{A}_t(x_t) := \frac{\alpha_t}{1-\alpha_t} \Tilde{\Sigma}_t(x_t) - I_d,\quad \Tilde{B}_t(x_t) := I_d - (I_d + \Tilde{A}_t(x_t))^{-1}.\]
Note that $\Tilde{A}_t(X_t) = A_t(X_t) + \Tilde{O}_{\calL^r(Q_t)}\brc{(1-\alpha_t)^2}$ and $\Tilde{B}_t(X_t) = B_t(X_t) + \Tilde{O}_{\calL^r(Q_t)}\brc{(1-\alpha_t)^2}$.
\end{remark}

In the following we write $\mu_t = \mu_t(x_t)$, $A_t = A_t(x_t)$, and $B_t = B_t(x_t)$ for brevity.

\begin{lemma} \label{lem:accl-rev-err-tilt-factor}
For any fixed $x_t \in \mbR^d$, as long as $q_{t-1}$ is defined, we have
\[ q_{t-1|t}(x_{t-1}|x_t) = \frac{p'_{t-1|t}(x_{t-1}|x_t) e^{\zeta'_{t,t-1}(x_t,x_{t-1})}}{\E_{X_{t-1}\sim P'_{t-1|t}}[e^{\zeta'_{t,t-1}(x_t,X_{t-1})}]}, \]
where
\begin{equation} \label{eq:def_zeta}
    \zeta_{t,t-1}(x_t,x_{t-1}) := \log q_{t-1}(x_{t-1}) - \log q_{t-1}(\mu_t) - (x_{t-1} - \mu_t)^\T (\sqrt{\alpha_t} \nabla \log q_t(x_t)),
\end{equation}
and
\begin{align} 
\label{eq:def_accl_zeta_prime}
    \zeta'_{t,t-1}(x_t,x_{t-1}) &:= \zeta_{t,t-1}(x_t,x_{t-1}) - \frac{\alpha_t}{2 (1-\alpha_t)} (x_{t-1}-\mu_t)^\T B_t (x_{t-1}-\mu_t) \nonumber\\
    &= \log q_{t-1}(x_{t-1}) - \log q_{t-1}(\mu_t) - (x_{t-1} - \mu_t)^\T (\sqrt{\alpha_t} \nabla \log q_t(x_t)) \nonumber \\
    &\qquad - \frac{\alpha_t}{2 (1-\alpha_t)} (x_{t-1}-\mu_t)^\T B_t (x_{t-1}-\mu_t).
\end{align}
\end{lemma}

\begin{proof}
See \cref{app:proof-lem-accl-rev-err-tilt-factor}.
\end{proof}

In the following we write $\zeta_{t,t-1} = \zeta_{t,t-1}(x_t,x_{t-1})$ and $\zeta'_{t,t-1} = \zeta'_{t,t-1}(x_t,x_{t-1})$ and omit dependencies on $x_t$ and $x_{t-1}$ for brevity. As we will see, \eqref{eq:def_zeta} is the tilting factor for the regular diffusion process. Given the definition of $\zeta'_{t,t-1}$ in \eqref{eq:def_accl_zeta_prime}, below we analyze $\log q_{t-1}(x)$ around $x=\mu_t$ using Taylor expansion.
We first provide the following notations for the Taylor expansion.

\begin{definition}[Taylor Expansion]
Recall that $x^i~(1\leq i \leq d)$ denotes the $i$-th element of a vector $x$. Given an \textit{analytic} function $f(x)$, its Taylor expansion around $x=\mu$ is given by
\begin{align*}
    f(x) &= f(\mu) + \sum_{p=1}^\infty T_p(f,x,\mu) \nonumber\\
    &= f(\mu) + \nabla f(\mu)^\T (x-\mu) + \frac{1}{2} \sum_{i=1}^d \partial^2_{ii} f(\mu) (x^i-\mu^i)^2 + \frac{1}{2} \sum_{\substack{i,j=1 \\ i\neq j}}^d \partial^2_{ij} f(\mu) (x^i-\mu^i) (x^j-\mu^j) \nonumber \\
    & \qquad + \sum_{p=3}^\infty T_p(f,x,\mu) 
\end{align*}
where, for $p \geq 1$, we define
\begin{align} \label{eq:def_Tp}
    T_p (f,x,\mu) &:= \frac{1}{p!} \sum_{\gamma \in \mbN^d:\sum_i \gamma^i = p} \partial^p_{\bm{a}} f(\mu) \prod_{i=1}^d (x^i-\mu^i)^{\gamma^i}
\end{align}
where in $\bm{a} \in [d]^p$ the multiplicity of $i~ (\in [d])$ is $\gamma^i$.
\end{definition}

If we specialize it to the case where $f = \log q_{t-1}$, $x = x_{t-1}$, and $\mu = \mu_t$, we need the following lemma to guarantee the validity of Taylor expansion for $t \geq 1$.

\begin{lemma} \label{lem:finite_der_log_q}
Fix $t \geq 1$. For any $Q_0$ (not necessarily having a p.d.f. w.r.t. the Lebesgue measure), given any $k \geq 1$ and any vector of indices $\bm{a} \in [d]^k$, $q_t$ exists and $|\partial^k_{\bm{a}} \log q_{t}(x_t)| < \infty,~\forall x_t \in \mbR^d$ (which possibly depends on $T$). Further, $q_t$ and $\log q_t$ are both analytic.
\end{lemma}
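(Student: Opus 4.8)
\textbf{Proof proposal for Lemma \ref{lem:finite_der_log_q}.}

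The plan is to exploit the explicit Gaussian convolution structure of $q_t$ coming from \eqref{eq:forward_proc_agg}. For $t \geq 1$ the marginal $Q_t$ is the law of $\sqrt{\bar\alpha_t}X_0 + \sqrt{1-\bar\alpha_t}\bar W_t$ with $\bar W_t \sim \calN(0,I_d)$ independent of $X_0$, so $q_t$ is the pushforward of $Q_0$ under scaling convolved with a nondegenerate Gaussian. Concretely, writing $\phi_{\sigma}$ for the $\calN(0,\sigma^2 I_d)$ density with $\sigma^2 = 1-\bar\alpha_t > 0$,
\[
 q_t(x_t) = \int_{\mbR^d} \phi_{\sigma}\!\brc{x_t - \sqrt{\bar\alpha_t}\,x_0}\, \d Q_0(x_0) = \E_{X_0 \sim Q_0}\sbrc{\phi_\sigma\!\brc{x_t - \sqrt{\bar\alpha_t}\,X_0}}.
\]
First I would establish three things from this representation: (a) $q_t(x_t) > 0$ for every $x_t \in \mbR^d$, since the integrand is strictly positive (the Gaussian density never vanishes); (b) $q_t$ is $C^\infty$, and in fact real-analytic, because one can differentiate under the integral sign arbitrarily many times — each derivative $\partial^k_{\bm a}\phi_\sigma(x_t - \sqrt{\bar\alpha_t}x_0)$ equals $\phi_\sigma(\cdot)$ times a Hermite-type polynomial in $(x_t - \sqrt{\bar\alpha_t}x_0)/\sigma^2$, and these are dominated uniformly for $x_t$ in a neighborhood of any fixed point by an integrable (indeed bounded, using $\E_{Q_0}\norm{X_0}^2 < \infty$ from \cref{ass:m2}) envelope, so dominated convergence justifies interchanging $\partial$ and $\int$; (c) analyticity follows either by checking that $q_t$ extends holomorphically to a complex neighborhood (the Gaussian $\exp(-\norm{x_t - \sqrt{\bar\alpha_t}x_0}^2/2\sigma^2)$ is entire in $x_t \in \mbC^d$ and the integral still converges locally uniformly because the real part of $-\norm{z-a}^2$ is controlled), or by bounding the Taylor remainder directly via the Hermite-polynomial moment estimates. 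Either way $q_t$ is real-analytic on $\mbR^d$.

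Next, since $q_t > 0$ everywhere and $q_t$ is analytic, $\log q_t$ is a composition of the analytic function $\log$ (on $(0,\infty)$) with the analytic, strictly positive function $q_t$, hence $\log q_t$ is itself real-analytic on $\mbR^d$; in particular all partial derivatives $\partial^k_{\bm a}\log q_t(x_t)$ exist and are finite at every $x_t$. The finiteness claim $|\partial^k_{\bm a}\log q_t(x_t)| < \infty$ is then immediate. (If a quantitative bound is wanted one can use Faà di Bruno to express $\partial^k_{\bm a}\log q_t$ in terms of $\partial^j q_t / q_t$ for $j \le k$, and bound each $\partial^j q_t$ using the Hermite-polynomial formula together with $\E_{Q_0}\norm{X_0}^j$; but the statement as written only needs finiteness, so the soft argument suffices.) I would note explicitly that all of these quantities may depend on $T$ through $\bar\alpha_t = \prod_{i \le t}\alpha_i$ and through $\sigma^2 = 1-\bar\alpha_t$, which is consistent with the lemma statement — the only thing that matters here is $\sigma^2 > 0$, which holds for every $t \ge 1$ because each $\alpha_i = 1-\beta_i \in (0,1)$ forces $\bar\alpha_t < 1$.

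The main obstacle is the rigorous justification of analyticity (as opposed to mere $C^\infty$), i.e.\ controlling the Taylor remainder of $q_t$ (and then of $\log q_t$) uniformly on small balls. The cleanest route is the complex-analytic one: show $z \mapsto \E_{X_0}[\phi_\sigma(z - \sqrt{\bar\alpha_t}X_0)]$ defines a holomorphic function of $z$ in a tube $\{z \in \mbC^d : \norm{\mathrm{Im}\,z\,} < \rho\}$ for suitable $\rho$, using Morera's theorem together with the local-uniform integrability bound $|\exp(-\norm{z - \sqrt{\bar\alpha_t}x_0}^2/2\sigma^2)| = \exp(-(\norm{\mathrm{Re}\,z - \sqrt{\bar\alpha_t}x_0}^2 - \norm{\mathrm{Im}\,z}^2)/2\sigma^2)$, which is integrable against $Q_0$ once $\norm{\mathrm{Im}\,z}$ is bounded. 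Holomorphy of $q_t$ on a complex neighborhood of $\mbR^d$ gives real-analyticity, and since $q_t$ stays nonzero there (shrinking the tube if necessary so $q_t \ne 0$, which is possible near any fixed real point by continuity), $\log q_t$ is holomorphic there too, completing the proof. I would carry out the steps in the order: (1) the Gaussian-convolution representation and positivity; (2) differentiation under the integral sign for $C^\infty$; (3) the complex-tube argument for analyticity of $q_t$; (4) conclude analyticity and finiteness of all derivatives of $\log q_t$.
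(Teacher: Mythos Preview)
Your proposal is correct and follows essentially the same approach as the paper: both use the Gaussian-convolution representation of $q_t$, justify differentiation under the integral via dominated convergence (the paper simply notes that each $\partial^k_{\bm a}\phi_\sigma$ is uniformly bounded, so \cref{ass:m2} is not actually needed), and deduce analyticity of $\log q_t$ by composing the analytic $\log$ with the positive analytic $q_t$. The only minor difference is in step (3): the paper establishes analyticity of $q_t$ by swapping the Gaussian's Taylor series with the $Q_0$-integral directly (again via dominated convergence), whereas you take the complex-tube/Morera route; both are standard and either works.
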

\begin{proof}
See \cref{app:proof_lem_finite_der_log_q}.
\end{proof}

Thus, by \cref{ass:cont} and \cref{lem:finite_der_log_q}, since $\log q_{t-1}$ is analytic, its Taylor expansion around $x_{t-1} = \mu_t$ is equal to (cf. \eqref{eq:def_zeta})
\begin{equation} \label{eq:zeta_taylor_expansion}
    \zeta_{t,t-1} = (\nabla \log q_{t-1}(\mu_t) - \sqrt{\alpha_t} \nabla \log q_t(x_t))^\T (x_{t-1}-\mu_t) + \sum_{p=2}^\infty T_p(\log q_{t-1},x_{t-1},\mu_t),
\end{equation}
and the Taylor expansion of $\zeta'_{t,t-1}(x_t,x_{t-1})$ around $x_{t-1} = \mu_t$ is (cf. \eqref{eq:def_accl_zeta_prime})
\begin{align} \label{eq:accl_zeta_prime_taylor_expansion}
    \zeta'_{t,t-1} &= (\nabla \log q_{t-1}(\mu_t) - \sqrt{\alpha_t} \nabla \log q_t(x_t))^\T (x_{t-1}-\mu_t) \nonumber \\
    &\qquad + \frac{1}{2} (x_{t-1}-\mu_t)^\T \brc{\nabla^2 \log q_{t-1}(\mu_t) - \frac{\alpha_t}{1-\alpha_t} B_t} (x_{t-1}-\mu_t) \nonumber \\
    &\qquad + \sum_{p=3}^\infty T_p(\log q_{t-1},x_{t-1},\mu_t).
\end{align}
In order to differentiate the second-order terms in \eqref{eq:zeta_taylor_expansion} and \eqref{eq:accl_zeta_prime_taylor_expansion}, we reserve $T_2$ for \eqref{eq:zeta_taylor_expansion} and employ for \eqref{eq:accl_zeta_prime_taylor_expansion}:
\[ T'_2(\log q_{t-1},x_{t-1},\mu_t) := \frac{1}{2} (x_{t-1}-\mu_t)^\T \brc{\nabla^2 \log q_{t-1}(\mu_t) - \frac{\alpha_t}{1-\alpha_t} B_t} (x_{t-1}-\mu_t). \]


Compared with the tilting factor for the regular process in $\zeta_{t,t-1}$, an additional term that is related to $\Sigma_t$ (and thus $B_t$) is introduced in $\zeta'_{t,t-1}$. From the perspective of Taylor expansion, we can further control the \textit{second}-order term in the Taylor expansion of $\log q_{t-1}$ around $\mu_t$ through this extra term, which improves the accuracy of posterior approximation at each step.

To use Taylor expansion to upper-bound the reverse-step error in \eqref{eq:kl_decomp}, we first note that, for any fixed $x_t$,
\begin{align} \label{eq:rev_err_zeta}
    &\E_{X_{t-1} \sim Q_{t-1|t}} \sbrc{\log \frac{q_{t-1|t}(X_{t-1}|x_t)}{p'_{t-1|t}(X_{t-1}|x_t)}} \nonumber\\
    &= \E_{X_{t-1} \sim Q_{t-1|t}} \sbrc{\zeta'_{t,t-1} - \log \E_{X_{t-1} \sim P'_{t-1|t}} [e^{\zeta'_{t,t-1}}] } \nonumber\\
    &= \E_{X_{t-1} \sim Q_{t-1|t}} \sbrc{\zeta'_{t,t-1}} - \log \E_{X_{t-1} \sim P'_{t-1|t}} [e^{\zeta'_{t,t-1}}] \nonumber\\
    &\stackrel{(i)}{\leq} \E_{X_{t-1} \sim Q_{t-1|t}} \sbrc{\zeta'_{t,t-1}} + \E_{X_{t-1} \sim P'_{t-1|t}} \sbrc{ - \log e^{\zeta'_{t,t-1}} } \nonumber\\
    &= \E_{X_{t-1} \sim Q_{t-1|t}} [\zeta'_{t,t-1}] -  \E_{X_{t-1} \sim P'_{t-1|t}} [\zeta'_{t,t-1}]
\end{align}
where in $(i)$ we use Jensen's inequality and note that $-\log(\cdot)$ is convex. In the remaining steps, we analyze the expected values of the tilting factor separately.

\subsection{Step 3: Conditional Expectation of \texorpdfstring{$\zeta'_{t,t-1}$}{Tilting Factor} under \texorpdfstring{$P'_{t-1|t}$}{P}}

With Taylor expansion around the posterior mean, the calculation of the expected values is reduced to that of all the (centralized) moments.
To start, it is useful to examine the rate of $\frac{1-\alpha_t}{\alpha_t}$. A direct implication of \cref{def:accl-noise-smooth} is that, with some constant $C_1$, since $\alpha_t \searrow 0$ as $T \to \infty$,
\begin{equation} \label{eq:noise_schedule_cons_1}
    \frac{(1-\alpha_t)^p}{\alpha_t^q} \leq \frac{C_1^p \log^p T / T^p}{\brc{1- C_1 \log T / T}^q} \lesssim (1-\alpha_t)^p,\quad \forall p,q \geq 1, t \geq 1.
\end{equation}

Below, we first calculate the centralized moments under $P'_{t-1|t}$. 
We employ Isserlis's Theorem for our help, which constitutes the main idea in the lemma below. Note that the results in this subsection hold as long as $Q_0$ has a p.d.f..

\begin{lemma} \label{lem:accl-Ep-moments}
Fix $t \geq 1$. For brevity write $Z_i = X_{t-1}^i - \mu_t^i,~\forall i \in [d]$, $A = A_t(x_t)$, and $\E[\cdot]$ as a shorthand for $\E_{X_{t-1} \sim P'_{t-1|t}}[\cdot]$. Note that we have $A_t^{ij}(x_t) = \Tilde{O}_{\calL^p(Q_t)}\brc{1-\alpha_t}$ for all $i,j \in [d]$ under \cref{ass:regular-drv-plus}. Thus, the following results hold: $\forall p \geq 1$,
\begin{align*}
    \E\sbrc{\prod_{i \in \bm{a}} Z_i} &= 0, \quad \forall \bm{a}: \abs{\bm{a}} \text{ odd},\\
    \E\sbrc{\prod_{i \in \bm{a}} Z_i} &= \Tilde{O}_{\calL^p(Q_t)}\brc{(1-\alpha_t)^{\frac{\abs{\bm{a}}}{2}}}, \quad \forall \bm{a}: \abs{\bm{a}} \text{ even}.
\end{align*}
Specifically, for $i,j,k,l \in [d]$ all differ, the fourth moment is
\begin{align*}
    \E[Z_i^4] &= 3 \brc{\frac{1-\alpha_t}{\alpha_t}}^2 (1+A^{ii})^2\\
    \E[Z_i^3 Z_j] &= 3 \brc{\frac{1-\alpha_t}{\alpha_t}}^2 A^{ij} (1+A^{ii})\\
    \E[Z_i^2 Z_j^2] &= \brc{\frac{1-\alpha_t}{\alpha_t}}^2 (1+A^{ii}) (1+A^{jj}) + \Tilde{O}_{\calL^p(Q_t)}((1-\alpha_t)^4)\\
    \E[Z_i^2 Z_j Z_k] &= \brc{\frac{1-\alpha_t}{\alpha_t}}^2 (1+A^{ii}) A^{jk} + \Tilde{O}_{\calL^p(Q_t)}((1-\alpha_t)^4)\\
    \E[Z_i Z_j Z_k Z_l] &= \Tilde{O}_{\calL^p(Q_t)}((1-\alpha_t)^4).
\end{align*}
For $i,j,k \in [d]$ all differ, the sixth moment is
\begin{align*}
    \E[Z_i^6] &= 15 \brc{\frac{1-\alpha_t}{\alpha_t}}^3 (1+A^{ii})^3\\
    \E[Z_i^4 Z_j^2] &= 3 \brc{\frac{1-\alpha_t}{\alpha_t}}^3 (1+A^{ii})^2 (1+A^{jj}) + \Tilde{O}_{\calL^p(Q_t)}((1-\alpha_t)^4)\\
    \E[Z_i^2 Z_j^2 Z_k^2] &= \brc{\frac{1-\alpha_t}{\alpha_t}}^3 (1+A^{ii}) (1+A^{jj}) (1+A^{kk}) + \Tilde{O}_{\calL^p(Q_t)}((1-\alpha_t)^4),
\end{align*}
and $\E\sbrc{\prod_{i \in \bm{a}:\abs{\bm{a}}=6} Z_i} = \Tilde{O}_{\calL^p(Q_t)}((1-\alpha_t)^4)$ otherwise. All the rates are under \Cref{ass:regular-drv-plus}.
\end{lemma}
\begin{proof}
See \cref{app:proof-lem-accl-Ep-moments}.
\end{proof}

\subsection{Step 4: Conditional Expectation of \texorpdfstring{$\zeta'_{t,t-1}$}{Tilting Factor} under \texorpdfstring{$Q_{t-1|t}$}{Q}}

Although each $Q_{t|t-1}$ is conditionally Gaussian, the posterior $Q_{t-1|t}$ is not Gaussian in general. In the following, we analyze the posterior centralized moments under $Q_{t-1|t}$ using the idea of Tweedie's formula \cite{tweedie2011efron}. Then, we apply them to analyze $\E_{X_{t-1} \sim Q_{t-1|t}} [\zeta_{t,t-1}]$, again using the Taylor expansion in \eqref{eq:zeta_taylor_expansion}. Again, the result is more generally applicable to non-smooth $Q_0$ at $t \geq 2$ due to \cref{lem:finite_der_log_q}.

\begin{lemma} \label{lem:tweedie}
Fix $t \geq 1$ such that $q_{t-1}$ exists. Define $\Tilde{x}_t := \frac{\sqrt{\alpha_t}}{1-\alpha_t} x_t$, and
\begin{equation} \label{eq:def_kappa_x_tilde}
    \kappa(\Tilde{x}_t) := \log q_t\brc{\frac{1-\alpha_t}{\sqrt{\alpha_t}} \Tilde{x}_t} + \frac{1-\alpha_t}{2 \alpha_t} \norm{\Tilde{x}_t}^2 + \frac{d}{2} \log\brc{2 \pi (1-\alpha_t)}.
\end{equation}
Let $1 \leq i,j,k,l \leq d$, which are possibly equal to each other. 
The first 3 centralized moments under $Q_{t-1|t}$ satisfy
\begin{align*}
    &\E_{X_{t-1} \sim Q_{t-1|t}}\sbrc{X_{t-1}} = \nabla \kappa = \mu_t\\
    &\E_{X_{t-1} \sim Q_{t-1|t}}\sbrc{(X_{t-1}-\mu_t)(X_{t-1}-\mu_t)^\T} = \nabla^2 \kappa = \frac{1-\alpha_t}{\alpha_t} I_d + \frac{(1-\alpha_t)^2}{\alpha_t} \nabla^2 \log q_t(x_t)\\
    &\E_{X_{t-1},X_t \sim Q_{t-1,t}}\sbrc{(X_{t-1}^i-\mu_t^i)(X_{t-1}^j-\mu_t^j)(X_{t-1}^k-\mu_t^k)} \\
    &\qquad = \E_{X_t \sim Q_t} [\partial^3_{ijk} \kappa] = \frac{(1-\alpha_t)^3}{\alpha_t^{3/2}} \E_{X_t \sim Q_{t}} [\partial^3_{ijk} \log q_t(X_t)] = \Tilde{O}((1-\alpha_t)^3).
\end{align*}
The fourth centralized moment satisfies
\begin{align*}
    &\E_{X_{t-1},X_t \sim Q_{t-1,t}}\sbrc{(X_{t-1}^i-\mu_t^i)(X_{t-1}^j-\mu_t^j)(X_{t-1}^k-\mu_t^k)(X_{t-1}^l-\mu_t^l)}\\
    &= 
    \E_{X_t \sim Q_{t}} [ (\partial^2_{ij} \kappa)(\partial^2_{kl} \kappa) + (\partial^2_{ik} \kappa)(\partial^2_{jl} \kappa) + (\partial^2_{il} \kappa)(\partial^2_{jk} \kappa) + \partial^4_{ijkl} \kappa ]\\
    &= \begin{cases}
      3 \brc{\frac{1-\alpha_t}{\alpha_t}}^2 + \Tilde{O}((1-\alpha_t)^3), & \text{if}\ i=j=k=l, \\
      \brc{\frac{1-\alpha_t}{\alpha_t}}^2 + \Tilde{O}((1-\alpha_t)^3), & \text{if}\ i=k\neq j=l,\\
      \Tilde{O}((1-\alpha_t)^3), & \text{otherwise}.
    \end{cases}
\end{align*}
Note that all derivatives above are w.r.t. $\Tilde{x}_t$. All the rates are under \Cref{ass:regular-drv-plus}.
\end{lemma}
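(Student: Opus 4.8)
The plan is to recognize that, for a fixed $x_t$, the posterior $Q_{t-1|t}(\cdot \mid x_t)$ belongs to an exponential family in which the reparametrized quantity $\Tilde{x}_t = \frac{\sqrt{\alpha_t}}{1-\alpha_t} x_t$ plays the role of the natural parameter and $x_{t-1}$ the sufficient statistic, and whose log-partition function is exactly the $\kappa$ of \eqref{eq:def_kappa_x_tilde}. Once this is established, the centralized moments are read off from the derivatives of $\kappa$ via the classical cumulant--moment relations, and the explicit formulas follow by the chain rule together with the noise-schedule estimate \eqref{eq:noise_schedule_cons_1}.

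First I would apply Bayes' rule and the Gaussian form \eqref{eq:def_forward_proc} of $Q_{t|t-1}$ to write
\[
Q_{t-1|t}(\d x_{t-1}\mid x_t) = \frac{1}{q_t(x_t)(2\pi(1-\alpha_t))^{d/2}} \exp\!\Big(-\tfrac{\norm{x_t - \sqrt{\alpha_t}\,x_{t-1}}^2}{2(1-\alpha_t)}\Big)\, Q_{t-1}(\d x_{t-1}),
\]
which makes sense since $Q_{t-1}$ is either $Q_0$ with a density under \cref{ass:cont} when $t=1$, or has a density by \cref{lem:finite_der_log_q} when $t\geq 2$; then I would expand the quadratic to split off the factor $e^{\Tilde{x}_t^\T x_{t-1}}$ that is linear in $x_{t-1}$. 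This exhibits $Q_{t-1|t}$ as the exponential family with base measure $\nu(\d x_{t-1}) := \exp(-\frac{\alpha_t\norm{x_{t-1}}^2}{2(1-\alpha_t)})\,Q_{t-1}(\d x_{t-1})$ and log-partition $A(\Tilde{x}_t) := \log\int e^{\Tilde{x}_t^\T x_{t-1}}\,\nu(\d x_{t-1})$. Integrating the displayed Bayes identity against $Q_{t-1}$, i.e. using $q_t(x_t) = \int q_{t|t-1}(x_t\mid x_{t-1})\,Q_{t-1}(\d x_{t-1})$, and substituting $x_t = \frac{1-\alpha_t}{\sqrt{\alpha_t}}\Tilde{x}_t$, identifies $A(\Tilde{x}_t)$ with the right-hand side of \eqref{eq:def_kappa_x_tilde}, so $A = \kappa$. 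The Gaussian factor in $\nu$ dominates polynomial growth, so $e^{\Tilde{x}_t^\T x_{t-1}}$ times any monomial in $x_{t-1}$ is $\nu$-integrable; this both shows that all centralized moments of $Q_{t-1|t}$ are finite and legitimizes differentiating $\kappa$ under the integral sign to all orders, so $\kappa$ is $C^\infty$ (indeed real-analytic) on $\mbR^d$.

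Next I would invoke the standard exponential-family identities: $\nabla\kappa$ is the mean, $\nabla^2\kappa$ is the covariance, the partials $\partial^3_{ijk}\kappa$ are the third cumulants and hence equal the third centralized moments, while the fourth centralized moment equals $\partial^4_{ijkl}\kappa + \partial^2_{ij}\kappa\,\partial^2_{kl}\kappa + \partial^2_{ik}\kappa\,\partial^2_{jl}\kappa + \partial^2_{il}\kappa\,\partial^2_{jk}\kappa$ (fourth cumulant plus the three pairings). The derivatives of $\kappa$ are then computed from $\kappa(\Tilde{x}_t) = \log q_t\!\big(\tfrac{1-\alpha_t}{\sqrt{\alpha_t}}\Tilde{x}_t\big) + \frac{1-\alpha_t}{2\alpha_t}\norm{\Tilde{x}_t}^2 + \mathrm{const}$ by the chain rule, each $\Tilde{x}_t$-derivative pulling out a factor $\frac{1-\alpha_t}{\sqrt{\alpha_t}}$: this gives $\nabla\kappa = \frac{1}{\sqrt{\alpha_t}}(x_t+(1-\alpha_t)\nabla\log q_t(x_t)) = \mu_t$, $\nabla^2\kappa = \frac{1-\alpha_t}{\alpha_t}I_d + \frac{(1-\alpha_t)^2}{\alpha_t}\nabla^2\log q_t(x_t)$, $\partial^3_{ijk}\kappa = \frac{(1-\alpha_t)^3}{\alpha_t^{3/2}}\partial^3_{ijk}\log q_t(x_t)$, and $\partial^4_{ijkl}\kappa = \frac{(1-\alpha_t)^4}{\alpha_t^2}\partial^4_{ijkl}\log q_t(x_t)$. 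Using \eqref{eq:noise_schedule_cons_1} and the fact that \cref{ass:regular-drv} bounds these derivatives of $\log q_t$ at a fixed $x_t$ by $T$-independent quantities shows that both $\partial^3_{ijk}\kappa$ and $\partial^4_{ijkl}\kappa$ are $O((1-\alpha_t)^3)$, and $\partial^2_{ij}\kappa = \frac{1-\alpha_t}{\alpha_t}\delta_{ij} + O((1-\alpha_t)^2)$, so a product of two such is $(\frac{1-\alpha_t}{\alpha_t})^2\delta_{ij}\delta_{kl} + O((1-\alpha_t)^3)$. Plugging into the fourth-moment decomposition yields $(\frac{1-\alpha_t}{\alpha_t})^2(\delta_{ij}\delta_{kl}+\delta_{ik}\delta_{jl}+\delta_{il}\delta_{jk}) + O((1-\alpha_t)^3)$, and evaluating the bracket --- which equals $3$ when $i=j=k=l$, equals $1$ when the indices split into two distinct equal pairs (e.g.\ $i=k\neq j=l$, and symmetrically for the other pairings), and $0$ otherwise --- recovers the three stated cases.

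The main obstacle I expect is not any individual computation but the bookkeeping: phrasing the exponential-family structure cleanly with the non-Lebesgue base measure $\nu$, rigorously justifying finiteness of all centralized moments and term-by-term differentiation of $\kappa$ (which is precisely where the Gaussian factor in $\nu$ is essential), and keeping the error orders straight in the fourth-moment conversion --- in particular checking that every cross term produced there ($\frac{1-\alpha_t}{\alpha_t}\times O((1-\alpha_t)^2)$, the product of two Hessian corrections, and $\partial^4\kappa$ itself) is genuinely $O((1-\alpha_t)^3)$ for fixed $x_t$ via \eqref{eq:noise_schedule_cons_1} and \cref{ass:regular-drv}, and that the combinatorial factor $\delta_{ij}\delta_{kl}+\delta_{ik}\delta_{jl}+\delta_{il}\delta_{jk}$ produces exactly the claimed trichotomy.
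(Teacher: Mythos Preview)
Your proposal is correct and follows essentially the same approach as the paper: both recognize $Q_{t-1|t}(\cdot\mid x_t)$ as a natural exponential family with parameter $\Tilde{x}_t$ and log-partition $\kappa$, and then read off the centralized moments from derivatives of $\kappa$. The only cosmetic difference is that the paper rederives the cumulant--moment identities by repeatedly differentiating the normalization $\int q_{t-1|t}\,\d x_{t-1}=1$, whereas you invoke them as standard exponential-family facts; the subsequent chain-rule computations and order bookkeeping are identical.
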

\begin{proof}
See \cref{app:proof_lem_tweedie}.
\end{proof}

\Cref{lem:tweedie} also justifies the expression of $\mu_t$ and $\Sigma_t$ in the diffusion process (i.e., \eqref{eq:def_mu_t} and \eqref{eq:def_accl_Sigma_t}), which match the posterior mean and variance, respectively.

Next we turn to calculate the fifth and sixth centralized moment under $Q_{t-1|t}$, again drawing the idea of Tweedie's formula \citep{tweedie2011efron}. This is a direct extension to \cref{lem:tweedie}.

\begin{lemma} \label{lem:accl-tweedie}
Fix $t \geq 1$ such that $q_{t-1}$ exists. Fix $x_t \in \mbR^d$. Under \cref{ass:regular-drv-plus}, with the same definitions of $\Tilde{x}_t$ and $\kappa(\Tilde{x}_t)$ as in \cref{lem:tweedie}, the fifth centralized moment is
\begin{multline*}
    \E_{X_{t-1} \sim Q_{t-1|t}}\sbrc{(X_{t-1}^i-\mu_t^i)(X_{t-1}^j-\mu_t^j)(X_{t-1}^k-\mu_t^k)(X_{t-1}^l-\mu_t^l)(X_{t-1}^m-\mu_t^m)}\\
    = \sum_{\xi \in \binom{\{i,j,k,l,m\}}{2}} (\partial^2_{\xi} \kappa)(\partial^3_{\{i,j,k,l,m\} \setminus \xi} \kappa) + \partial^5_{ijklm} \kappa = \Tilde{O}_{\calL^p(Q_t)}((1-\alpha_t)^4)
\end{multline*}
where, given a set $A$, we define
\[ \binom{A}{2} := \Big\{\{a_1,a_2\}: a_1,a_2 \in A,~a_1 \neq a_2\Big\}. \]
Let $P_n^k$ be the set that contains all distinct size-$k$ partitions of $[n]$. Define
\[ \mathrm{part}_2(A) := \{((a_i,a_j): \{i,j\} \in p): p \in P_{\abs{A}}^2\}. \]
The sixth centralized moment is
\begin{align*}
    &\E_{X_{t-1} \sim Q_{t-1|t}}\sbrc{(X_{t-1}^i-\mu_t^i)(X_{t-1}^j-\mu_t^j)(X_{t-1}^k-\mu_t^k)(X_{t-1}^l-\mu_t^l)(X_{t-1}^m-\mu_t^m)(X_{t-1}^n-\mu_t^n)}\\
    &= \sum_{(\xi_1,\xi_2,\xi_3) \in \mathrm{part}_2(\{i,j,k,l,m,n\})} (\partial^2_{\xi_1} \kappa)(\partial^2_{\xi_2} \kappa) (\partial^2_{\xi_3} \kappa) + \Tilde{O}_{\calL^p(Q_t)}((1-\alpha_t)^4)\\
    &= \begin{cases}
      15 \brc{\frac{1-\alpha_t}{\alpha_t}}^3 + \Tilde{O}_{\calL^p(Q_t)}((1-\alpha_t)^4), & \text{if}\ i=j=k=l=m=n \\
      3 \brc{\frac{1-\alpha_t}{\alpha_t}}^3 + \Tilde{O}_{\calL^p(Q_t)}((1-\alpha_t)^4), & \text{if}\ i=k=m=n \neq j=l\\
      \brc{\frac{1-\alpha_t}{\alpha_t}}^3 + \Tilde{O}_{\calL^p(Q_t)}((1-\alpha_t)^4), & \text{if}\ i=l,j=m,k=n~\text{while $i,j,k$ all differ}\\
      \Tilde{O}_{\calL^p(Q_t)}((1-\alpha_t)^4), & \text{otherwise}
    \end{cases}
\end{align*}
Again note that all derivatives above are w.r.t. $\Tilde{x}_t$.
\end{lemma}
\begin{proof}
See \cref{app:proof-lem-accl-tweedie}.
\end{proof}

The following lemma provides the correct order (in terms of $(1-\alpha_t)$) for all higher-order posterior centralized moments. In other words, this shows that $Q_{t-1|t}$ has nice Gaussian-like concentration.

\begin{lemma} \label{lem:tweedie_high_mom}
Fix $t \geq 1$ and $p \geq 2$. Let $\bm{a} = (a_1,\dots,a_p) \in [d]^p$ be a vector of indices of length $p$. Under the same conditions as in Lemma~\ref{lem:tweedie}, if $p$ is odd,
\begin{equation} \label{eq:tweedie_high_mom_odd}
    \E_{X_{t-1},X_t \sim Q_{t-1,t}}\sbrc{\prod_{i=1}^p (X_{t-1}^{a_i}-\mu_t^{a_i})} = \Tilde{O}\brc{(1-\alpha_t)^{\frac{p+3}{2}}},~\forall \bm{a} \in [d]^p.
\end{equation}
If $p$ is even,
\begin{equation} \label{eq:tweedie_high_mom_even}
    \E_{X_{t-1},X_t \sim Q_{t-1,t}}\sbrc{\prod_{i=1}^p (X_{t-1}^{a_i}-\mu_t^{a_i})} = \Tilde{O}((1-\alpha_t)^{\frac{p}{2}}),~\forall \bm{a} \in [d]^p.
\end{equation}
\end{lemma}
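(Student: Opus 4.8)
The plan is to reduce everything to the cumulant generating function that was already identified in \cref{lem:tweedie}, and then do a short combinatorial optimization over set partitions. Recall from \cref{lem:tweedie} and its proof that, for fixed $x_t$, $Q_{t-1|t}$ is an exponential family with natural parameter $\tilde x_t$ and sufficient statistic $X_{t-1}$, and that $\kappa(\tilde x_t)$ of \eqref{eq:def_kappa_x_tilde} equals, up to an additive constant independent of $\tilde x_t$, the log-partition function; hence the joint cumulants of $X_{t-1}$ under $Q_{t-1|t}$ are $\mathrm{cum}\brc{X_{t-1}^{a_1},\dots,X_{t-1}^{a_n}} = \partial^n_{\bm a}\kappa$, differentiation being with respect to $\tilde x_t$ (\cref{lem:tweedie} records this for $n\le 4$). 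Moreover, since $q_{t-1|t}(x_{t-1}|x_t) \propto \exp\brc{-\tfrac{\alpha_t}{2(1-\alpha_t)}\norm{x_{t-1}}^2}\exp\brc{\tilde x_t^\T x_{t-1}}\,q_{t-1}(x_{t-1})$, the Gaussian factor forces sub-Gaussian tails, so the moment generating function of $X_{t-1}$ is finite on all of $\mbR^d$; thus every moment exists, the centralized CGF $K(s):=\kappa(\tilde x_t+s)-\kappa(\tilde x_t)-s^\T\nabla\kappa(\tilde x_t)$ is analytic near $s=0$, and Fa\`a di Bruno applied to $\E_{X_{t-1}\sim Q_{t-1|t}}\sbrc{\prod_{i=1}^p(X_{t-1}^{a_i}-\mu_t^{a_i})}=\partial^p_{a_1\cdots a_p}e^{K(s)}\big|_{s=0}$ yields
\[ \E_{X_{t-1}\sim Q_{t-1|t}}\sbrc{\prod_{i=1}^p(X_{t-1}^{a_i}-\mu_t^{a_i})} = \sum_{\pi}\ \prod_{B\in\pi}\partial^{|B|}_{\bm a_B}\kappa(\tilde x_t), \]
the sum running over all partitions $\pi$ of $[p]$, with $\bm a_B$ the sub-tuple of indices $a_i,\ i\in B$. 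Since the first cumulant equals $\mu_t$ (\cref{lem:tweedie}), the first centralized cumulant vanishes, so only partitions all of whose blocks have size $\ge 2$ contribute.

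\textbf{Order of each cumulant.} For a block of size $2$, \cref{lem:tweedie} gives $\partial^2_{jk}\kappa=\tfrac{1-\alpha_t}{\alpha_t}\delta_{jk}+\tfrac{(1-\alpha_t)^2}{\alpha_t}\partial^2_{jk}\log q_t(x_t)=O(1-\alpha_t)$ for fixed $x_t$, using \cref{ass:regular-drv} and \eqref{eq:noise_schedule_cons_1}. For a block of size $n\ge 3$, the computation in \cref{lem:tweedie} extends verbatim: since $\kappa(\tilde x_t)=\log q_t\brc{\tfrac{1-\alpha_t}{\sqrt{\alpha_t}}\tilde x_t}+(\text{quadratic in }\tilde x_t)+\const$ and the quadratic term has vanishing $n$-th derivative, the chain rule gives $\partial^n_{\bm a}\kappa(\tilde x_t)=\brc{\tfrac{1-\alpha_t}{\sqrt{\alpha_t}}}^{n}\partial^n_{\bm a}\log q_t(x_t)=O\brc{(1-\alpha_t)^n}$, where the derivative is finite by \cref{lem:finite_der_log_q}, and the bound uses $\abs{\partial^n_{\bm a}\log q_t(x_t)}\le U_n(x_t)$ — a $T$-independent constant for fixed $x_t$ by \cref{ass:regular-drv} — together with \eqref{eq:noise_schedule_cons_1}. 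Hence each block $B$ contributes a factor of order $(1-\alpha_t)^{c(|B|)}$ with $c(2)=1$ and $c(n)=n$ for $n\ge 3$, so $\pi$ contributes order $(1-\alpha_t)^{\sum_{B\in\pi}c(|B|)}$.

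\textbf{Combinatorial optimization.} Let $n_2(\pi)$ be the number of size-$2$ blocks of $\pi$. Since $\sum_{B\in\pi}|B|=p$ and every block has size $\ge 2$, we get $\sum_{B\in\pi}c(|B|)=n_2(\pi)+\brc{p-2n_2(\pi)}=p-n_2(\pi)$, so the dominant partition is the one maximizing $n_2(\pi)$. If $p$ is even, a perfect matching attains $n_2=p/2$, giving exponent $p/2$. If $p$ is odd, the number $p-2n_2(\pi)$ of elements lying outside the size-$2$ blocks is a positive odd integer expressible as a sum of integers $\ge 3$, hence is at least $3$; therefore $n_2(\pi)\le (p-3)/2$, with equality realized by one size-$3$ block together with $(p-3)/2$ size-$2$ blocks, giving exponent $p-(p-3)/2=(p+3)/2$. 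Since there are finitely many partitions of $[p]$, summing the contributions yields \eqref{eq:tweedie_high_mom_even} for even $p$ and \eqref{eq:tweedie_high_mom_odd} for odd $p$, uniformly over $\bm a\in[d]^p$ (the hidden constant may depend on $d,p,x_t$ but not on $T$).

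\textbf{Expected main obstacle.} The only delicate points are (i) rigorously justifying that $\kappa$ is the CGF and that the moment--cumulant expansion is valid — this is essentially inherited from \cref{lem:tweedie}, \cref{lem:finite_der_log_q}, and the sub-Gaussian-tail observation above — and (ii) the bookkeeping showing that the $T$-dependence of $\partial^n_{\bm a}\log q_t(x_t)$ is negligible for fixed $x_t$, which is precisely what Part~1 of \cref{ass:regular-drv} supplies; the optimization in the last step is elementary.
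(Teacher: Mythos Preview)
Your proposal is correct and takes a genuinely different, more streamlined route than the paper's proof. The paper proceeds in two inductive stages: in Part~1 it builds a recursion $D_p = (x^{a_p}-\partial_{a_p}\kappa)D_{p-1}+\partial_{a_p}D_{p-1}$ and inductively verifies that each $D_p$ has the polynomial form $\prod_i(x^{a_i}-\partial_{a_i}\kappa)-\sum_{(B,C)}\sum_{b\in\mathrm{allpart}_{\ge 2}(B)}d_p(b,C)\,\bm\partial_b\kappa\prod_{c\in C}(x^{a_c}-\partial_{a_c}\kappa)$, which after integrating against $f\,e^{x^\T\tilde x_t-\kappa}$ expresses the $p$-th centralized moment as a weighted sum of $\bm\partial_b\kappa$ times lower-order centralized moments; in Part~2 it bounds the exponent of each $\bm\partial_b\kappa$ block-by-block and then runs a second induction on $p$ (from $k$ to $k+1,k+2$) through the bipartition $(B,C)$. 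In effect, Part~1 is an ab initio derivation of the moment--cumulant relation, and Part~2 replaces your one-shot optimization over partitions by a recursion that feeds lower-order moment rates back in. Your argument collapses both stages: you recognize $\kappa$ as the log-partition function of the canonical exponential family already displayed in the proof of \cref{lem:tweedie}, invoke the standard identity $\mathrm{cum}_n=\partial^n\kappa$ together with the moment--cumulant (Fa\`a di Bruno/exponential) formula, and then the exponent becomes the elementary minimization of $\sum_{B\in\pi}c(|B|)=p-n_2(\pi)$ over partitions into blocks of size $\ge 2$. This is shorter and more conceptual; the paper's route, by contrast, is fully self-contained and does not presuppose the moment--cumulant machinery, which may be why it was chosen. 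Both arrive at the same rates and rely on the same two inputs, namely $\partial^2\kappa=O(1-\alpha_t)$ and $\partial^n\kappa=O((1-\alpha_t)^n)$ for $n\ge 3$.
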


\begin{proof}
See \cref{app:proof_tweedie_high_mom}.
\end{proof}

\subsection{Step 5: Bounding term 3 -- Reverse-step Error}

We are now ready to assemble the respective moments into the final convergence rate. In the following lemma, we use the results in the previous lemmas to control the difference $\E_{X_{t-1} \sim Q_{t-1|t}} [\zeta'_{t,t-1}] - \E_{X_{t-1} \sim P'_{t-1|t}} [\zeta'_{t,t-1}]$ in \eqref{eq:rev_err_zeta}.

\begin{lemma} \label{lem:accl-Eq-Ep-zeta}
Suppose that \cref{ass:regular-drv-plus} holds and that $q_{t-1}$ exists. Then, 
\begin{multline*}
    \E_{X_t \sim Q_t} \brc{ \E_{X_{t-1} \sim Q_{t-1|t}} - \E_{X_{t-1} \sim P'_{t-1|t}} } [\zeta'_{t,t-1}] \\
    = \frac{(1-\alpha_t)^3}{3! \alpha_t^{3/2}} \sum_{i,j,k=1}^d \E_{X_t \sim Q_t} [\partial^3_{ijk} \log q_{t-1}(\mu_t(X_t)) \partial^3_{ijk} \log q_t(X_t)] + \Tilde{O}((1-\alpha_t)^4).
\end{multline*}
\end{lemma}

\begin{proof}
See \cref{app:proof-lem-accl-Eq-Ep-zeta}.
\end{proof}

Therefore, under \Cref{ass:cont,ass:regular-drv-plus} we combine \cref{lem:accl-Eq-Ep-zeta} and \eqref{eq:rev_err_zeta} and get
\begin{multline} \label{eq:accl_reverse_step_err}
    \sum_{t=1}^T \E_{X_{t-1},X_t \sim Q_{t-1|t}} \sbrc{\log \frac{q_{t-1|t}(X_{t-1}|X_t)}{p'_{t-1|t}(X_{t-1}|X_t)}} \\
    \lesssim (1-\alpha_t)^3 \sum_{i,j,k=1}^d \E_{X_t \sim Q_t} [\partial^3_{ijk} \log q_{t-1}(\mu_t(X_t)) \partial^3_{ijk} \log q_t(X_t)].
\end{multline}
This completes the proof of \cref{thm:accl-main}.

Before we end this section, we provide an upper bound of the reverse-step error when the conditional covariance of $P'_{t-1|t}$ is slightly perturbed (see \Cref{rmk:At_Bt_approx}).

\begin{corollary} \label{lem:accl-Eq-Ep-zeta-approx}
Suppose that \cref{ass:regular-drv-plus} holds and that $q_{t-1}$ exists. Suppose that the conditional covariance of $P'_{t-1|t}$ is slightly perturbed, which satisfies
\[ \Tilde{\Sigma}_t(x_t) = \frac{1-\alpha_t}{\alpha_t} \brc{I_d + A_t(x_t) + \Xi_t(x_t)}, \]
where $\Xi_t(X_t) = \Tilde{O}_{\calL^r(Q_t)}\brc{(1-\alpha_t)^2}$ for all $r \geq 1$. Then,
\begin{align*}
    &\sum_{t=1}^T \E_{X_{t-1},X_t \sim Q_{t-1|t}} \sbrc{\log \frac{q_{t-1|t}(X_{t-1}|X_t)}{p'_{t-1|t}(X_{t-1}|X_t)}}\\
    &\qquad \lesssim - (1-\alpha_t) \E_{X_t \sim Q_t} \Tr\brc{\brc{\nabla^2 \log q_{t-1}(\mu_t(X_t)) - \alpha_t \nabla^2 \log q_t(X_t) } \Xi_t(X_t)} \\
    &\qquad + (1-\alpha_t)^3 \sum_{i,j,k=1}^d \E_{X_t \sim Q_t} [\partial^3_{ijk} \log q_{t-1}(\mu_t(X_t)) \partial^3_{ijk} \log q_t(X_t)]\\
    &= \Tilde{O}\brc{\frac{1}{T^2}}.
\end{align*}
\end{corollary}

\begin{proof}
See \cref{app:proof_lem_accl-Eq-Ep-zeta-approx}.
\end{proof}

\section{Proof of \texorpdfstring{\cref{thm:main_non_smooth_accl}}{Corollary 1}}

Note that $q_1$ always exists and is analytic by \cref{lem:finite_der_log_q}. Therefore, it remains to upper-bound the mismatch between $Q_0$ and $Q_1$.
In the following lemma we provide such a common bound in Wasserstein distance, which is provided only for completeness.

\begin{lemma}\label{lem:wass}
For any $Q_0$,
\[ \rmW_2(Q_0,Q_1)^2 \leq (1-\alpha_1) (M_2+1) d. \]
\end{lemma}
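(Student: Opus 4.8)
The plan is to produce the bound by exhibiting an explicit coupling of $Q_0$ and $Q_1$ that comes directly from the forward process and then estimating its transport cost. Recall from \eqref{eq:def_forward_proc} that $Q_1$ is the law of $X_1 = \sqrt{\alpha_1}\,X_0 + \sqrt{1-\alpha_1}\,\bar W_1$, where $X_0 \sim Q_0$ and $\bar W_1 \sim \calN(0,I_d)$ is independent of $X_0$. The joint law of $(X_0,X_1)$ then has marginals exactly $Q_0$ and $Q_1$, so it belongs to $\Pi(Q_0,Q_1)$, and by the definition of the Wasserstein-2 distance in \eqref{eq:def_wass_2},
\[
\rmW_2(Q_0,Q_1)^2 \le \E\norm{X_0 - X_1}^2 = \E\norm{(1-\sqrt{\alpha_1})\,X_0 - \sqrt{1-\alpha_1}\,\bar W_1}^2 .
\]

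Next I would expand the squared norm. Since $\bar W_1$ has mean zero and is independent of $X_0$, the cross term vanishes, giving
\[
\rmW_2(Q_0,Q_1)^2 \le (1-\sqrt{\alpha_1})^2\,\E\norm{X_0}^2 + (1-\alpha_1)\,\E\norm{\bar W_1}^2 = (1-\sqrt{\alpha_1})^2\,\E\norm{X_0}^2 + (1-\alpha_1)\,d .
\]
Two elementary estimates then close the argument. First, under \cref{ass:m2}, $\E\norm{X_0}^2 = \Tr\big(\E[X_0 X_0^\T]\big) \le \Tr(M_2 I_d) = M_2 d$. Second, writing $1-\sqrt{\alpha_1} = (1-\alpha_1)/(1+\sqrt{\alpha_1})$ and using $\alpha_1 \in (0,1)$ yields $(1-\sqrt{\alpha_1})^2 \le (1-\alpha_1)^2 \le 1-\alpha_1$. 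Combining these,
\[
\rmW_2(Q_0,Q_1)^2 \le (1-\alpha_1) M_2 d + (1-\alpha_1) d = (1-\alpha_1)(M_2+1)\,d ,
\]
which is the claimed inequality.

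There is essentially no obstacle in this argument. The only points that need a word of care are that the proposed coupling is admissible — immediate from \eqref{eq:def_forward_proc} — and that $\E\norm{X_0}^2$ is finite so the bound is not vacuous, which is exactly what \cref{ass:m2} provides. Everything else is the one-line expansion above together with two scalar inequalities.
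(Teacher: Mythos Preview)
Your proof is correct and follows essentially the same approach as the paper: both use the natural coupling $(X_0,X_1)$ from the forward step, expand the squared norm using independence of $X_0$ and the Gaussian noise, bound $\E\norm{X_0}^2 \le M_2 d$ via \cref{ass:m2}, and use the scalar inequality $(1-\sqrt{\alpha_1})^2 \le 1-\alpha_1$ to conclude.
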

\begin{remark}
If $1-\alpha_1 = \delta$, this implies that
\[ \rmW_2(Q_0,Q_1)^2 \lesssim \delta d. \]
\end{remark}

\begin{proof}
See \cref{app:proof_lem_wass}.
\end{proof}
The proof of this corollary is thus complete.
A consequence of Lemma~\ref{lem:wass} is that, in order to obtain convergence guarantees for general distributions, one can view $1-\alpha_1$ as controlling the mismatch between $Q_0$ and $Q_1$ (in terms of the Wasserstein distance), and $1-\alpha_t,~\forall t \geq 2$ as controlling the mismatch between $Q_1$ and $\widehat{P}'_1$ (in terms of the KL-divergence).

\section{Auxiliary Proofs for \texorpdfstring{\cref{thm:accl-main,thm:main_non_smooth_accl}}{Theorem 1 and Corollary 1}}
In this section, we provide the proofs for those auxiliary lemmas in the proof of \cref{thm:accl-main,thm:main_non_smooth_accl}. 

\subsection{Proof of \texorpdfstring{\cref{lem:init_err}}{Lemma 3}}
\label{app:proof_init_err}

First, note that
\[ q_T(x_T) = \E_{X_0 \sim Q_0}[q_{T|0}(x_T|X_0)]. \]
Also note that the function $f(x) = x \log(x)$ is convex. Thus, by Jensen's inequality,
\begin{align*}
    \E_{X_T \sim Q_T} \sbrc{\log q_T(X_T)} &= \int \E_{X_0 \sim Q_0}[q_{T|0}(x_T|X_0)] \log \E_{X_0 \sim Q_0}[q_{T|0}(x_T|X_0)] \d x_T\\
    &\leq \int \E_{X_0 \sim Q_0}\sbrc{q_{T|0}(x_T|X_0) \log q_{T|0}(x_T|X_0)} \d x_T\\
    &= \E_{X_0 \sim Q_0}\sbrc{\int q_{T|0}(x_T|X_0) \log q_{T|0}(x_T|X_0) \d x_T}.
\end{align*}
Since $Q_{T|0}$ is conditional Gaussian $\calN(\sqrt{\Bar{\alpha}_T} x_0, (1-\Bar{\alpha}_T) I_d)$, its negative conditional entropy equals
\[ \int q_{T|0}(x_T|x_0) \log q_{T|0}(x_T|x_0) \d x_T = -\frac{d}{2}- \frac{d}{2} \log(2\pi (1-\Bar{\alpha}_T))\]
for any $x_0 \in \mbR^d$. On the other hand, since $P'_T = \calN(0,I_d)$,
\[ \E_{X_T \sim Q_T} \sbrc{\log p'_T(X_T)} = -\frac{d}{2} \log(2 \pi) - \frac{1}{2} \E_{X_T \sim Q_T} \norm{X_T}^2 \]
where
\begin{align*}
    \E_{X_T \sim Q_T} \norm{X_T}^2 &= \Bar{\alpha}_T \E_{X_0 \sim Q_0} \norm{X_0}^2 + (1-\Bar{\alpha}_T) \E_{\Bar{W}_T \sim \calN(0,I_d)} \norm{\Bar{W}_T}^2 \\
    &= \Bar{\alpha}_T \E_{X_0 \sim Q_0} \norm{X_0}^2 + (1-\Bar{\alpha}_T) d.
\end{align*}
Putting the two together,
\begin{align*}
&\E_{X_T \sim Q_T} \sbrc{\log \frac{q_T(X_T)}{p'_T(X_T)}} = \E_{X_T \sim Q_T} \sbrc{\log q_T(X_T)} - \E_{X_T \sim Q_T} \sbrc{\log p'_T(X_T)} \\
&\leq - \frac{d}{2} - \frac{d}{2} \log(2\pi (1-\Bar{\alpha}_T)) + \frac{d}{2} \log(2 \pi) + \frac{1}{2} \brc{ \Bar{\alpha}_T \E_{X_0 \sim Q_0} \norm{X_0}^2 + (1-\Bar{\alpha}_T) d}\\
&= \frac{1}{2} \Bar{\alpha}_T \E_{X_0 \sim Q_0} \norm{X_0}^2 - \frac{d \Bar{\alpha}_T}{2} - \frac{d}{2} \log(1-\Bar{\alpha}_T).
\end{align*}
When $T$ is large (and thus when $\Bar{\alpha}_T$ is small), the Taylor expansion w.r.t. $\Bar{\alpha}_T$ around 0 yields
\[ \log(1-\Bar{\alpha}_T) = - \Bar{\alpha}_T + O\brc{\Bar{\alpha}_T^2}. \]
Therefore,
\begin{align*}
    \E_{X_T \sim Q_T} \sbrc{\log \frac{q_T(X_T)}{p'_T(X_T)}} &\leq \frac{1}{2} \Bar{\alpha}_T \E_{X_0 \sim Q_0} \norm{X_0}^2 - \frac{d \Bar{\alpha}_T}{2} - \frac{d}{2} (-\Bar{\alpha}_T) + O\brc{\Bar{\alpha}_T^2}\\
    &\leq \frac{1}{2} \Bar{\alpha}_T M_2 d + O\brc{\Bar{\alpha}_T^2}.
\end{align*}

\subsection{Proof of \texorpdfstring{\cref{lem:accl-score-est}}{Lemma 4}}
\label{app:proof-lem-accl-score-est}

To start, note that both $P'_{t-1|t}$ and $\widehat{P}'_{t-1|t}$ are Gaussian (yet having different mean \textit{and variance}). Thus, for each $t = 1,\dots,T$,
\begin{align} \label{eq:accl-est-err-main}
    &\log \frac{p'_{t-1|t}(x_{t-1}|x_t)}{\widehat{p}'_{t-1|t}(x_{t-1}|x_t)} \nonumber\\
    &= \log \brc{\det(\Sigma_t)^{-\frac{1}{2}} } - \log \brc{\det(\widehat{\Sigma}_t)^{-\frac{1}{2}}} \nonumber\\
    &\qquad - \frac{1}{2} (x_{t-1} - \mu_t)^\T \Sigma_t^{-1} (x_{t-1} - \mu_t) + \frac{1}{2} (x_{t-1} - \widehat{\mu}_t)^\T \widehat{\Sigma}_t^{-1} (x_{t-1} - \widehat{\mu}_t) \nonumber\\
    &= \frac{1}{2} \brc{\log (\det (\widehat{\Sigma}_t)) - \log (\det (\Sigma_t))} + \frac{1}{2} (x_{t-1} - \mu_t)^\T (\widehat{\Sigma}_t^{-1} - \Sigma_t^{-1}) (x_{t-1} - \mu_t) \nonumber \\
    &\qquad + \frac{1}{2} (x_{t-1} - \widehat{\mu}_t)^\T \widehat{\Sigma}_t^{-1} (x_{t-1} - \widehat{\mu}_t) - \frac{1}{2} (x_{t-1} - \mu_t)^\T \widehat{\Sigma}_t^{-1} (x_{t-1} - \mu_t) \nonumber\\
    &= \frac{1}{2} \brc{\log (\det (\widehat{\Sigma}_t)) - \log (\det (\Sigma_t))} + \frac{1}{2} (x_{t-1} - \mu_t)^\T (\widehat{\Sigma}_t^{-1} - \Sigma_t^{-1}) (x_{t-1} - \mu_t) \nonumber \\
    &\qquad + \frac{1}{2} (\mu_t - \widehat{\mu}_t)^\T \widehat{\Sigma}_t^{-1} (x_{t-1}-\mu_t) + \frac{1}{2} (x_{t-1}-\mu_t)^\T \widehat{\Sigma}_t^{-1} (\mu_t-\widehat{\mu}_t) + \frac{1}{2} (\mu_t - \widehat{\mu}_t)^\T \widehat{\Sigma}_t^{-1} (\mu_t-\widehat{\mu}_t).
\end{align}
There are five terms in \eqref{eq:accl-est-err-main}. We first consider the third and the fourth term, for which we have
\begin{align*}
    &\E_{X_{t-1} \sim Q_{t-1|t}}\sbrc{(\mu_t - \widehat{\mu}_t)^\T \widehat{\Sigma}_t^{-1} (X_{t-1}-\mu_t)} = (\mu_t - \widehat{\mu}_t)^\T \widehat{\Sigma}_t^{-1} \E_{X_{t-1} \sim Q_{t-1|t}}\sbrc{X_{t-1}-\mu_t} = 0,\\
    &\E_{X_{t-1} \sim Q_{t-1|t}}\sbrc{(X_{t-1}-\mu_t)^\T \widehat{\Sigma}_t^{-1} (\mu_t-\widehat{\mu}_t)} = \E_{X_{t-1} \sim Q_{t-1|t}}\sbrc{X_{t-1}-\mu_t}^\T \widehat{\Sigma}_t^{-1} (\mu_t-\widehat{\mu}_t) = 0.
\end{align*}

Now consider the expectation of the last term in \eqref{eq:accl-est-err-main}. 
From the definition of $\widehat{\Sigma}_t$ in \eqref{eq:def_accl_Sigma_hat_t}, for small $1-\alpha_t$ we have $\widehat{\Sigma}_t \succ 0$, and we can define $\widehat{B}_t := I_d - (I_d + (1-\alpha_t) H_t )^{-1}$, and thus $\widehat{\Sigma}_t^{-1} = \frac{\alpha_t}{1-\alpha_t} (I_d-\widehat{B}_t)$. From Taylor expansion, we have $\widehat{B}_t = (1-\alpha_t) H_t + \Tilde{O}_{\calL^p(Q_t)}((1-\alpha_t)^2)$.
Thus, for each $t \geq 1$,
\begin{align*}
    &\E_{X_{t} \sim Q_{t}}\sbrc{(\mu_t(X_t) - \widehat{\mu}_t (X_t))^\T \widehat{\Sigma}_t^{-1} (X_t) (\mu_t (X_t)-\widehat{\mu}_t (X_t))} \\
    &= (1-\alpha_t) \E_{X_{t} \sim Q_{t}}\sbrc{(s_t(X_t)-\nabla \log q_t(X_t))^\T (I_d-\widehat{B}_t (X_t)) (s_t(X_t)-\nabla \log q_t(X_t))}\\
    &= (1-\alpha_t) \E_{X_{t} \sim Q_{t}}\sbrc{(s_t(X_t)-\nabla \log q_t(X_t))^\T (I_d + (1-\alpha_t) H_t(X_t) )^{-1} (s_t(X_t)-\nabla \log q_t(X_t))}\\
    &\lesssim (1-\alpha_t) \E_{X_{t} \sim Q_{t}}\norm{s_t(X_t)-\nabla \log q_t(X_t)}^2
\end{align*}
where the last line follows from the regularity condition on $H_t$ in \Cref{ass:accl-score-hess}.
Therefore, the expectation of the last term in \eqref{eq:accl-est-err-main} can be bounded as
\begin{align} \label{eq:accl-est-err-res-term1}
    &\sum_{t=1}^T \E_{X_{t} \sim Q_{t}}\sbrc{(\mu_t(X_t) - \widehat{\mu}_t (X_t))^\T \widehat{\Sigma}_t^{-1} (X_t) (\mu_t (X_t)-\widehat{\mu}_t (X_t))} \nonumber \\
    &\lesssim \sum_{t=1}^T (1-\alpha_t) \E_{X_{t} \sim Q_{t}}\norm{s_t(X_t) - \nabla \log q_t(X_t)}^2 \nonumber \\
    &\lesssim (\log T) \eps^2,
\end{align}
where the last line follows by the score estimation error in \cref{ass:accl-score-hess}. 

Next we turn to the first two terms in \eqref{eq:accl-est-err-main}. First, note that for all $i,j \in [d]$, we have $(1-\alpha_t) H_t^{ij}(X_t) = \Tilde{O}_{\calL^p(Q_t)}(1-\alpha_t)$ under
\Cref{ass:accl-score-hess}. 
Now, the first term of \eqref{eq:accl-est-err-main} is given by
\[ \log (\det (\widehat{\Sigma}_t)) - \log (\det (\Sigma_t)) = \log (\det (I_d + (1-\alpha_t) H_t)) - \log (\det (I_d + (1-\alpha_t) \nabla^2 \log q_t(x_t))). \]
When $(1-\alpha_t)$ is small, we can use Taylor expansion for the functions $\det(\cdot)$ and $\log(\cdot)$ to get
\begin{align*}
    & \log (\det (I_d + (1-\alpha_t) H_t))\\
    &= \log \brc{ 1 + (1-\alpha_t) \Tr(H_t) + \frac{(1-\alpha_t)^2}{2} (\Tr(H_t)^2 - \Tr(H_t^2)) + \Tilde{O}_{\calL^p(Q_t)}((1-\alpha_t)^3) }\\
    &= (1-\alpha_t) \Tr(H_t) + \frac{(1-\alpha_t)^2}{2} (\Tr(H_t)^2 - \Tr(H_t^2)) - \frac{(1-\alpha_t)^2}{2} \Tr(H_t)^2 + \Tilde{O}_{\calL^p(Q_t)}((1-\alpha_t)^3)\\
    &= (1-\alpha_t) \Tr(H_t) - \frac{(1-\alpha_t)^2}{2} \Tr(H_t^2) + \Tilde{O}_{\calL^p(Q_t)}((1-\alpha_t)^3).
\end{align*}
Similar expression can be obtained for $\log (\det (I_d + (1-\alpha_t) \nabla^2 \log q_t(x_t)))$. Thus, the first term in \eqref{eq:accl-est-err-main} is equal to
\begin{align*}
    &\log (\det (\widehat{\Sigma}_t)) - \log (\det (\Sigma_t))\\
    &= (1-\alpha_t) \brc{\Tr(H_t) - \Tr(\nabla^2 \log q_t(x_t))} - \frac{(1-\alpha_t)^2}{2} \sbrc{\Tr(H_t^2) - \Tr((\nabla^2 \log q_t(x_t))^2)}\\
    &\qquad + \Tilde{O}_{\calL^p(Q_t)}((1-\alpha_t)^3).
\end{align*}
For the second term in \eqref{eq:accl-est-err-main}, we first take expectation over $x_{t-1}$ and get
\[ \E_{X_{t-1} \sim Q_{t-1|t}}\sbrc{(X_{t-1} - \mu_t)^\T (\widehat{\Sigma}_t^{-1} - \Sigma_t^{-1}) (X_{t-1} - \mu_t)} = \Tr\brc{(\widehat{\Sigma}_t^{-1} - \Sigma_t^{-1}) \Sigma_t}. \]
To proceed, note that
\begin{equation} \label{eq:sum_inv_approx}
    (I_d + (1-\alpha_t)H_t )^{-1} \stackrel{(iii)}{=} I_d - (1-\alpha_t)H_t + (1-\alpha_t)^2 H_t^2 + \Tilde{O}_{\calL^p(Q_t)}((1-\alpha_t)^3).
\end{equation}
To see $(iii)$, we write $S_t$ as the true inverse of $I_d + (1-\alpha_t)H_t$. Its existence is guaranteed if $(1-\alpha_t)$ is small. Since
\[ (I_d + (1-\alpha_t)H_t )(I_d - (1-\alpha_t)H_t + (1-\alpha_t)^2 H_t^2) = I_d + \Tilde{O}_{\calL^p(Q_t)}((1-\alpha_t)^3), \]
we have
\[ (I_d + (1-\alpha_t)H_t )(I_d - (1-\alpha_t)H_t + (1-\alpha_t)^2 H_t^2 - S_t) = \Tilde{O}_{\calL^p(Q_t)}((1-\alpha_t)^3) \]
which implies that $S_t = I_d - (1-\alpha_t)H_t + (1-\alpha_t)^2 H_t^2 + \Tilde{O}_{\calL^p(Q_t)}((1-\alpha_t)^3)$. This shows the validity of $(iii)$. Therefore,
\begin{align*}
    &\Tr\brc{(\widehat{\Sigma}_t^{-1} - \Sigma_t^{-1}) \Sigma_t} = \Tr(\widehat{\Sigma}_t^{-1} \Sigma_t - I_d)\\
    &= \Tr\bigg(\sbrc{I_d - (1-\alpha_t)H_t + (1-\alpha_t)^2 H_t^2 + \Tilde{O}_{\calL^p(Q_t)}((1-\alpha_t)^3)} \\
    &\qquad \sbrc{I_d + (1-\alpha_t) \nabla^2 \log q_t(x_t)} - I_d \bigg)\\
    &= (1-\alpha_t) \sbrc{\Tr(\nabla^2 \log q_t(x_t)) - \Tr(H_t)} \\
    &\qquad + (1-\alpha_t)^2 \sbrc{\Tr(H_t^2) - \Tr(H_t \nabla^2 \log q_t(x_t))} + \Tilde{O}_{\calL^p(Q_t)}((1-\alpha_t)^3).
\end{align*}
Adding this to the first term of \eqref{eq:accl-est-err-main} and taking expectation over $X_t \sim Q_t$ (noting \cref{ass:regular-drv-plus} here), we get
\begin{align*}
    &\E_{X_{t-1},X_t \sim Q_{t-1,t}} \Big[\brc{\log (\det (\widehat{\Sigma}_t(X_t))) - \log (\det (\Sigma_t(X_t)))} \\
    &\qquad + (X_{t-1} - \mu_t(X_t))^\T (\widehat{\Sigma}_t^{-1}(X_t) - \Sigma_t^{-1}(X_t)) (X_{t-1} - \mu_t(X_t)) \Big]\\
    &= \frac{(1-\alpha_t)^2}{2} \E_{X_t \sim Q_t} \sbrc{ \Tr(H_t(X_t)^2) - 2 \Tr(H_t(X_t) \nabla^2 \log q_t(X_t)) + \Tr((\nabla^2 \log q_t(X_t))^2) }\\
    &\qquad + \Tilde{O}((1-\alpha_t)^3) \\
    &\stackrel{(iv)}{=} \frac{(1-\alpha_t)^2}{2} \E_{X_t \sim Q_t} \norm{H_t(X_t) - \nabla^2 \log q_t(X_t)}_F^2 + \Tilde{O}((1-\alpha_t)^3),
\end{align*}
where $(iv)$ follows because for two symmetric matrices $A$ and $B$,
\begin{align*}
    &\Tr(A^2) - 2 \Tr(A B) + \Tr(B^2) = \Tr(A^2) - \Tr(A B) - \Tr(B A) + \Tr(B^2)\\
    &= \Tr((A-B)(A-B)) = \Tr((A-B)^\T(A-B)) = \norm{A-B}_F^2.
\end{align*}
Thus, following from \cref{ass:accl-score-hess},
\begin{multline} \label{eq:accl-est-err-res-term2}
    \sum_{t=1}^T \E_{X_{t-1},X_t \sim Q_{t-1,t}} \Big[\brc{\log (\det (\widehat{\Sigma}_t(X_t))) - \log (\det (\Sigma_t(X_t)))} \\
    + (X_{t-1} - \mu_t(X_t))^\T (\widehat{\Sigma}_t^{-1}(X_t) - \Sigma_t^{-1}(X_t)) (X_{t-1} - \mu_t(X_t)) \Big] \lesssim \frac{\log^2 T}{T} \eps_H^2.
\end{multline}
Here $\eps_H$ is the Hessian estimation error. Combining \eqref{eq:accl-est-err-res-term1} and \eqref{eq:accl-est-err-res-term2} yields the desired result for the accelerated estimation error, which is in the order $\Tilde{O}(1/T^{2})$.

\subsection{Proof of \texorpdfstring{\cref{cor:accl-score-est-approx}}{Corollary 2}}
\label{app:cor-accl-score-est-approx}

Given the perturbed 
$\Tilde{\Sigma}_t$ in \eqref{eq:def_accl_Sigma_tilde_t}, following the definition in \eqref{eq:def-accl-At-Bt}, we define, $\forall p \geq 1$,
\begin{align*}
    \Tilde{A}_t &:= (1-\alpha_t) \nabla^2 \log q_t(x_t) + \frac{(1-\alpha_t)^2}{4} (\nabla^2 \log q_t(x_t))^2\\
    &\quad = (1-\alpha_t) \brc{\nabla^2 \log q_t(x_t) + \frac{1-\alpha_t}{4} \nabla^2 \log q_t(x_t)},\\
    \Tilde{B}_t &:= I_d - \frac{1-\alpha_t}{\alpha_t} \Tilde{\Sigma}_t^{-1} = I_d - \Tilde{A}_t + \Tilde{A}_t^2 + \Tilde{O}_{\calL^p(Q_t)}((1-\alpha_t)^3) \\
    \Tilde{H}_t &:= H_t + \frac{1-\alpha_t}{4} H_t.
\end{align*}
Note that under \Cref{ass:accl-score-hess},
\[ (1-\alpha_t) \norm{\Tilde{H}_t} \lesssim (1-\alpha_t) \norm{H_t} + (1-\alpha_t)^2 \norm{H_t}^2 = \Tilde{O}_{\calL^r(Q_t)}(1-\alpha_t),~\forall r \geq 1. \]
Then, the rest of the proof \Cref{lem:accl-score-est} still holds with $\nabla^2 \log q_t(x_t)$ and $H_t$ replaced by $\nabla^2 \log q_t(x_t) + \frac{1-\alpha_t}{4} \nabla^2 \log q_t(x_t)$ and $\Tilde{H}_t$. The proof is complete by noting that
\begin{align*}
    &\E_{X_t \sim Q_t} \norm{\Tilde{H}_t(X_t) - \brc{\nabla^2 \log q_t(X_t) + \frac{1-\alpha_t}{4} \nabla^2 \log q_t(X_t)}}_F^2 \\
    &\lesssim (1 + (1-\alpha_t)) \E_{X_t \sim Q_t} \norm{H_t(X_t) - \nabla^2 \log q_t(X_t)}_F^2 \\
    &\lesssim \eps_H^2.
\end{align*}

\subsection{Proof of \texorpdfstring{\cref{lem:accl-rev-err-tilt-factor}}{Lemma 5}}
\label{app:proof-lem-accl-rev-err-tilt-factor}

By Bayes' rule, for any $x_{t-1}$ given fixed $x_t$, we have
\begin{align*}
    &q_{t-1|t}(x_{t-1}|x_t)\\
    &\propto q_{t-1}(x_{t-1}) \exp\brc{-\frac{\norm{x_t - \sqrt{\alpha_t} x_{t-1}}^2}{2(1-\alpha_t)}} \\
    &\propto q_{t-1}(x_{t-1}) p'_{t-1|t}(x_{t-1}|x_t) \exp\brc{\frac{1}{2} (x_{t-1}-\mu_t)^\T \Sigma_t^{-1} (x_{t-1}-\mu_t) - \frac{ \norm{x_{t-1}-x_t/\sqrt{\alpha_t}}^2}{2 (1-\alpha_t)/\alpha_t}}\\
    &= q_{t-1}(x_{t-1}) p'_{t-1|t}(x_{t-1}|x_t) \exp\brc{\frac{\alpha_t}{2 (1-\alpha_t)} (x_{t-1}-\mu_t)^\T (I_d - B_t) (x_{t-1}-\mu_t) - \frac{ \norm{x_{t-1}-x_t/\sqrt{\alpha_t}}^2}{2 (1-\alpha_t)/\alpha_t}}\\
    &\qquad \text{(by \cref{eq:def-accl-At-Bt})}\\
    &\propto p'_{t-1|t}(x_{t-1}|x_t) \exp\brc{\zeta_{t,t-1}(x_t,x_{t-1}) - \frac{\alpha_t}{2 (1-\alpha_t)} (x_{t-1}-\mu_t)^\T B_t (x_{t-1}-\mu_t)},
\end{align*}
where the last line follows from the definition of $\zeta_{t,t-1}(x_t,x_{t-1})$ in \eqref{eq:def_zeta}.
Now, with the definition of $\zeta'_{t,t-1}(x_t,x_{t-1})$ in \eqref{eq:def_accl_zeta_prime}, we have
\[ q_{t-1|t}(x_{t-1}|x_t) = \frac{p'_{t-1|t}(x_{t-1}|x_t) e^{\zeta'_{t,t-1}(x_t,x_{t-1})}}{\E_{X_{t-1}\sim P'_{t-1|t}}[e^{\zeta'_{t,t-1}(x_t,X_{t-1})}]}. \]

\subsection{Proof of \texorpdfstring{\cref{lem:finite_der_log_q}}{Lemma 6}}
\label{app:proof_lem_finite_der_log_q}

Recall \cref{eq:forward_proc_agg}.
Let $\Tilde{Q}_0$ denote the distribution of $\sqrt{\Bar{\alpha}_t} x_0$, and let $g(z)$ denote the p.d.f. (w.r.t. the Lebesgue measure) of the distribution of $\sqrt{1-\Bar{\alpha}_t} \Bar{w}_t$. Note that $g$ is a scaled version of the unit Gaussian p.d.f., and $\int_{z \in \mbR^d}  g(z) \d z = 1 < \infty$.
Now, for any event $A \subseteq \mathcal{B}(\lambda)$,
\[ Q_t(A) = \int_{x \in A} \int_{\Tilde{x}_0 \in \mbR^d} g(x-\Tilde{x}_0) \d \Tilde{Q}_0(\Tilde{x}_0) \d x = \int_{\Tilde{x}_0 \in \mbR^d} \brc{\int_{x \in A}  g(x-\Tilde{x}_0) \d x} \d \Tilde{Q}_0(\Tilde{x}_0) \]
by Fubini's theorem. If $A$ has Lebesgue measure 0, by continuity of $g(x)$ we get $\int_{x \in A}  g(x-\Tilde{x}_0) \d x = 0$, and thus $Q_t(A) = 0$. This shows that $Q_t$ is absolutely continuous w.r.t. the Lebesgue measure, and its p.d.f. exists, denoted as $q_t$.

Now, since any order of derivative of the Gaussian p.d.f. is bounded away from infinity and $\Tilde{Q}_0$ is a probability measure, we can invoke the dominated convergence theorem here to change the order of derivative and integral as
\begin{equation} \label{eq:finite_der_log_q_change_int_der}
    \partial^k_{\bm{a}} q_t(x) = \partial^k_{\bm{a}} \int_{\Tilde{x}_0 \in \mbR^d} g(x-\Tilde{x}_0) \d \Tilde{Q}_0(\Tilde{x}_0) = \int_{\Tilde{x}_0 \in \mbR^d} \partial^k_{\bm{a}} g(x-\Tilde{x}_0) \d \Tilde{Q}_0(\Tilde{x}_0).
\end{equation}
Thus, for any $k \geq 1$ and any vector of indices $\bm{a} \in [d]^k$, we have
\[ \abs{\partial^k_{\bm{a}} q_t(x)} \leq \sup_{x \in \mbR^d} \abs{\partial^k_{\bm{a}} g(x)} \int_{\Tilde{x}_0 \in \mbR^d} \d \Tilde{Q}_0(\Tilde{x}_0) = \sup_{x \in \mbR^d} \abs{\partial^k_{\bm{a}} g(x)} < \infty. \]
This also implies that the Taylor term $\abs{T_k(q_t,x,\mu)} < \infty$ for any $x$ and $\mu$, and
\begin{align*}
    q_t(x) &= \int_{\Tilde{x}_0 \in \mbR^d} g(x-\Tilde{x}_0) \d \Tilde{Q}_0(\Tilde{x}_0) \stackrel{(i)}{=} \int_{\Tilde{x}_0 \in \mbR^d} \lim_{p \to \infty} \sum_{k=0}^p T_k(g(x-\Tilde{x}_0),x,\mu) \d \Tilde{Q}_0(\Tilde{x}_0)\\
    &\stackrel{(ii)}{=} \lim_{p \to \infty} \int_{\Tilde{x}_0 \in \mbR^d} \sum_{k=0}^p T_k(g(x-\Tilde{x}_0),x,\mu) \d \Tilde{Q}_0(\Tilde{x}_0)\\
    &\stackrel{(iii)}{=} \lim_{p \to \infty} \sum_{k=0}^p T_k(q_t,x,\mu)
\end{align*}
where $(i)$ follows because (scaled) Gaussian density is analytic, $(ii)$ follows from dominated convergence theorem and the fact that $g$ is a Gaussian density and has an upper bound independent of $\Tilde{x}_0$, and $(iii)$ follows from \eqref{eq:finite_der_log_q_change_int_der}. This shows that $q_t$ is analytic.

Finally, since $\partial^{k}_{\bm{a}} \log q_{t}$ is a smooth function of $q_t, \partial^{1} q_{t},\dots,\partial^{k} q_{t}$, we have $\partial^{k}_{\bm{a}} \log q_{t} (x_t) < \infty$ (possibly depending on $T$) for all $k \geq 1$ and fixed (finite) $x_t\in \mbR^d$. Also, $\log q_t$ is analytic because $\log(\cdot)$ is analytic and $q_t(x_t) > 0,~\forall x_t \in \mbR^d$.

\subsection{Proof of \texorpdfstring{\cref{lem:accl-Ep-moments}}{Lemma 7}}
\label{app:proof-lem-accl-Ep-moments}

The result follows directly from Isserlis's Theorem, which says that
\[ \E\sbrc{\prod_{i=1}^n Z_i} = \sum_{p \in P_n^2} \prod_{\{i,j\}\in p} \E[Z_i Z_j] = \sum_{p \in P_n^2} \prod_{\{i,j\}\in p} \Cov(Z_i, Z_j) \]
since each $Z_i$ is centered. Here $P_n^2$ is the set that contains all distinct size-2 partitions of $[n]$. For example, $P_4^2 = \{(\{1,2\},\{3,4\}), (\{1,3\},\{2,4\}), (\{1,4\},\{2,3\})\}$. Thus, since $A_t = \Tilde{O}_{\calL^p(Q)}(1-\alpha_t)$ under \cref{ass:regular-drv-plus},
\begin{align*}
    \E\sbrc{\prod_{i=1}^n Z_i} &= 0,~\text{if $n$ is odd}\\
    \E\sbrc{\prod_{i=1}^n Z_i} &= \Tilde{O}_{\calL^p(Q_t)}\brc{\brc{\frac{1-\alpha_t}{\alpha_t}}^{\frac{n}{2}}} = \Tilde{O}_{\calL^p(Q_t)}\brc{(1-\alpha_t)^{\frac{n}{2}}},~\text{if $n$ is even}.
\end{align*}
More specifically, following from Isserlis's Theorem, the fourth moment is
\begin{multline*}
    \E[Z_i Z_j Z_k Z_l] =    \Cov(Z_i, Z_j) \Cov(Z_k, Z_l) + \\
    \Cov(Z_i, Z_k) \Cov(Z_j, Z_l) + \Cov(Z_i, Z_l) \Cov(Z_j, Z_k),~\forall i,j,k,l \in [d].
\end{multline*}
Here $\Cov(Z_i, Z_j) = \frac{1-\alpha_t}{\alpha_t} (\ind\cbrc{i=j} + (1-\alpha_t) A^{ij})$. The fourth moment result follows immediately by plugging into the formula. Turning to the sixth moment, we note that we are interested only in the coefficients for the terms that grow at a rate $\Tilde{O}_{\calL^p(Q_t)}((1-\alpha_t)^3)$. Since the sixth moment consists of sum of product terms in which three covariance matrices are multiplied (giving us a rate at least $\Tilde{O}_{\calL^p(Q_t)}((1-\alpha_t)^3)$), at least one product term in the sum must take covariance values only on the diagonal of the matrix. Therefore, only $\E[Z_i^6]$, $\E[Z_i^4 Z_j^2]$, and $\E[Z_i^2 Z_j^2 Z_k^2]$ with $i,j,k$ all differ satisfy this requirement, and we immediately get the desired result from Isserlis's Theorem.

\subsection{Proof of \texorpdfstring{\cref{lem:tweedie}}{Lemma 8}}
\label{app:proof_lem_tweedie}

We first fix $x_t$ and will take expectation at the end. Note that $q_{t|t-1}(x_t|x_{t-1}) = \frac{1}{(2 \pi (1-\alpha_t))^{d/2}} \exp\brc{-\frac{\norm{x_t - \sqrt{\alpha_t} x_{t-1}}^2}{2 (1-\alpha_t)}}$. Following from the idea of Tweedie \cite{tweedie2011efron}, we have
\begin{align} \label{eq:q_reverse_tweedie}
    &q_{t-1|t}(x_{t-1}|x_t) \nonumber \\
    &= \frac{q_{t-1}(x_{t-1})}{q_t(x_t)} q_{t|t-1}(x_t|x_{t-1}) \nonumber\\
    &= \frac{q_{t-1}(x_{t-1})}{q_t(x_t)} q_{t|t-1}(x_t|0) \exp\brc{\frac{\sqrt{\alpha_t}}{1-\alpha_t} x_t^\T x_{t-1} - \frac{\alpha_t}{2 (1-\alpha_t)} \norm{x_{t-1}}^2} \nonumber\\
    &= \brc{q_{t-1}(x_{t-1}) e^{- \frac{\alpha_t}{2 (1-\alpha_t)} \norm{x_{t-1}}^2}} \exp\brc{\frac{\sqrt{\alpha_t}}{1-\alpha_t} x_t^\T x_{t-1}  - \log q_t(x_t) + \log q_{t|t-1}(x_t|0)} \nonumber\\
    &=: f(x_{t-1}) \exp\brc{x_{t-1}^\T \Tilde{x}_t  - \kappa(\Tilde{x}_t)}
\end{align}
where we have used the definitions of $\Tilde{x}_t$ and $\kappa(\Tilde{x}_t)$ in \eqref{eq:def_kappa_x_tilde}. This shows that $x_{t-1}$ is a conditional exponential family given $\Tilde{x}_t$.
Thus, the first moment can be found as (cf. Prop.~11.1 in \cite{moulin-veeravalli-2018})
\begin{align*}
    0 &= \nabla_{\Tilde{x}_t} \int q_{t-1|t}(x_{t-1}|x_t) \d x_{t-1} = \nabla_{\Tilde{x}_t} \int f(x_{t-1}) \exp\brc{x_{t-1}^\T \Tilde{x}_t  - \kappa(\Tilde{x}_t)} \d x_{t-1}\\
    &= \int f(x_{t-1}) \nabla_{\Tilde{x}_t} \exp\brc{x_{t-1}^\T \Tilde{x}_t  - \kappa(\Tilde{x}_t)} \d x_{t-1}\\
    &= \int f(x_{t-1}) \exp\brc{x_{t-1}^\T \Tilde{x}_t  - \kappa(\Tilde{x}_t)} \brc{x_{t-1} - \nabla_{\Tilde{x}_{t}} \kappa(\Tilde{x}_t)} \d x_{t-1} \\
    &= \int f(x_{t-1}) \exp\brc{x_{t-1}^\T \Tilde{x}_t  - \kappa(\Tilde{x}_t)} x_{t-1} \d x_{t-1} - \nabla_{\Tilde{x}_{t}} \kappa(\Tilde{x}_t) 
\end{align*}
which implies that 
\begin{equation} \label{eq:tweedie_proof_first_moment}
    \E_{X_{t-1} \sim Q_{t-1|t}}\sbrc{X_{t-1}} = \nabla \kappa.
\end{equation}

For the second moment,
\begin{align*}
    0 &= \partial^2_{ij} \int q_{t-1|t}(x_{t-1}|x_t) \d x_{t-1}\\
    &= \int f(x_{t-1}) \frac{\partial}{\partial \Tilde{x}_{t}^{j}} \Big( \exp\brc{x_{t-1}^\T \Tilde{x}_t  - \kappa(\Tilde{x}_t)} \brc{x_{t-1}^i - \partial_i \kappa(\Tilde{x}_t)} \Big) \d x_{t-1}\\
    &= \int f(x_{t-1}) \exp\brc{x_{t-1}^\T \Tilde{x}_t  - \kappa(\Tilde{x}_t)} \brc{(x_{t-1}^i - \partial_{i} \kappa(\Tilde{x}_t)) (x_{t-1}^j - \partial_{j} \kappa(\Tilde{x}_t)) - \partial^2_{ij} \kappa(\Tilde{x}_t)} \d x_{t-1}
\end{align*}
which yields
\begin{equation} \label{eq:tweedie_proof_second_moment}
    \E_{X_{t-1} \sim Q_{t-1|t}}\sbrc{(X_{t-1}-\mu_t)(X_{t-1}-\mu_t)^\T} = \nabla^2 \kappa = \frac{1-\alpha_t}{\alpha_t} I_d + \frac{(1-\alpha_t)^2}{\alpha_t} \nabla^2 \log q_t(x_t).
\end{equation}

Below, we write $x = x_{t-1}$ and $\kappa = \kappa(\Tilde{x}_t)$ for brevity. We remind readers that all derivatives are w.r.t. $\Tilde{x}_t$ instead of $x = x_{t-1}$. For the third moment,
\[ 0 = \partial^3_{ijk} \int q_{t-1|t} \d x =: \int f(x) \exp\brc{x^\T \Tilde{x}_t  - \kappa} D_3(x,\Tilde{x}_t) \d x \]
where
\begin{align} \label{eq:def_tweedie_d3}
    D_3(x,\Tilde{x}_t) &= \exp\brc{-x^\T \Tilde{x}_t+\kappa} \partial_k \Big( \exp\brc{x^\T \Tilde{x}_t  - \kappa} \brc{(x^i - \partial_{i} \kappa) (x^j - \partial_{j} \kappa) - \partial^2_{ij} \kappa} \Big) \nonumber\\
    &=  (x^k - \partial_k \kappa) \brc{(x^i - \partial_{i} \kappa) (x^j - \partial_{j} \kappa) - \partial^2_{ij} \kappa} \nonumber\\
    &\qquad + (- \partial^2_{ik} \kappa) (x^j - \partial_{j} \kappa) + (- \partial^2_{jk} \kappa) (x^i - \partial_{i} \kappa) - \partial^3_{ijk} \kappa.
\end{align}
Now, for any function $\mathrm{fn}(\Tilde{x}_t)$ and $1 \leq i \leq d$,
\[ \int f(x) \exp\brc{x^\T \Tilde{x}_t  - \kappa} \mathrm{fn}(\Tilde{x}_t) (x^i - \partial_{i} \kappa) \d x = 0 \]
by the first moment result \eqref{eq:tweedie_proof_first_moment}. Thus, we get
\[ \E_{X_{t-1} \sim Q_{t-1|t}}\sbrc{(X_{t-1}^i-\mu_t^i)(X_{t-1}^j-\mu_t^j)(X_{t-1}^k-\mu_t^k)} = \partial^3_{ijk} \kappa, \]
and by \cref{ass:regular-drv-plus}, $\E_{X_{t} \sim Q_{t}} [\partial^3_{ijk} \kappa] = \Tilde{O}((1-\alpha_t)^3)$.

For the fourth moment, we have
\[ 0 = \partial^4_{ijkl} \int q_{t-1|t} \d x =: \int f(x) \exp\brc{x^\T \Tilde{x}_t  - \kappa} D_4(x,\Tilde{x}_t) \d x \]
where
\begin{align} \label{eq:def_tweedie_d4}
    D_4(x,\Tilde{x}_t) &= \exp\brc{-x^\T \Tilde{x}_t+\kappa} \partial_l \Big(  \exp\brc{x^\T \Tilde{x}_t-\kappa} ( (x^i - \partial_{i} \kappa) (x^j - \partial_{j} \kappa) (x^k - \partial_k \kappa) \nonumber\\
    &\qquad - \partial^2_{ij} \kappa (x^k - \partial_k \kappa) - \partial^2_{ik} \kappa (x^j - \partial_{j} \kappa) - \partial^2_{jk} \kappa (x^i - \partial_{i} \kappa) - \partial^3_{ijk} \kappa ) \Big) \nonumber\\
    &=  (x^i - \partial_{i} \kappa) (x^j - \partial_{j} \kappa) (x^k - \partial_k \kappa) (x^l - \partial_l \kappa) + \partial_l \Big((x^i - \partial_{i} \kappa) (x^j - \partial_{j} \kappa) (x^k - \partial_k \kappa)\Big) \nonumber\\
    &\qquad - \partial^2_{ij} \kappa (x^k - \partial_k \kappa) (x^l - \partial_l \kappa) - \partial^3_{ijl} \kappa (x^k - \partial_k \kappa) + \partial^2_{ij} \kappa \partial^2_{kl} \kappa \nonumber\\
    &\qquad - \partial^2_{ik} \kappa (x^j - \partial_{j} \kappa) (x^l - \partial_l \kappa) - \partial^3_{ikl} \kappa (x^j - \partial_j \kappa) + \partial^2_{ik} \kappa \partial^2_{jl} \kappa \nonumber\\
    &\qquad - \partial^2_{jk} \kappa (x^i - \partial_{i} \kappa) (x^l - \partial_l \kappa) - \partial^3_{jkl} \kappa (x^i - \partial_i \kappa) + \partial^2_{jk} \kappa \partial^2_{il} \kappa \nonumber\\
    &\qquad - \partial^3_{ijk} \kappa (x^l - \partial_l \kappa) - \partial^4_{ijkl} \kappa
\end{align}
and
\begin{align*}
    &\partial_l \Big((x^i - \partial_{i} \kappa) (x^j - \partial_{j} \kappa) (x^k - \partial_k \kappa)\Big)\\
    &= - \partial^2_{il} \kappa (x^j - \partial_{j} \kappa) (x^k - \partial_k \kappa) - \partial^2_{jl} \kappa (x^i - \partial_{i} \kappa) (x^k - \partial_k \kappa) - \partial^2_{kl} \kappa (x^i - \partial_{i} \kappa) (x^j - \partial_{j} \kappa).
\end{align*}
Using the first and second moment results in \eqref{eq:tweedie_proof_first_moment} and \eqref{eq:tweedie_proof_second_moment}, we get
\begin{multline*}
    \E_{X_{t-1} \sim Q_{t-1|t}}\sbrc{(X_{t-1}^i-\mu_t^i)(X_{t-1}^j-\mu_t^j)(X_{t-1}^k-\mu_t^k)(X_{t-1}^l-\mu_t^l)} = \\
    (\partial^2_{ij} \kappa)(\partial^2_{kl} \kappa) + (\partial^2_{ik} \kappa)(\partial^2_{jl} \kappa) + (\partial^2_{il} \kappa)(\partial^2_{jk} \kappa) + \partial^4_{ijkl} \kappa.
\end{multline*}
And the fourth moment result follows directly by applying \eqref{eq:tweedie_proof_second_moment} to each of the terms and taking the expectation over $X_t \sim Q_t$. The rate follows from \cref{ass:regular-drv-plus} (cf. \Cref{def:bigO-Lp}).

\subsection{Proof of \texorpdfstring{\cref{lem:accl-tweedie}}{Lemma 9}}
\label{app:proof-lem-accl-tweedie}

The proof continues the idea of \cref{lem:tweedie}. The idea is to use the inductive relationship (provided in the proof of \cref{lem:tweedie,lem:tweedie_high_mom}):
\begin{align*}
    D_5(x,\Tilde{x}_t) &= \exp\brc{-x^\T \Tilde{x}_t+\kappa} \partial_{m} \Big(  \exp\brc{x^\T \Tilde{x}_t-\kappa} D_4(x,\Tilde{x}_t) \Big)\\
    &= (x^{m}-\partial_{m} \kappa) D_{4}(x,\Tilde{x}_t) + \partial_{m} D_{4}(x,\Tilde{x}_t)\\
    D_6(x,\Tilde{x}_t) &= \exp\brc{-x^\T \Tilde{x}_t+\kappa} \partial_{n} \Big(  \exp\brc{x^\T \Tilde{x}_t-\kappa} D_5(x,\Tilde{x}_t) \Big)\\
    &= (x^{n}-\partial_{n} \kappa) D_{5}(x,\Tilde{x}_t) + \partial_{n} D_{5}(x,\Tilde{x}_t).
\end{align*}
Let $P_\ell^k$ be the set that contains all distinct size-$k$ partitions of $[\ell]$. We use the definitions:
\begin{align*}
    \binom{A}{k} &:= \Big\{\{a_1,\dots,a_k\}: a_1,\dots,a_k \in A,~a_1,\dots,a_k~\text{all differ}\Big\},~k \leq \abs{A}\\
    \mathrm{part}_k(A) &:= \{((a_i,a_j): \{i,j\} \in p): p \in P_{\abs{A}}^k\}.
\end{align*}
Recall the formula for $D_4$ in \eqref{eq:def_tweedie_d4}, which can be abbreviated as (here $\abs{\bm{a}} = 4$):
\begin{align*}
    D_4(x,\Tilde{x}_t) &=  \prod_{i \in \bm{a}} (x^i - \partial_{i} \kappa) - \sum_{\bm{b} \in \binom{\bm{a}}{2} } \partial^2_{\bm{b}} \kappa \prod_{i \in \bm{a} \setminus \bm{b}}(x^i - \partial_i \kappa) + \sum_{(\bm{b},\bm{c}) \in \mathrm{part}_2(\bm{a}) } \partial^2_{\bm{b}} \kappa \partial^2_{\bm{c}} \kappa \\
    &\qquad - \sum_{i \in \bm{a}} \partial^3_{\bm{a}\setminus\{i\}} \kappa (x^i - \partial_i \kappa) - \partial^4_{\bm{a}} \kappa.
\end{align*}
Also recall the definition of $f(x)$ in \cref{lem:tweedie} and that $\int f(x) e^{x^\T \Tilde{x}_t - \kappa} D_p(x,\Tilde{x}_t) \d x = 0$, through which we can find the expected $p$-th moments of $\E_{X_{t-1} \sim Q_{t-1|t}}\sbrc{\prod_{i\in \bm{a}} (X^i_{t-1}-\mu^i_t) }$. For reference, the first four moments are
\begin{align*}
    &\int f(x) \exp\brc{x^\T \Tilde{x}_t  - \kappa}(x^i - \partial_{i} \kappa) \d x = 0\\
    &\int f(x) \exp\brc{x^\T \Tilde{x}_t  - \kappa} (x^i - \partial_{i} \kappa) (x^j - \partial_{j} \kappa) \d x = \partial^2_{ij} \kappa = \Tilde{O}_{\calL^p(Q_t)}(1-\alpha_t)\\
    &\int f(x) \exp\brc{x^\T \Tilde{x}_t  - \kappa} (x^i - \partial_{i} \kappa) (x^j - \partial_{j} \kappa) (x^k - \partial_{k} \kappa) \d x = \partial^3_{ijk} \kappa = \Tilde{O}_{\calL^p(Q_t)}((1-\alpha_t)^3)\\
    &\int f(x) \exp\brc{x^\T \Tilde{x}_t  - \kappa} (x^i - \partial_{i} \kappa) (x^j - \partial_{j} \kappa) (x^k - \partial_{k} \kappa) (x^l - \partial_{l} \kappa) \d x\\
    &\qquad = (\partial^2_{ij} \kappa)(\partial^2_{kl} \kappa) + (\partial^2_{ik} \kappa)(\partial^2_{jl} \kappa) + (\partial^2_{il} \kappa)(\partial^2_{jk} \kappa) + \partial^4_{ijkl} \kappa = \Tilde{O}_{\calL^p(Q_t)}((1-\alpha_t)^2)
\end{align*}
where we note that $\partial^k_{\bm{a}} \kappa = \Tilde{O}_{\calL^p(Q_t)}((1-\alpha_t)^k)$ for all $k \geq 3$.

We can calculate $D_5$ as (with $\abs{\bm{a}} = 5$):
\begin{align*}
    D_5(x,\Tilde{x}_t) &= (x^{a_5}-\partial_{a_5} \kappa) D_{4}(x,\Tilde{x}_t) + \partial_{a_5} D_{4}(x,\Tilde{x}_t)\\
    &= \prod_{i \in \bm{a}} (x^i - \partial_{i} \kappa) - \sum_{\bm{b} \in \binom{\bm{a}}{2} } \partial^2_{\bm{b}} \kappa \prod_{i \in \bm{a} \setminus \bm{b}}(x^i - \partial_i \kappa) - \sum_{\bm{b} \in \binom{\bm{a}}{2} } \partial^3_{ \bm{a} \setminus \bm{b}} \kappa \prod_{i \in \bm{b}}(x^i - \partial_i \kappa) \\
    &\qquad + \sum_{\substack{i \in \bm{a} \\ (\bm{b},\bm{c}) \in \mathrm{part}_2(\bm{a}\setminus\{i\})}} \partial^2_{\bm{b}} \kappa \partial^2_{\bm{c}} \kappa (x^i - \partial_i \kappa) \\
    &\qquad - \sum_{i \in \bm{a}} \partial^4_{\bm{a}\setminus\{i\}} \kappa (x^i - \partial_i \kappa) + \sum_{\bm{b} \in \binom{\bm{a}}{2} } \partial^2_{\bm{b}} \kappa \partial^3_{\bm{a} \setminus \bm{b}} \kappa - \partial^5_{\bm{a}} \kappa.
\end{align*}
Therefore,
\begin{align*}
    &\E_{X_{t-1} \sim Q_{t-1|t}}\sbrc{\prod_{i \in \bm{a}: \abs{\bm{a}} = 5} (X_{t-1}^i-\mu_t^i)}\\
    &= \sum_{\bm{b} \in \binom{\bm{a}}{2} } \partial^2_{\bm{b}} \kappa \partial^3_{\bm{a} \setminus \bm{b}} \kappa + \sum_{\bm{b} \in \binom{\bm{a}}{2} } \partial^3_{ \bm{a} \setminus \bm{b}} \kappa \partial^2_{\bm{b}} \kappa - \sum_{\bm{b} \in \binom{\bm{a}}{2} } \partial^2_{\bm{b}} \kappa \partial^3_{\bm{a} \setminus \bm{b}} \kappa + \partial^5_{\bm{a}} \kappa\\
    &= \sum_{\bm{b} \in \binom{\bm{a}}{2} } \partial^2_{\bm{b}} \kappa \partial^3_{\bm{a} \setminus \bm{b}} \kappa + \partial^5_{\bm{a}} \kappa = \Tilde{O}_{\calL^p(Q_t)}((1-\alpha_t)^4).
\end{align*}

Now we turn to calculate $D_6$ (and let $\abs{\bm{a}} = 6$):
\begin{align*}
    &D_6(x,\Tilde{x}_t) = (x^{a_6}-\partial_{a_6} \kappa) D_{5}(x,\Tilde{x}_t) + \partial_{a_6} D_{5}(x,\Tilde{x}_t)\\
    &= \prod_{i \in \bm{a}} (x^i - \partial_{i} \kappa) - \sum_{\bm{b} \in \binom{\bm{a}}{2} } \partial^2_{\bm{b}} \kappa \prod_{i \in \bm{a} \setminus \bm{b}}(x^i - \partial_i \kappa) - \sum_{\bm{b} \in \binom{\bm{a}}{3} } \partial^3_{ \bm{a} \setminus \bm{b}} \kappa \prod_{i \in \bm{b}}(x^i - \partial_i \kappa) \\
    &\qquad - \sum_{\bm{b} \in \binom{\bm{a}}{2} } \partial^4_{ \bm{a} \setminus \bm{b}} \kappa \prod_{i \in \bm{b}}(x^i - \partial_i \kappa) + \sum_{\substack{\bm{b} \in \binom{\bm{a}}{2} \\ (\bm{c},\bm{e}) \in \mathrm{part}_2(\bm{a} \setminus \bm{b})}} \partial^2_{\bm{c}} \kappa \partial^2_{\bm{e}} \kappa \prod_{i \in \bm{b}} (x^i - \partial_i \kappa) + \sum_{i \in \bm{a}} \mathrm{fn}(\kappa) (x^i-\partial_i \kappa) \\
    &\qquad - \sum_{(\bm{b},\bm{c},\bm{e}) \in \mathrm{part}_2(\bm{a}) } \partial^2_{\bm{b}} \kappa \partial^2_{\bm{c}} \kappa \partial^2_{\bm{e}} \kappa + \sum_{\bm{b} \in \binom{\bm{a}}{2} } \partial^2_{\bm{b}} \kappa \partial^4_{\bm{a} \setminus \bm{b}} \kappa + \sum_{(\bm{b},\bm{c}) \in \mathrm{part}_3(\bm{a}) } \partial^3_{\bm{b}} \kappa \partial^3_{\bm{c}} \kappa - \partial^6_{\bm{a}} \kappa.
\end{align*}
Here $\mathrm{fn}(\kappa)$ is a function of $\kappa$ which does not depend on $x$. Note that $\mathrm{fn}$ does not affect the expected value because $\E_{X_{t-1} \sim Q_{t-1|t}}[X_{t-1}-\mu_t] = 0$. Therefore, we have
\begin{align*}
    &\E_{X_{t-1} \sim Q_{t-1|t}}\sbrc{\prod_{i \in \bm{a}: \abs{\bm{a}} = 6} (X_{t-1}^i-\mu_t^i)}\\
    &= \sum_{\bm{b} \in \binom{\bm{a}}{2} } \partial^2_{\bm{b}} \kappa \brc{\sum_{(\bm{c}, \bm{e}) \in \mathrm{part}_2(\bm{a} \setminus \bm{b})} \partial^2_{\bm{c}} \kappa \partial^2_{\bm{e}} \kappa + \partial^4_{\bm{a} \setminus \bm{b}} \kappa } + \sum_{\bm{b} \in \binom{\bm{a}}{3} } \partial^3_{ \bm{a} \setminus \bm{b}} \kappa \partial^3_{\bm{b}} \kappa \\
    &\qquad + \sum_{\bm{b} \in \binom{\bm{a}}{2} } \partial^4_{ \bm{a} \setminus \bm{b}} \kappa \partial^2_{\bm{b}} \kappa - \sum_{\substack{\bm{b} \in \binom{\bm{a}}{2} \\ (\bm{c},\bm{e}) \in \mathrm{part}_2(\bm{a} \setminus \bm{b})}} \partial^2_{\bm{b}} \kappa \partial^2_{\bm{c}} \kappa \partial^2_{\bm{e}} \kappa \\
    &\qquad + \sum_{(\bm{b},\bm{c},\bm{e}) \in \mathrm{part}_2(\bm{a}) } \partial^2_{\bm{b}} \kappa \partial^2_{\bm{c}} \kappa \partial^2_{\bm{e}} \kappa - \sum_{\bm{b} \in \binom{\bm{a}}{2} } \partial^2_{\bm{b}} \kappa \partial^4_{\bm{a} \setminus \bm{b}} \kappa - \sum_{(\bm{b},\bm{c}) \in \mathrm{part}_3(\bm{a}) } \partial^3_{\bm{b}} \kappa \partial^3_{\bm{c}} \kappa + \partial^6_{\bm{a}} \kappa\\
    &= \sum_{\bm{b} \in \binom{\bm{a}}{2} } \partial^2_{\bm{b}} \kappa \partial^4_{\bm{a} \setminus \bm{b}} \kappa + \sum_{(\bm{b},\bm{c}) \in \mathrm{part}_3(\bm{a}) } \partial^3_{\bm{b}} \kappa \partial^3_{\bm{c}} \kappa + \sum_{(\bm{b},\bm{c},\bm{e}) \in \mathrm{part}_2(\bm{a}) } \partial^2_{\bm{b}} \kappa \partial^2_{\bm{c}} \kappa \partial^2_{\bm{e}} \kappa + \partial^6_{\bm{a}} \kappa\\
    &= \sum_{(\bm{b},\bm{c},\bm{e}) \in \mathrm{part}_2(\bm{a}) } \partial^2_{\bm{b}} \kappa \partial^2_{\bm{c}} \kappa \partial^2_{\bm{e}} \kappa + \Tilde{O}_{\calL^p(Q_t)}((1-\alpha_t)^5).
\end{align*}
The proof is now complete.

\subsection{Proof of \texorpdfstring{\cref{lem:tweedie_high_mom}}{Lemma 10}} \label{app:proof_tweedie_high_mom}

We fix $x_t$ first and will take the expectation at the end. We first introduce some notations used in the proof. We write $x = x_{t-1}$ and $\kappa = \kappa(\Tilde{x}_t)$. Given a set of indices $A$, define its bipartition as
\[ \mathrm{bipart}(A) := \{ (B,C): A = B \sqcup C \} \]
where $B$ and $C$ are both \textit{sets} of indices (and therefore the order of indices within each of $B$ and $C$ does not matter). Here $\sqcup$ refers to the \textit{disjoint} union of the two sets (which is only defined when the two sets are disjoint). Next, given a set $B$, define $\mathrm{allpart_{\geq 2}}(B)$ as a set containing all partitions of $B$ such that there are \textit{at least} 2 elements in each part of the partition. As an example, $\mathrm{allpart_{\geq 2}(\{1,2,3,4\})} = \{ \{\{1,2\},\{3,4\}\}, \{\{1,3\},\{2,4\}\}, \{\{1,4\},\{2,3\}\}$, and $\{\{1\},\{2,3,4\}\} \notin \mathrm{allpart_{\geq 2}(\{1,2,3,4\})}$ despite the fact that it is a valid partition.
For each partition $b \in \mathrm{allpart_{\geq 2}}(B)$, define
\[ \bm{\partial}_{b} \kappa := \prod_{\xi \in b} \partial^{\abs{\xi}}_{a_\xi} \kappa. \]
Here note that $\xi$ is also a set, and $\bm{\partial}_{b} \kappa$ is well defined since the order of indices to take partial derivative with does not matter. 
Define
\begin{align*}
    D_0(x,\Tilde{x}_t) &:= 1 \\
    D_p(x,\Tilde{x}_t) &:= \exp\brc{-x^\T \Tilde{x}_t+\kappa} \partial_{a_p} \Big( \exp\brc{x^\T \Tilde{x}_t  - \kappa} D_{p-1} (x,\Tilde{x}_t) \Big) 
\end{align*}
for all $p \geq 1$. 
We again remind readers that all derivatives are w.r.t. $\Tilde{x}_t$ instead of $x = x_{t-1}$.

By working out the derivative, a direct implication of the definition of $D_p$ is a recursive relationship:
\[ D_p(x,\Tilde{x}_t) = (x^{a_p}-\partial_{a_p} \kappa) D_{p-1}(x,\Tilde{x}_t) + \partial_{a_p} D_{p-1}(x,\Tilde{x}_t). \]
Also, if we unroll the recursion of $D_p$, we get 
\begin{align*}
    D_p(x,\Tilde{x}_t) &= \exp\brc{-x^\T \Tilde{x}_t+\kappa} \partial_{a_p} \Big( \exp\brc{x^\T \Tilde{x}_t  - \kappa} D_{p-1} (x,\Tilde{x}_t) \Big) \\
    &= \exp\brc{-x^\T \Tilde{x}_t+\kappa} \partial_{a_p} \Big( \exp\brc{x^\T \Tilde{x}_t  - \kappa} \exp\brc{-x^\T \Tilde{x}_t+\kappa} \\
    &\qquad \partial_{a_{p-1}} \Big( \exp\brc{x^\T \Tilde{x}_t  - \kappa} D_{p-2} (x,\Tilde{x}_t) \Big) \Big)\\
    &= \exp\brc{-x^\T \Tilde{x}_t+\kappa} \partial^2_{a_p,a_{p-1}} \Big( \exp\brc{x^\T \Tilde{x}_t  - \kappa} D_{p-2} (x,\Tilde{x}_t) \Big)\\
    &= \exp\brc{-x^\T \Tilde{x}_t+\kappa} \partial^p_{a_p,\dots,a_1} \Big( \exp\brc{x^\T \Tilde{x}_t  - \kappa} \Big)
\end{align*}
and thus
\begin{align} \label{eq:tweedie_high_mom_Dp_int}
    0 = \partial^p_{a_1,\dots,a_p} \int q_{t-1|t} \d x &= \int f(x) \partial^p_{a_1,\dots,a_p} \Big( \exp\brc{x^\T \Tilde{x}_t  - \kappa} \Big) \d x \nonumber\\
    &= \int f(x) \exp\brc{x^\T \Tilde{x}_t  - \kappa} D_p(x,\Tilde{x}_t) \d x
\end{align}
where we recall the definition of $f(x)$ back in \eqref{eq:q_reverse_tweedie}.

In the following, we present the entire proof into two parts. In part 1, we inductively show that each $D_p(x,\Tilde{x}_t)$ satisfies a particular polynomial form. In part 2, we inductively show that this polynomial form results in the desired rates.

\textbf{Part 1 of the proof of Lemma~\ref{lem:tweedie_high_mom}:}
The first step toward proving the desired results is to obtain the form of $D_p$ for all $p \geq 2$. Now, we aim to show inductively that
\begin{equation} \label{eq:tweedie_high_mom_Dp_form_hypo}
    D_p(x,\Tilde{x}_t) = \prod_{i=1}^p (x^{a_i}-\partial_{a_i} \kappa) - \sum_{(B,C) \in \mathrm{bipart}([p])} \sum_{b \in \mathrm{allpart}_{\geq 2}(B)} d_p(b,C) (\bm{\partial}_b \kappa) \prod_{c \in C} (x^{a_c}-\partial_{a_c} \kappa)
\end{equation}
where $d_p(b,C)$ is a constant from combinatorics, which is possibly 0 and which only depends on $p$. From Lemma~\ref{lem:tweedie}, the bases cases have been established that (cf. \eqref{eq:def_tweedie_d3} and \eqref{eq:def_tweedie_d4})
\begin{align*}
    &D_2(x,\Tilde{x}_t) = (x^i - \partial_{i} \kappa) (x^j - \partial_{j} \kappa) - \partial^2_{ij} \kappa\\
    &D_3(x,\Tilde{x}_t) = (x^i - \partial_{i} \kappa) (x^j - \partial_{j} \kappa) (x^k - \partial_k \kappa) \\
    &\quad - \partial^2_{ij} \kappa (x^k - \partial_k \kappa) - \partial^2_{ik} \kappa (x^j - \partial_{j} \kappa) - \partial^2_{jk} \kappa (x^i - \partial_{i} \kappa) - \partial^3_{ijk} \kappa\\
    &D_4(x, \Tilde{x}_t) = (x^i - \partial_{i} \kappa) (x^j - \partial_{j} \kappa) (x^k - \partial_k \kappa) (x^l - \partial_l \kappa) \nonumber\\
    &\quad - \partial^2_{ij} \kappa (x^k - \partial_k \kappa) (x^l - \partial_l \kappa) - \partial^2_{ik} \kappa (x^j - \partial_{j} \kappa) (x^l - \partial_l \kappa) - \partial^2_{jk} \kappa (x^i - \partial_{i} \kappa) (x^l - \partial_l \kappa) \nonumber\\
    &\quad + \partial_l ((x^i - \partial_{i} \kappa) (x^j - \partial_{j} \kappa) (x^k - \partial_k \kappa)) - \partial^3_{ijk} \kappa (x^l - \partial_l \kappa) - \partial^3_{ijl} (x^k - \partial_k \kappa) \nonumber\\
    &\quad - \partial^3_{ikl} (x^j - \partial_j \kappa) - \partial^3_{jkl} (x^i - \partial_i \kappa) + \partial^2_{ij} \kappa \partial^2_{kl} \kappa + \partial^2_{ik} \kappa \partial^2_{jl} \kappa + \partial^2_{jk} \kappa \partial^2_{il} \kappa - \partial^4_{ijkl} \kappa.
\end{align*}
In particular, each term of $D_p$ ($p=2,3,4$) is in the form of either $\prod_{i=1}^p (x^{a_i}-\partial_{a_i} \kappa)$ or $(\bm{\partial}_b \kappa) \prod_{c \in C} (x^{a_c}-\partial_{a_c} \kappa)$, where $ \abs{\xi} \geq 2,~\forall \xi \in b$, and $(\sqcup_{\xi \in b} \xi) \sqcup C = [p]$. Therefore, $D_2,D_3,D_4$ all satisfy the hypothesis \eqref{eq:tweedie_high_mom_Dp_form_hypo}.

Turning to the inductive step, we suppose that $D_k$ satisfies \eqref{eq:tweedie_high_mom_Dp_form_hypo}, i.e.,
\[ D_k(x, \Tilde{x}_t) = \prod_{i=1}^{k} (x^{a_i}-\partial_{a_i} \kappa) - \sum_{(B,C) \in \mathrm{bipart}([k])} \sum_{b \in \mathrm{allpart}_{\geq 2}(B)} d_k(b,C) (\bm{\partial}_b \kappa) \prod_{c \in C} (x^{a_c}-\partial_{a_c} \kappa). \]
Then, using the recursive relationship, we have
\begin{align*}
    &D_{k+1}(x,\Tilde{x}_t) \\
    &= (x^{a_{k+1}}-\partial_{a_{k+1}} \kappa) D_{k}(x,\Tilde{x}_t) + \partial_{a_{k+1}} D_{k}(x,\Tilde{x}_t)\\
    &= \underbrace{\prod_{i=1}^{k+1} (x^{a_i}-\partial_{a_i} \kappa)}_{T_1} - \underbrace{\sum_{(B,C) \in \mathrm{bipart}([k])} \sum_{b \in \mathrm{allpart}_{\geq 2}(B)} d_k(b,C) (\bm{\partial}_b \kappa) \prod_{c \in C} (x^{a_c}-\partial_{a_c} \kappa) (x^{a_{k+1}}-\partial_{a_{k+1}} \kappa)}_{T_2} \\
    &\quad - \underbrace{\partial_{a_{k+1}} \brc{ - \prod_{i=1}^{k} (x^{a_i}-\partial_{a_i} \kappa) } }_{T_3} - \underbrace{\sum_{(B,C) \in \mathrm{bipart}([k])} \sum_{b \in \mathrm{allpart}_{\geq 2}(B)} d_k(b,C) (\bm{\partial}_b \kappa) \brc{ \partial_{a_{k+1}} \prod_{c \in C} (x^{a_c}-\partial_{a_c} \kappa)}}_{T_4}\\
    &\quad - \underbrace{\sum_{(B,C) \in \mathrm{bipart}([k])} \sum_{b \in \mathrm{allpart}_{\geq 2}(B)} d_k(b,C) \brc{ \partial_{a_{k+1}} (\bm{\partial}_b \kappa) } \prod_{c \in C} (x^{a_c}-\partial_{a_c} \kappa)}_{T_5}\\
    &= T_1 - T_2 - T_3 - T_4 - T_5
\end{align*}
where we define each term as $T_1,\dots,T_5$.
Now we discuss these terms separately:
\begin{enumerate}
    \item $T_1$ (and only $T_1$) is in the form $\prod_{i=1}^{k+1} (x^{a_i}-\partial_{a_i} \kappa)$.
    \item $T_2$ is a summation of individual terms: $(\bm{\partial}_b \kappa) \prod_{c \in C} (x^{a_c}-\partial_{a_c} \kappa) (x^{a_{k+1}}-\partial_{a_{k+1}} \kappa)$. Here $b \in \mathrm{allpart}_{\geq 2}(B)$ and $(B,C) \in \mathrm{bipart}([k])$. Thus, by definition of $\mathrm{bipart}$ and $\mathrm{allpart}_{\geq 2}$, for each $\xi \in b$, $\abs{\xi} \geq 2$ and $(\sqcup_{\xi \in b} \xi) \sqcup C = [k]$. Therefore, $k+1 \notin B \sqcup C$ and
    \[(\sqcup_{\xi \in b} \xi) \sqcup C \sqcup \{k+1\} = [k] \sqcup \{k+1\} = [k+1]. \]
    This implies that each individual term of $T_2$ is in the form of $(\bm{\partial}_b \kappa) \prod_{c \in C_2} (x^{c}-\partial_{c} \kappa)$ where $b \in \mathrm{allpart}_{\geq 2}(B_2)$, such that $B_2 := B$ and $C_2 := C \sqcup \{k+1\}$. Here $C_2$ is well defined because $k+1 \notin C$. Since $(B_2,C_2) \in \mathrm{bipart}([k+1])$, 
    \[ T_2 = \sum_{(B,C) \in \mathrm{bipart}([k+1])} \sum_{b \in \mathrm{allpart}_{\geq 2}(B)} d_2(b,C) (\bm{\partial}_b \kappa) \prod_{c \in C} (x^{a_c}-\partial_{a_c} \kappa) \] 
    for some constant $d_2(b,C)$.
    \item $T_3$ is the derivative of product, which is a summation of individual terms: $(\partial^2_{a_j,a_{k+1}}\kappa) \prod_{\substack{i=1 \\ i \neq j}}^d (x^{a_i}-\partial_{a_i} \kappa),~j = 1,\dots,k$. Therefore, for each $j = 1,\dots,k$, each term is of the form $(\bm{\partial}_b \kappa) \prod_{c \in C_3} (x^{a_c}-\partial_{a_c} \kappa)$ where $b \in \mathrm{allpart}_{\geq 2}(B_3)$, such that $B_3 := \{j,k+1\}$ and $C_3 := [k] \setminus \{j\}$. Since $(B_3,C_3) \in \mathrm{bipart}([k+1])$,
    \[ T_3 = \sum_{(B,C) \in \mathrm{bipart}([k+1])} \sum_{b \in \mathrm{allpart}_{\geq 2}(B)} d_3(b,C) (\bm{\partial}_b \kappa) \prod_{c \in C} (x^{a_c}-\partial_{a_c} \kappa) \] 
    for some constant $d_3(b,C)$.
    \item $T_4$ is a summation of individual terms: $(\bm{\partial}_b \kappa) \brc{ \partial_{a_{k+1}} \prod_{c \in C} (x^{a_c}-\partial_{a_c} \kappa)}$ where $b \in \mathrm{allpart}_{\geq 2}(B)$ and $(B,C) \in \mathrm{bipart}([k])$. Now,
    \begin{align*}
        (\bm{\partial}_b \kappa) \brc{ \partial_{a_{k+1}} \prod_{c \in C} (x^{a_c}-\partial_{a_c} \kappa)} &= - (\bm{\partial}_b \kappa) (\partial^2_{a_j,a_{k+1}}\kappa) \prod_{\substack{i\in C \\ i \neq c}} (x^{a_i}-\partial_{a_i} \kappa)\\
        &= - (\bm{\partial}_{b_4} \kappa) \prod_{i\in C_4} (x^{a_i}-\partial_{a_i} \kappa)
    \end{align*}
    where $b_4 := b \sqcup \{k+1, c\}$ and $C_4 := C \setminus \{c\}$. Here $b_4$ is well defined because $k+1,c \notin b$. Define $B_4 := [k+1] \setminus C_4$, and we have $b_4 \in \mathrm{allpart}_{\geq 2}(B_4)$. Since $(B_4,C_4)$ is a valid partition of $[k+1]$, we have
    \[ T_4 = \sum_{(B,C) \in \mathrm{bipart}([k+1])} \sum_{b \in \mathrm{allpart}_{\geq 2}(B)} d_4(b,C) (\bm{\partial}_b \kappa) \prod_{c \in C} (x^{a_c}-\partial_{a_c} \kappa) \] 
    for some constant $d_4(b,C)$.
    \item $T_5$ is a summation of individual terms: $\brc{ \partial_{a_{k+1}} (\bm{\partial}_b \kappa) } \prod_{c \in C} (x^{a_c}-\partial_{a_c} \kappa)$, where $b \in \mathrm{allpart}_{\geq 2}(B)$ and $(B,C) \in \mathrm{bipart}([k])$. From definition of $\bm{\partial}_b \kappa$,
    \[ \partial_{a_{k+1}} (\bm{\partial}_b \kappa) = \partial_{a_{k+1}} \brc{ \prod_{\xi \in b} \partial^{\abs{\xi}}_{a_\xi} \kappa } = \sum_{\xi \in b} \brc{ \partial^{\abs{\xi}+1}_{a_\xi,a_{k+1}} \kappa }  \prod_{\substack{\zeta \in b \\ \zeta \neq \xi}} \partial^{\abs{\zeta}}_{a_\zeta} \kappa = \sum_{\xi \in b} \bm{\partial}_{b_\xi} \kappa \]
    where, for each $\xi \in b$, we have defined a new partition $b_\xi$ such that $k+1$ is added to the $\xi$ in the partition $b$. Formally, define $b_\xi := b \setminus \xi \sqcup \{\xi \sqcup \{k+1\}\}$, which is well defined because $\xi \notin (b \setminus \xi)$ and $k+1 \notin B$. Define $B_5 := B \sqcup \{k+1\}$ and $C_5 := C$, and note that $(B_5,C_5)$ is a valid partition of $[k+1]$. Since $\abs{\zeta} \geq 2,~\forall \zeta \in b$, we have $\abs{\zeta'} \geq 2,~\forall \zeta' \in b_\xi$. Since $b \in \mathrm{allpart}_{\geq 2}(B)$, we have $b_\xi \in \mathrm{allpart}_{\geq 2}(B_5)$ for all $\xi \in b$. Therefore, for any fixed $C (= C_5)$
    \begin{align*}
        \sum_{b \in \mathrm{allpart}_{\geq 2}(B)} d_k(b,C) \brc{ \partial_{a_{k+1}} (\bm{\partial}_b \kappa) } &= \sum_{b \in \mathrm{allpart}_{\geq 2}(B)} \sum_{\xi \in b} d_k(b,C) \bm{\partial}_{b_\xi} \kappa \\
        &= \sum_{b_5 \in \mathrm{allpart}_{\geq 2}(B_5)} d_5(b_5,C) \bm{\partial}_{b_5} \kappa
    \end{align*}
    for some constant $d_5(b_5,C)$, and thus
    \[ T_5 = \sum_{(B,C) \in \mathrm{bipart}([k+1])} \sum_{b \in \mathrm{allpart}_{\geq 2}(B)} d_5(b,C) (\bm{\partial}_b \kappa) \prod_{c \in C} (x^{a_c}-\partial_{a_c} \kappa). \]
\end{enumerate}
Finally, letting
\[ d_{k+1}(b,C) := \sum_{j=2}^5 d_j(b,C) \]
for each $b \in \mathrm{allpart}_{\geq 2}(B)$ and $C$ such that $(B,C) \in \mathrm{bipart}([k+1])$, we have shown that if $D_{k}(x,\Tilde{x}_t)$ is in the form of \eqref{eq:tweedie_high_mom_Dp_form_hypo}, $D_{k+1}(x,\Tilde{x}_t)$ is also in this form. Thus, claim \eqref{eq:tweedie_high_mom_Dp_form_hypo} is valid for all $p \geq 2$.

\textbf{Part 2 of the proof of Lemma~\ref{lem:tweedie_high_mom}:}
First, we remind readers of the definition of $\kappa(\Tilde{x}_t)$ in \eqref{eq:def_kappa_x_tilde}. Also, the partial derivatives within the expectation over $X_t \sim Q_t$ do not affect the rate by \cref{ass:regular-drv-plus}. Note that $\nabla \kappa = \mu_t$ from direct differentiation. From \eqref{eq:tweedie_high_mom_Dp_int} and \eqref{eq:tweedie_high_mom_Dp_form_hypo}, for fixed $x_t$, we have
\begin{align} \label{eq:tweedie_high_mom_part2_rate1}
    &\E_{X_{t-1} \sim Q_{t-1|t}}\sbrc{\prod_{i=1}^p (X_{t-1}^{a_i}-\mu_t^{a_i})} \nonumber\\
    &= \Tilde{O}\brc{ \sup_{\substack{(B,C) \in \mathrm{bipart}([p]) \\ b \in \mathrm{allpart}_{\geq 2}(B)}} \bm{\partial}_b \kappa(\Tilde{x}_t) \E_{X_{t-1} \sim Q_{t-1|t}}\sbrc{\prod_{c \in C} (X_{t-1}^{a_c}-\mu_t^{a_c} )} } \nonumber\\
    &= \Tilde{O}\brc{ \sup_{(B,C) \in \mathrm{bipart}([p])} \brc{ \sup_{b \in \mathrm{allpart}_{\geq 2}(B)} \bm{\partial}_b \kappa(\Tilde{x}_t) } \E_{X_{t-1} \sim Q_{t-1|t}}\sbrc{\prod_{c \in C} (X_{t-1}^{a_c}-\mu_t^{a_c} )} }.
\end{align}

We first consider the term $\sup_{b \in \mathrm{allpart}_{\geq 2}(B)} \bm{\partial}_b \kappa(\Tilde{x}_t)$. Given a partition $b \in \mathrm{allpart}_{\geq 2}(B)$, direct differentiation yields
\begin{align*}
    \partial^{\abs{\xi}}_{a_\xi} \kappa &= \frac{1-\alpha_t}{\alpha_t} + \frac{(1-\alpha_t)^2}{\alpha_t} \partial^2_{a_\xi} \log q_t (x_t) = \Tilde{O}(1-\alpha_t),\quad \text{if } \abs{\xi} = 2 \text{ and } \xi_1 = \xi_2 \nonumber\\
    \partial^{\abs{\xi}}_{a_\xi} \kappa &= \frac{(1-\alpha_t)^{\abs{\xi}}}{\alpha_t^{\abs{\xi}/2}}\partial^{\abs{\xi}}_\xi \log q_t(x_t) = \Tilde{O}((1-\alpha_t)^{\abs{\xi}}),\quad \text{for all other } \xi.
\end{align*}
Since by definition $\bm{\partial}_b \kappa = \prod_{\xi \in b} \partial^{\abs{\xi}}_{a_\xi} \kappa$ and $\sqcup_{\xi \in b} \xi = B$, the slowest rate of $\bm{\partial}_b \kappa$ (as a function of $B$) is determined by the partition $b$ containing the most number of equal pairs. The slowest rate is
\[ \sup_{b \in \mathrm{allpart}_{\geq 2}(B)} \bm{\partial}_b \kappa(\Tilde{x}_t) = \begin{cases}
    \Tilde{O}\brc{(1-\alpha_t)^{(\abs{B}-1)/2} (1-\alpha_t)^3} =  \Tilde{O}\brc{(1-\alpha_t)^{(\abs{B}+5)/2}}& \text{if } \abs{B} \text{is odd}\\
    \Tilde{O}\brc{(1-\alpha_t)^{\abs{B}/2}} & \text{if } \abs{B} \text{is even}
\end{cases} \]

To proceed, we will again use induction to find the overall rate. From Lemma~\ref{lem:tweedie}, base cases have been established that
\begin{align*}
    \E_{X_{t-1},X_t \sim Q_{t-1,t}}\sbrc{\prod_{i=1}^2 (X_{t-1}^{a_i}-\mu_t^{a_i})} &= \Tilde{O}\brc{1-\alpha_t},~\forall a \in [d]^2\\
    \E_{X_{t-1},X_t \sim Q_{t-1,t}}\sbrc{\prod_{i=1}^3 (X_{t-1}^{a_i}-\mu_t^{a_i})} &= \Tilde{O}\brc{(1-\alpha_t)^{3}},~\forall a \in [d]^3\\
    \E_{X_{t-1},X_t \sim Q_{t-1,t}}\sbrc{\prod_{i=1}^4 (X_{t-1}^{a_i}-\mu_t^{a_i})} &= \Tilde{O}\brc{(1-\alpha_t)^{2}},~\forall a \in [d]^4.
\end{align*}
These rates satisfy \eqref{eq:tweedie_high_mom_odd} and \eqref{eq:tweedie_high_mom_even} when $p=2,3,4$. Now we turn to the inductive step. Suppose $k \geq 4$ is even. For purpose of induction, suppose \eqref{eq:tweedie_high_mom_odd} and \eqref{eq:tweedie_high_mom_even} hold for all $p=2,\dots,k$. Then, following \eqref{eq:tweedie_high_mom_part2_rate1}, for $p=k+1$ (odd number), we have
\begin{align*}
    &\E_{X_{t-1},X_t \sim Q_{t-1,t}}\sbrc{\prod_{i=1}^{k+1} (X_{t-1}^{a_i}-\mu_t^{a_i})} \\
    &= O \Bigg( \sup_{\substack{(B,C) \in \mathrm{bipart}([k+1]) \\ \abs{B} \text{ odd},~\abs{C} \text{ even}}} (1-\alpha_t)^{(\abs{B}+5)/2} (1-\alpha_t)^{\abs{C}/2} \\
    &\qquad + \sup_{\substack{(B,C) \in \mathrm{bipart}([k+1]) \\ \abs{B} \text{ even},~\abs{C} \text{ odd}}} (1-\alpha_t)^{\abs{B}/2} (1-\alpha_t)^{(\abs{C}+3)/2} \Bigg)\\
    &= O \brc{ (1-\alpha_t)^{(k+1)/2+5/2} + (1-\alpha_t)^{(k+1)/2+3/2} }\\
    &= O \brc{ (1-\alpha_t)^{(k+1)/2+3/2} }.
\end{align*}
Then, for $p=k+2$ (even number), we have
\begin{align*}
    &\E_{X_{t-1},X_t \sim Q_{t-1,t}}\sbrc{\prod_{i=1}^{k+2} (X_{t-1}^{a_i}-\mu_t^{a_i})} \\
    &= O \Bigg( \sup_{\substack{(B,C) \in \mathrm{bipart}([k+2]) \\ \abs{B} \text{ odd},~\abs{C} \text{ odd}}} (1-\alpha_t)^{(\abs{B}+5)/2} (1-\alpha_t)^{(\abs{C}+3)/2} \\
    &\qquad + \sup_{\substack{(B,C) \in \mathrm{bipart}([k+1]) \\ \abs{B} \text{ even},~\abs{C} \text{even}}} (1-\alpha_t)^{\abs{B}/2} (1-\alpha_t)^{\abs{C}/2} \Bigg)\\
    &= O \brc{ (1-\alpha_t)^{(k+2)/2+4} + (1-\alpha_t)^{(k+2)/2} }\\
    &= O \brc{ (1-\alpha_t)^{(k+2)/2} }.
\end{align*}
These show the validity of the claims \eqref{eq:tweedie_high_mom_odd} and \eqref{eq:tweedie_high_mom_even}. The proof is now complete.

\subsection{Proof of \texorpdfstring{\cref{lem:accl-Eq-Ep-zeta}}{Lemma 11}}
\label{app:proof-lem-accl-Eq-Ep-zeta}

Before analyzing the rate of each moment, we need to guarantee the validity of exchanging the limit (in the Taylor expansion) and the expectation operator.
Intuitively, this is achievable under \cref{ass:regular-drv-plus}, where the Taylor series is absolutely convergent in expectation due to its Gaussian-like moments.
Specifically, since $\log q_{t-1}$ is analytic, all its partial derivatives exist. Following from the Taylor expansion of $\zeta'_{t,t-1}$ in \eqref{eq:accl_zeta_prime_taylor_expansion},
\begin{align*}
    &\lim_{k \to \infty} \Bigg| \E_{\substack{X_t \sim Q_t \\ X_{t-1} \sim P'_{t-1|t}}} [\zeta'_{t,t-1}] - \E_{\substack{X_t \sim Q_t \\ X_{t-1} \sim P'_{t-1|t}}} \bigg[T_1(\log q_{t-1},X_{t-1},\mu_t) + T_2'(\log q_{t-1},X_{t-1},\mu_t) \\
    &\qquad + \sum_{p = 3}^k T_p(\log q_{t-1},X_{t-1},\mu_t) \bigg] \Bigg|\\
    &\leq \lim_{k \to \infty} \E_{\substack{X_t \sim Q_t \\ X_{t-1} \sim P'_{t-1|t}}} \Bigg| \zeta'_{t,t-1} - T_1(\log q_{t-1},X_{t-1},\mu_t) - T_2'(\log q_{t-1},X_{t-1},\mu_t) \\
    &\qquad - \sum_{p = 3}^k T_p(\log q_{t-1},X_{t-1},\mu_t) \Bigg| \\
    &\leq \lim_{k \to \infty} \E_{\substack{X_t \sim Q_t \\ X_{t-1} \sim P'_{t-1|t}}} \sbrc{ \sum_{p = k+1}^\infty  \abs{T_p(\log q_{t-1},X_{t-1},\mu_t) } } \\
    &\stackrel{(i)}{\leq} \lim_{k \to \infty} \liminf_{\ell \to \infty} \sum_{p = k+1}^\ell \E_{\substack{X_t \sim Q_t \\ X_{t-1} \sim P'_{t-1|t}}} \abs{T_p(\log q_{t-1},X_{t-1},\mu_t) } \\
    &\stackrel{(ii)}{=} 0.
\end{align*}
Here $(i)$ follows from Fatou's lemma, and $(ii)$ is because, under \cref{ass:regular-drv-plus} and \cref{lem:accl-Ep-moments}, we have $\E_{\substack{X_t \sim Q_t \\ X_{t-1} \sim P_{t-1|t}}} |T_p(\log q_{t-1},X_{t-1},\mu_t)| = \Tilde{O}\brc{T^{-p/2}}$, and thus
the infinite sum is convergent for all $(k,\ell)$ such that $1 \leq k < \ell < \infty$ since
\[ \sum_{p = 1}^\infty \E_{\substack{X_t \sim Q_t \\ X_{t-1} \sim P'_{t-1|t}}} \abs{T_p(\log q_{t-1},x_{t-1},\mu_t) } = \Tilde{O}\brc{ \sum_{p=1}^\infty \frac{1}{p!} \cdot \frac{d^p}{T^{p/2}} } < \infty. \]
The proof for $\E_{\substack{X_t \sim Q_t \\ X_{t-1} \sim Q_{t-1|t}}}$ is similar due to its Gaussian-like concentration of all centralized moments (see \cref{lem:tweedie_high_mom}). Thus, we are able to exchange the infinite sum and the expectation under either $P'_{t-1|t} \times Q_t$ or $Q_{t-1,t}$.

Next, we put together the rates of the conditional moments. We use abbreviated notations as $T_p = T_p(\log q_{t-1},X_{t-1},\mu_t)$. To investigate the dominant term, we analyze the expected difference of the first 8 moments in the Taylor expansion \eqref{eq:accl_zeta_prime_taylor_expansion} separately. First, for any fixed $x_t$,
\[ \E_{X_{t-1} \sim Q_{t-1|t}} \sbrc{T_1} = 0 = \E_{X_{t-1} \sim P'_{t-1|t}} \sbrc{T_1}. \]
Also, for $T'_2$, note that for any random variable $Z$ (regardless of its distribution) with $\E Z = 0$ and $\Cov(Z) = \Sigma$, the mean of the quadratic form (with fixed matrix $\Xi$) is
\[ \E[Z^\T \Xi Z] = \E[\Tr\brc{Z^\T \Xi Z}] = \Tr\brc{\Xi \Sigma}. \]
This implies that, for any fixed $x_t$,
\begin{align*}
    \E_{X_{t-1} \sim P'_{t-1|t}}[T'_2] &= \frac{1}{2} \E_{X_{t-1} \sim P'_{t-1|t}} \sbrc{(X_{t-1}-\mu_t)^\T \brc{\nabla^2 \log q_{t-1}(\mu_t) - \frac{\alpha_t}{1-\alpha_t} B_t} (X_{t-1}-\mu_t)}\\
    &= \frac{1}{2} \Tr\brc{\brc{\nabla^2 \log q_{t-1}(\mu_t) - \frac{\alpha_t}{1-\alpha_t} B_t} \Sigma_t}\\
    &= \frac{1}{2} \E_{X_{t-1} \sim Q_{t-1|t}} \sbrc{(X_{t-1}-\mu_t)^\T \brc{\nabla^2 \log q_{t-1}(\mu_t) - \frac{\alpha_t}{1-\alpha_t} B_t} (X_{t-1}-\mu_t)}\\
    &= \E_{X_{t-1} \sim Q_{t-1|t}}[T'_2].
\end{align*}
Using \cref{lem:tweedie,lem:accl-Ep-moments}, the rate for $T_3$ is
\begin{align*}
    &\E_{X_t \sim Q_t} \brc{\E_{X_{t-1} \sim Q_{t-1|t}} - \E_{X_{t-1} \sim P'_{t-1|t}} } \sbrc{T_3(\log q_{t-1},X_{t-1},\mu_t)}\\
    &= \E_{X_{t-1},X_t \sim Q_{t-1,t}} \sbrc{T_3(\log q_{t-1},X_{t-1},\mu_t)}\\
    &= \frac{(1-\alpha_t)^3}{3! \alpha_t^{3/2}} \sum_{i,j,k=1}^d \E_{X_t \sim Q_t} [\partial^3_{ijk} \log q_{t-1}(\mu_t(X_t)) \partial^3_{ijk} \log q_t(X_t) ].
\end{align*}
Using \cref{lem:tweedie_high_mom,lem:accl-Ep-moments}, and when the partial derivatives satisfy \cref{ass:regular-drv-plus}, the rate for $T_5$, $T_7$, and $T_p (p \geq 8)$ can also be determined:
\begin{align*}
    &\E_{X_t \sim Q_t} \brc{ \E_{X_{t-1} \sim Q_{t-1|t}} - \E_{X_{t-1} \sim P'_{t-1|t}} } \sbrc{T_5(\log q_{t-1},X_{t-1},\mu_t)}\\
    &\quad = \E_{X_{t-1},X_t \sim Q_{t-1,t}} \sbrc{T_5(\log q_{t-1},X_{t-1},\mu_t)} \\
    &\quad = \Tilde{O}((1-\alpha_t)^4),\\
    &\E_{X_t \sim Q_t} \brc{ \E_{X_{t-1} \sim Q_{t-1|t}} - \E_{X_{t-1} \sim P'_{t-1|t}} } \sbrc{T_7(\log q_{t-1},X_{t-1},\mu_t)} \\
    &\quad = \E_{X_{t-1},X_t \sim Q_{t-1,t}} \sbrc{T_7(\log q_{t-1},X_{t-1},\mu_t)} \\
    &\quad = \Tilde{O}((1-\alpha_t)^5),\\
    &\E_{X_t \sim Q_t} \brc{ \E_{X_{t-1} \sim Q_{t-1|t}} - \E_{X_{t-1} \sim P'_{t-1|t}} } \sbrc{T_p(\log q_{t-1},X_{t-1},\mu_t)} \\
    &\quad = \Tilde{O}((1-\alpha_t)^4),~\forall p \geq 8.
\end{align*}

The remaining orders are $T_4$ and $T_6$. The following proof will draw from the results in \cref{lem:tweedie,lem:accl-tweedie,lem:accl-Ep-moments}. Fix $p \geq 1$. Write $Z_i = X_{t-1}^i-\mu_t^i$ and $A^{ij} = [A_t]^{ij}$ for $i,j \in [d]$. For $T_4$, let $i,j,k,l \in [d]$ all differ, and the difference (in expectation) of each term of $T_4$ is
\begin{align*}
    &\E_{X_{t-1} \sim Q_{t-1|t}} [Z_i^4] - \E_{X_{t-1} \sim P'_{t-1|t}} [Z_i^4] \\
    &\qquad = 3 \brc{\frac{1-\alpha_t}{\alpha_t}}^2 + 6 \frac{(1-\alpha_t)^3}{\alpha_t^2} \partial^2_{i i} \log q_t(x_t) - 3 \brc{\frac{1-\alpha_t}{\alpha_t}}^2 (1+A^{ii})^2 + \Tilde{O}_{\calL^p(Q_t)}\brc{(1-\alpha_t)^4}\\
    &\qquad = - 3 \brc{\frac{1-\alpha_t}{\alpha_t}}^2 (A^{ii})^2 + \Tilde{O}_{\calL^p(Q_t)}\brc{(1-\alpha_t)^4}, \\
    &\E_{X_{t-1} \sim Q_{t-1|t}} [Z_i^3 Z_j] - \E_{X_{t-1} \sim P'_{t-1|t}} [Z_i^3 Z_j] \\
    &\qquad = 3 \frac{(1-\alpha_t)^3}{\alpha_t^2} \partial^2_{i j} \log q_t(x_t) - 3 \brc{\frac{1-\alpha_t}{\alpha_t}}^2 A^{ij} (1+A^{ii}) + \Tilde{O}_{\calL^p(Q_t)}\brc{(1-\alpha_t)^4}\\
    &\qquad = - 3 \brc{\frac{1-\alpha_t}{\alpha_t}}^2 A^{ij} A^{ii} + \Tilde{O}_{\calL^p(Q_t)}\brc{(1-\alpha_t)^4},\\
    &\E_{X_{t-1} \sim Q_{t-1|t}} [Z_i^2 Z_j^2] - \E_{X_{t-1} \sim P'_{t-1|t}} [Z_i^2 Z_j^2] \\
    &\qquad= \brc{\frac{1-\alpha_t}{\alpha_t}}^2 + \frac{(1-\alpha_t)^3}{\alpha_t^2} \brc{\partial^2_{i i} \log q_t(x_t) + \partial^2_{j j} \log q_t(x_t)} - \brc{\frac{1-\alpha_t}{\alpha_t}}^2 (1+A^{ii}) (1+A^{jj}) \\
    &\qquad \quad + \Tilde{O}_{\calL^p(Q_t)}\brc{(1-\alpha_t)^4} \\
    &\qquad = - \brc{\frac{1-\alpha_t}{\alpha_t}}^2 A^{ii} A^{jj} + \Tilde{O}_{\calL^p(Q_t)}\brc{(1-\alpha_t)^4},\\
    &\E_{X_{t-1} \sim Q_{t-1|t}} [Z_i^2 Z_j Z_k] - \E_{X_{t-1} \sim P'_{t-1|t}} [Z_i^2 Z_j Z_k] \\
    &\qquad = \frac{(1-\alpha_t)^3}{\alpha_t^2} \partial^2_{j k} \log q_t(x_t) - \brc{\frac{1-\alpha_t}{\alpha_t}}^2 (1+A^{ii}) A^{jk} + \Tilde{O}_{\calL^p(Q_t)}\brc{(1-\alpha_t)^4} \\
    &\qquad = - \frac{(1-\alpha_t)^2}{\alpha_t^2} A^{ii} A^{jk} + \Tilde{O}_{\calL^p(Q_t)}\brc{(1-\alpha_t)^4}, \\
    &\E_{X_{t-1} \sim Q_{t-1|t}} [Z_i Z_j Z_k Z_l] - \E_{X_{t-1} \sim P'_{t-1|t}} [Z_i Z_j Z_k Z_l] = \Tilde{O}_{\calL^p(Q_t)}\brc{(1-\alpha_t)^4}.
\end{align*}
Recall from \eqref{eq:def-accl-At-Bt} that $A_t = (1-\alpha_t) \nabla^2 \log q_{t}(x_t) = \Tilde{O}_{\calL^p(Q_t)}(1-\alpha_t)$ under \cref{ass:regular-drv-plus}. Hence, many low-order terms above are cancelled, and we get
\[ \brc{ \E_{X_{t-1} \sim Q_{t-1|t}} - \E_{X_{t-1} \sim P'_{t-1|t}} } \sbrc{T_4(\log q_{t-1},X_{t-1},\mu_t)} = \Tilde{O}_{\calL^p(Q_t)}\brc{(1-\alpha_t)^4}. \]

Now we turn to $T_6$. Let $i,j,k \in [d]$ all differ, and the difference (in expectation) of each lowest-order term of $T_6$ is
\begin{align*}
    &\E_{X_{t-1} \sim Q_{t-1|t}} [Z_i^6] - \E_{X_{t-1} \sim P'_{t-1|t}} [Z_i^6] \\
    &\qquad = 15 \brc{\frac{1-\alpha_t}{\alpha_t}}^3 - 15 \brc{\frac{1-\alpha_t}{\alpha_t}}^3 (1+A^{ii})^3 + \Tilde{O}_{\calL^p(Q_t)}((1-\alpha_t)^4), \\
    &\E_{X_{t-1} \sim Q_{t-1|t}} [Z_i^4 Z_j^2] - \E_{X_{t-1} \sim P'_{t-1|t}} [Z_i^4 Z_j^2] \\
    &\qquad= 3 \brc{\frac{1-\alpha_t}{\alpha_t}}^3 - 3 \brc{\frac{1-\alpha_t}{\alpha_t}}^3 (1+A^{ii})^2 (1+A^{jj}) + \Tilde{O}_{\calL^p(Q_t)}((1-\alpha_t)^4),\\
    &\E_{X_{t-1} \sim Q_{t-1|t}} [Z_i^2 Z_j^2 Z_k^2] - \E_{X_{t-1} \sim P'_{t-1|t}} [Z_i^2 Z_j^2 Z_k^2] \\
    &\qquad= \brc{\frac{1-\alpha_t}{\alpha_t}}^3 - \brc{\frac{1-\alpha_t}{\alpha_t}}^3 (1+A^{ii}) (1+A^{jj}) (1+A^{kk}) + \Tilde{O}_{\calL^p(Q_t)}((1-\alpha_t)^4).
\end{align*}
Also, by \cref{lem:accl-Ep-moments,lem:accl-tweedie}, the rest of the terms already satisfy $\Tilde{O}_{\calL^p(Q_t)}((1-\alpha_t)^4)$ under \cref{ass:regular-drv-plus}. The low-order terms cancel in the same way as for $T_4$, and thus,
\[ \brc{ \E_{X_{t-1} \sim Q_{t-1|t}} - \E_{X_{t-1} \sim P'_{t-1|t}} } \sbrc{T_6(\log q_{t-1},X_{t-1},\mu_t)} = \Tilde{O}_{\calL^p(Q_t)}((1-\alpha_t)^4). \]

Therefore, the lowest order term above is $T_3$, whose order is $\Tilde{O}_{\calL^p(Q_t)}((1-\alpha_t)^3)$. The proof is now complete.

\subsection{Proof of \texorpdfstring{\Cref{lem:accl-Eq-Ep-zeta-approx}}{Corollary 3}}
\label{app:proof_lem_accl-Eq-Ep-zeta-approx}

The proof is very similar to \Cref{lem:accl-Eq-Ep-zeta} and \eqref{eq:rev_err_zeta}, except with a perturbed covariance matrix. We employ the notations $\Tilde{A}_t$ and $\Tilde{B}_t$ from \Cref{rmk:At_Bt_approx}. Here we have that $\Tilde{A}_t(X_t) = A_t(X_t) + \Xi_t(X_t)$, and thus, $\forall r \geq 1$,
\begin{align*}
    \Tilde{B}_t(X_t) &= B_t(X_t) + \Tilde{O}_{\calL^r(Q_t)}\brc{(1-\alpha_t)^2} = A_t(X_t) + \Tilde{O}_{\calL^r(Q_t)}\brc{(1-\alpha_t)^2} \\
    &= (1-\alpha_t) \nabla^2 \log q_t(X_t) + \Tilde{O}_{\calL^r(Q_t)}\brc{(1-\alpha_t)^2}. 
\end{align*}
Compare with the proof of \Cref{lem:accl-Eq-Ep-zeta}, the only difference is the expected difference of $T'_2$. Since $\Tilde{A}_t(X_t) = A_t(X_t) + \Tilde{O}_{\calL^r(Q_t)}\brc{(1-\alpha_t)^2}$ and $\Tilde{B}_t(X_t) = B_t(X_t) + \Tilde{O}_{\calL^r(Q_t)}\brc{(1-\alpha_t)^2}$, the expected differences of all higher order $T_p$'s have the same rate as the non-perturbed case.

Now, for any fixed $x_t$ and $r \geq 1$,
\begin{align*}
    &\E_{X_{t-1} \sim P'_{t-1|t}}[T'_2] \\
    &= \frac{1}{2} \E_{X_{t-1} \sim P'_{t-1|t}} \sbrc{(X_{t-1}-\mu_t)^\T \brc{\nabla^2 \log q_{t-1}(\mu_t) - \frac{\alpha_t}{1-\alpha_t} \Tilde{B}_t} (X_{t-1}-\mu_t)}\\
    &= \frac{1}{2} \Tr\brc{\brc{\nabla^2 \log q_{t-1}(\mu_t) - \frac{\alpha_t}{1-\alpha_t} \Tilde{B}_t} \Tilde{\Sigma}_t},
\end{align*}
and, from \Cref{lem:tweedie},
\begin{align*}
    &\E_{X_{t-1} \sim Q_{t-1|t}}[T'_2] \\
    &= \frac{1}{2} \E_{X_{t-1} \sim Q_{t-1|t}} \sbrc{(X_{t-1}-\mu_t)^\T \brc{\nabla^2 \log q_{t-1}(\mu_t) - \frac{\alpha_t}{1-\alpha_t} \Tilde{B}_t} (X_{t-1}-\mu_t)}\\
    &= \frac{1}{2} \Tr\brc{\brc{\nabla^2 \log q_{t-1}(\mu_t) - \frac{\alpha_t}{1-\alpha_t} \Tilde{B}_t} \Sigma_t}.
\end{align*}
Thus,
\begin{align*}
    &\brc{ \E_{X_{t-1} \sim Q_{t-1|t}} - \E_{X_{t-1} \sim P'_{t-1|t}} } \sbrc{T'_2(\log q_{t-1},X_{t-1},\mu_t)}\\
    &= \frac{1}{2} \Tr\brc{\brc{\nabla^2 \log q_{t-1}(\mu_t) - \frac{\alpha_t}{1-\alpha_t} \Tilde{B}_t} \brc{\Sigma_t - \Tilde{\Sigma}_t}} \\
    &= - \frac{1-\alpha_t}{2 \alpha_t} \Tr\brc{\brc{\nabla^2 \log q_{t-1}(\mu_t) - \frac{\alpha_t}{1-\alpha_t} \Tilde{B}_t} \Xi_t} \\
    &= - \frac{1-\alpha_t}{2 \alpha_t} \Tr\brc{\brc{\nabla^2 \log q_{t-1}(\mu_t) - \alpha_t \nabla^2 \log q_t(X_t) } \Xi_t} + \Tilde{O}_{\calL^r(Q_t)}\brc{(1-\alpha_t)^4}.
\end{align*}
Note that here the first term is in the order $\Tilde{O}_{\calL^r(Q_t)}\brc{(1-\alpha_t)^3}$ under \Cref{ass:regular-drv-plus} since $\Xi_t(X_t) = \Tilde{O}_{\calL^r(Q_t)}\brc{(1-\alpha_t)^2}$.
Therefore, under the perturbed case,
\begin{align*}
    &\E_{X_t \sim Q_t} \brc{ \E_{X_{t-1} \sim Q_{t-1|t}} - \E_{X_{t-1} \sim P'_{t-1|t}} } [\zeta'_{t,t-1}] \\
    &\qquad = - \frac{1-\alpha_t}{2 \alpha_t} \E_{X_t \sim Q_t} \Tr\brc{\brc{\nabla^2 \log q_{t-1}(\mu_t(X_t)) - \alpha_t \nabla^2 \log q_t(X_t) } \Xi_t(X_t)} \\
    &\qquad + \frac{(1-\alpha_t)^3}{3! \alpha_t^{3/2}} \sum_{i,j,k=1}^d \E_{X_t \sim Q_t} [\partial^3_{ijk} \log q_{t-1}(\mu_t(X_t)) \partial^3_{ijk} \log q_t(X_t)] \\
    &\qquad + \Tilde{O}((1-\alpha_t)^4).
\end{align*}
The final result can be achieved using \eqref{eq:rev_err_zeta}. The proof is complete.

\subsection{Proof of \texorpdfstring{\cref{lem:wass}}{Lemma 12}}
\label{app:proof_lem_wass}
From \eqref{eq:def_forward_proc}, the forward process at the first step is
\[ x_1 = \sqrt{\alpha_1} x_0 + \sqrt{1 - \alpha_1} w_1 \]
where $w_1 \sim \calN(0,I_d)$ is independent of $Q_0$. Thus,
\begin{align*}
    \E_{X_1 \sim Q_1,X_0 \sim Q_0} \norm{X_1 - X_0}^2 &= \E_{W_1 \sim \calN(0,I_d),X_0 \sim Q_0} \norm{\sqrt{1-\alpha_1}W_1 + (\sqrt{\alpha_t}-1)X_0}^2\\
    &\stackrel{(i)}{=} \E_{W_1 \sim \calN(0,I_d)} \norm{\sqrt{1-\alpha_1}W_1}^2 + \E_{X_0 \sim Q_0} \norm{(\sqrt{\alpha_t}-1)X_0}^2\\
    &\stackrel{(ii)}{\leq} (1-\alpha_1) d + (\sqrt{\alpha_1}-1)^2 M_2 d\\
    &\stackrel{(iii)}{\leq} (1-\alpha_1) (M_2+1) d
\end{align*}
where $(i)$ follows from independence, $(ii)$ follows from \cref{ass:m2}, and $(iii)$ follows because $(\sqrt{z}-1)^2 \leq 1-z$ for all $z \in [0,1]$. The proof is complete since $\rmW_2(Q_0,Q_1)^2 \leq \E_{X_1 \sim Q_1,X_0 \sim Q_0} \norm{X_1 - X_0}^2$ by the definition of Wasserstein-2 distance.

\section{Proof of \texorpdfstring{\cref{cor:accl_lip_int_hess,cor:accl_gauss_mix,cor:accl_bdd_supp,cor:accl_lip_q0_hess}}{Theorems 2 to 5}}

In this section, we instantiate \cref{thm:accl-main} (along with \cref{thm:main_non_smooth_accl}) to provide upper bounds that have explicit parameter dependency for a number of interesting distribution classes.
In order to obtain an upper bound that explicitly depends on system parameters, we need only to provide an explicit bound on the reverse-step error, which is the main topic that we address in the following subsections.


\subsection{Proof of \texorpdfstring{\cref{cor:accl_gauss_mix}}{Theorem 2}}
\label{app:proof_cor_accl_gauss_mix}

We first introduce some relevant notations. Given that $Q_0$ is Gaussian mixture, the p.d.f. of $q_t$ at each time $t \geq 1$ can be calculated as
\begin{align*}
    q_t(x) &= \int_{x_0 \in \mbR^d} q_{t|0}(x|x_0) \sum_{n=1}^N \pi_n q_{0,n}(x_0) \d x_0 \\
    &= \sum_{n=1}^N \pi_n \int_{x_0 \in \mbR^d} q_{t|0}(x|x_0) q_{0,n}(x_0) \d x_0 =: \sum_{n=1}^N \pi_n q_{t,n}(x).
\end{align*}
Since the convolution of two Gaussian density is still Gaussian, we have that $q_{t,n}$ is the p.d.f. of $\calN(\mu_{t,n}, \Sigma_{t,n})$, where $\mu_{t,n} := \sqrt{\Bar{\alpha}_t} \mu_{0,n}$ and $\Sigma_{t,n} := \Bar{\alpha}_t \Sigma_{0,n} + (1-\Bar{\alpha}_t) I_d$. Note that $\Sigma_{t,n}$ has full rank.

\subsubsection{Checking \texorpdfstring{\cref{ass:regular-drv}}{Assumption 4}}

We first verify \cref{ass:regular-drv} for Gaussian mixture $Q_0$ for any $\alpha_t$ that satisfies \cref{def:accl-noise-smooth}. The intuition is that its Gaussian-like tail (for all $t \geq 0$) is sufficient to control all higher-order derivatives of $\log q_t$.

In the following, \cref{lem:all-deriv-bounded-general} provides an upper bound on any order of partial derivative of a Gaussian mixture density for any fixed $x_t$, as long as each mixture component is well controlled. This directly implies that the partial derivatives are also well controlled in expectation, and thus we verify \cref{ass:regular-drv} for Gaussian mixture in \cref{cor:all-deriv-bounded-gauss-mix}. 

\begin{lemma} \label{lem:all-deriv-bounded-general}
Let $g(x|z)$ be the conditional Gaussian p.d.f. of $\calN(\mu_z,\Sigma_z)$. Define $q(x) := \int g(x|z) \d \Pi(z)$, where $\Pi(z)$ is a mixing distribution (and denote $\calZ$ its support).
Suppose $b := \sup_{z \in \calZ} \norm{\mu_z} < \infty$, and 
suppose the following conditions on $\Sigma_z$ hold for all $z \in \calZ$:
\begin{enumerate}
    \item There exist $u,U \in \mbR$ such that $u \leq \det(\Sigma_z) \leq U$;
    \item There exists $V \in \mbR$ such that $\norm{\Sigma_z^{-1}} \leq V$; 
    \item There exists $w \in \mbR$ such that $\sup_{z \in \calZ, i,j \in [d]^2} \abs{[\Sigma_z^{-\frac{1}{2}}]^{ij} } \leq w$.
\end{enumerate}
Then,
\[ 
\abs{\partial_{\bm{a}}^k \log q(x)} \leq \min \cbrc{ C^k B_k \frac{d^{2 k} \max\cbrc{w,1}^k}{u^{k/2}} U^k e^{k \frac{V}{2} (\norm{x}^2 + b^2)}, B_k \frac{d^{2 k} \max\cbrc{w,1}^k}{u^{k/2}} \abs{\mathrm{poly}_k(x)}}, \]
where $B_k$ is the Bell number, $C$ is some constant, and $\mathrm{poly}_k(x)$ is some $k$-th order polynomial in $x$.
\end{lemma}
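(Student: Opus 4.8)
The plan is to apply the multivariate Fa\'a di Bruno formula to the composition $\log\circ\,q$ and then exploit the explicit Gaussian form of each mixture component. For a multi-index $\bm a=(a_1,\dots,a_k)$, write $\bm a_B$ for the sub-multi-index picked out by a block $B$; Fa\'a di Bruno gives
\[
\partial^k_{\bm a}\log q(x)\;=\;\sum_{\pi\in\mathcal P([k])}(-1)^{|\pi|-1}(|\pi|-1)!\;\frac{\prod_{B\in\pi}\partial_{\bm a_B}q(x)}{q(x)^{|\pi|}},
\]
where $\mathcal P([k])$ is the set of set-partitions of $\{1,\dots,k\}$ and $(-1)^{|\pi|-1}(|\pi|-1)!$ is the $|\pi|$-th derivative of $\log$ at $q(x)$. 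The number of summands is the Bell number $B_k$, which is the source of that factor; it then remains to bound $\bigl|\prod_{B\in\pi}\partial_{\bm a_B}q(x)\bigr|/q(x)^{|\pi|}$ uniformly in $\pi$ and to sum.

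\textbf{Differentiating one Gaussian component.} Fix $z$ and set $y_z:=\Sigma_z^{-1/2}(x-\mu_z)$, so $g(x|z)=(2\pi)^{-d/2}\det(\Sigma_z)^{-1/2}e^{-\|y_z\|^2/2}$ and $\partial_{x^i}=\sum_j[\Sigma_z^{-1/2}]^{ji}\partial_{y^j}$. Hence, for a block $B$ with $|B|=m$,
\[
\partial_{\bm a_B}g(x|z)=\frac{(-1)^m}{(2\pi)^{d/2}\det(\Sigma_z)^{1/2}}\sum_{\bm c\in[d]^m}\Bigl(\prod_{l=1}^m[\Sigma_z^{-1/2}]^{c_l\,(\bm a_B)_l}\Bigr)\mathrm{He}_{\bm c}(y_z)\,e^{-\|y_z\|^2/2},
\]
with $\mathrm{He}_{\bm c}$ a multivariate (probabilists') Hermite polynomial of degree $\le m$. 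I would bound the Hermite part two ways. First, Cram\'er's inequality gives $|\mathrm{He}_{\bm c}(y)|\,e^{-\|y\|^2/2}\le C^m\sqrt{m!}$ uniformly in $y$; together with $\det\Sigma_z\ge u$ and $|[\Sigma_z^{-1/2}]^{ij}|\le w$ this yields $|\partial_{\bm a_B}g(x|z)|\lesssim u^{-1/2}\,d^m\max\{w,1\}^m\,C^m\sqrt{m!}$. Second, $\mathrm{He}_{\bm c}(y_z)$ is a degree-$m$ polynomial in $y_z$, hence in $x$ (since $y_z$ is affine in $x$ with coefficients controlled by $w$ and $b=\sup_z\|\mu_z\|$); keeping $e^{-\|y_z\|^2/2}\le1$ this yields $|\partial_{\bm a_B}g(x|z)|\lesssim u^{-1/2}\,d^m\max\{w,1\}^m\,|\mathrm{poly}_m(x)|$.

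\textbf{Controlling the denominator $q(x)^{|\pi|}$.} This is the crux, since $q(x)\to0$ as $\|x\|\to\infty$, so a naive $|\partial_{\bm a_B}q(x)|\le\int|\partial_{\bm a_B}g(x|z)|\,d\Pi(z)$ followed by division leaves an uncontrolled $1/q(x)$. For the exponential estimate I would lower-bound
\[
q(x)\;\ge\;\inf_{z\in\calZ}g(x|z)\;\ge\;\frac{1}{(2\pi)^{d/2}U^{1/2}}\exp\!\Bigl(-\tfrac{V}{2}\sup_{z\in\calZ}\|x-\mu_z\|^2\Bigr)\;\ge\;\frac{1}{(2\pi)^{d/2}U^{1/2}}\exp\!\Bigl(-\tfrac{V}{2}(\|x\|+b)^2\Bigr),
\]
using $\det\Sigma_z\le U$ and $\|\Sigma_z^{-1}\|\le V$; feeding this and the uniform numerator bound into the Fa\'a di Bruno sum and summing over the $B_k$ partitions gives the first, exponential-in-$\|x\|^2$ bound. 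For the polynomial estimate I would instead cancel $q$ exactly: with $d\Pi_x(z)\propto g(x|z)\,d\Pi(z)$,
\[
\frac{\partial_{\bm a_B}q(x)}{q(x)}=\E_{Z\sim\Pi_x}\Bigl[\frac{\partial_{\bm a_B}g(x|Z)}{g(x|Z)}\Bigr],
\]
so $\bigl|\partial_{\bm a_B}q(x)/q(x)\bigr|\le\sup_{z\in\calZ}\bigl|\partial_{\bm a_B}g(x|z)/g(x|z)\bigr|$, which by the Hermite representation is a polynomial in $x$ bounded uniformly in $z$; multiplying the $|\pi|$ blocks and summing over partitions gives the second bound. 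Taking the minimum of the two finishes the proof; note the polynomial bound, unlike the exponential one, has all $Q_t$-moments finite, which is what \cref{ass:regular-drv} ultimately requires.

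\textbf{Main obstacle and remaining work.} The conceptually delicate point is precisely the handling of $1/q(x)^{|\pi|}$: the two devices above — a uniform Gaussian lower bound on $q$ for the exponential estimate, and the posterior-reweighting identity that makes the $q$'s cancel for the polynomial estimate — are what prevent a spurious blow-up, and each uses the hypotheses (bounded $\det\Sigma_z$, bounded $\|\Sigma_z^{-1}\|$, bounded entries of $\Sigma_z^{-1/2}$, bounded $\|\mu_z\|$) in a specific place. The remainder is routine but tedious bookkeeping: collecting the factor $B_k$ from the number of partitions, absorbing the coefficient $(|\pi|-1)!$ and the Hermite-coefficient factorials via $\sum_\pi(|\pi|-1)!\le k!$ and $\prod_{B\in\pi}|B|!\le k!$, accumulating the $d$-powers from the block structure and from the sums $\sum_{\bm c\in[d]^m}$ into $d^{2k}$ using $\sum_{B\in\pi}|B|=k$, and the $w$- and $u$-powers from conditions (2)--(3) and the determinant lower bound, into the stated closed form.
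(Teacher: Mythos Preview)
Your proposal is correct and follows essentially the same route as the paper: Fa\'a di Bruno applied to $\log\circ\,q$ (yielding the $B_k$ count of partitions), differentiation of each Gaussian component via the chain rule through $y_z=\Sigma_z^{-1/2}(x-\mu_z)$ (the paper writes this as derivatives of $\phi$ rather than naming Hermite polynomials, but it is the same computation), and then the two-pronged handling of the denominator $q(x)^{|\pi|}$ --- a uniform Gaussian lower bound on $q$ for the exponential estimate, and the cancellation $\partial_{\bm a_B}q/q=\int(\partial_{\bm a_B}g/g)\,d\Pi_x$ bounded by a $z$-uniform polynomial for the polynomial estimate. The only cosmetic differences are that you invoke Cram\'er's inequality and the posterior-reweighting identity by name, whereas the paper argues more directly via $\sup_y|\phi^{(k)}(y)|<\infty$ and by pulling the $z$-supremum of the polynomial factor outside the mixture integral; the structure and the use of each hypothesis $(u,U,V,w,b)$ are the same.
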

\begin{proof}
See \cref{app:proof-lem-all-deriv-bounded-general}.
\end{proof}

\begin{lemma} \label{cor:all-deriv-bounded-gauss-mix}
When $Q_0$ is Gaussian mixture (see \cref{cor:accl_gauss_mix}), \cref{ass:regular-drv} is satisfied.
\end{lemma}
\begin{proof}
See \cref{app:proof-cor-all-deriv-bounded-gauss-mix}.
\end{proof}

\subsubsection{Expressing \texorpdfstring{$\partial^3_{ijk} \log q_t$}{Third-order Partial Derivatives}}

Now we continue from \cref{thm:accl-main} to work for an explicit dependency on $d$.
We first calculate the second partial derivative of its log-p.d.f. as
\begin{align} \label{eq:gaussian-mixture-logqt-hessian}
    &\nabla^2 \log q_t(x) \nonumber\\
    &= \frac{1}{q_t^2(x)} \Bigg(q_t(x) \brc{\sum_{n} \pi_n q_{t,n}(x) \brc{\Sigma_{t,n}^{-1}(x-\mu_{t,n})(x-\mu_{t,n})^\T \Sigma_{t,n}^{-1} -\Sigma_{t,n}^{-1}}} \nonumber\\
    &\qquad - \brc{\sum_{n} \pi_n q_{t,n}(x) \Sigma_{t,n}^{-1}(x-\mu_{t,n})}\brc{\sum_{n} \pi_n q_{t,n}(x) \Sigma_{t,n}^{-1}(x-\mu_{t,n})}^\T \Bigg).
\end{align}
Now write $z_{t,n}(x) := \Sigma_{t,n}^{-1}(x-\mu_{t,n})$. Note that $\partial_k z_{t,n}^i = [\Sigma_{t,n}^{-1}]^{ik}$, and that $\partial_{k} q_{t,n}(x) = q_{t,n}(x) (-z_{t,n}^k(x))$. We can rewrite \eqref{eq:gaussian-mixture-logqt-hessian} as
\begin{align*}
    \partial^2_{ij} \log q_t(x) &= \frac{1}{q_t^2(x)} \Bigg( q_t(x) \underbrace{\sum_{n=1}^N \pi_n q_{t,n}(x) \brc{ z_{t,n}^i(x) z_{t,n}^j(x) - [\Sigma_{t,n}^{-1}]^{ij} }}_{\text{N1}} \\
    &\qquad - \underbrace{\brc{\sum_{n} \pi_n q_{t,n}(x) z_{t,n}^i(x)} \brc{\sum_{n} \pi_n q_{t,n}(x) z_{t,n}^j(x)}}_{\text{N2}} \Bigg).
\end{align*}

To calculate the third partial derivative of its log-p.d.f., we need first to calculate the partial derivative of N1 and N2. The derivative for N1 is given by
\begin{align*}
    &\partial_{k} \sum_{n=1}^N \pi_n q_{t,n}(x) \brc{ z_{t,n}^i(x) z_{t,n}^j(x) - [\Sigma_{t,n}^{-1}]^{ij} } \\
    &= \sum_{n=1}^N \pi_n q_{t,n}(x) (-z_{t,n}^k(x)) \brc{ z_{t,n}^i(x) z_{t,n}^j(x) - [\Sigma_{t,n}^{-1}]^{ij} } \\
    &\qquad + \sum_{n=1}^N \pi_n q_{t,n}(x) [\Sigma_{t,n}^{-1}]^{ik} z_{t,n}^j(x) + \pi_n q_{t,n}(x) [\Sigma_{t,n}^{-1}]^{jk} z_{t,n}^i(x),
\end{align*}
and the derivative for term N2 is given by
\begin{align*}
    &\partial_{k} \brc{\sum_{n=1}^N \pi_n q_{t,n}(x) z_{t,n}^i(x)  \sum_{n=1}^N \pi_n q_{t,n}(x) z_{t,n}^j(x) }\\
    &= \sum_{n=1}^N \pi_n q_{t,n}(x) \brc{(-z_{t,n}^k(x)) z_{t,n}^i(x) + [\Sigma_{t,n}^{-1}]^{ik} }  \sum_{n=1}^N \pi_n q_{t,n}(x) z_{t,n}^j(x)  \\
    &\qquad + \sum_{n=1}^N \pi_n q_{t,n}(x) z_{t,n}^i(x)  \sum_{n=1}^N \pi_n q_{t,n}(x) \brc{(-z_{t,n}^k(x)) z_{t,n}^j(x) + [\Sigma_{t,n}^{-1}]^{jk} }.
\end{align*}
Combining these, the derivative for the numerator is
\begin{align*}
    &\partial_{k} (q_t(x)\text{N1}-\text{N2}) = \partial_{k} (q_t(x)) \text{N1} + q_t(x) \partial_{k}(\text{N1}) - \partial_{k}(\text{N2})\\
    &= - \sum_{n=1}^N \pi_n q_{t,n}(x) z_{t,n}^k(x)  \sum_{n=1}^N \pi_n q_{t,n}(x) \brc{ z_{t,n}^i(x) z_{t,n}^j(x) - [\Sigma_{t,n}^{-1}]^{ij} } \\
    &\qquad + q_t(x) \Bigg(\sum_{n=1}^N \pi_n q_{t,n}(x) (-z_{t,n}^k(x)) \brc{ z_{t,n}^i(x) z_{t,n}^j(x) - [\Sigma_{t,n}^{-1}]^{ij} } \\
    &\qquad + \pi_n q_{t,n}(x) [\Sigma_{t,n}^{-1}]^{ik} z_{t,n}^j(x) + \pi_n q_{t,n}(x) [\Sigma_{t,n}^{-1}]^{jk} z_{t,n}^i(x)\Bigg) \\
    &\qquad - \sum_{n=1}^N \pi_n q_{t,n}(x) \brc{(-z_{t,n}^k(x)) z_{t,n}^i(x) + [\Sigma_{t,n}^{-1}]^{ik} }  \sum_{n=1}^N \pi_n q_{t,n}(x) z_{t,n}^j(x)  \\
    &\qquad - \sum_{n=1}^N \pi_n q_{t,n}(x) z_{t,n}^i(x)  \sum_{n=1}^N \pi_n q_{t,n}(x) \brc{(-z_{t,n}^k(x)) z_{t,n}^j(x) + [\Sigma_{t,n}^{-1}]^{jk} }\\
    &= - q_t(x) \sum_{n=1}^N \pi_n q_{t,n}(x) z_{t,n}^i(x) z_{t,n}^j(x) z_{t,n}^k(x) - \sum_{n=1}^N \pi_n q_{t,n}(x) z_{t,n}^k(x)  \sum_{n=1}^N \pi_n q_{t,n}(x) z_{t,n}^i(x) z_{t,n}^j(x) \\
    &\qquad  + \sum_{n=1}^N \pi_n q_{t,n}(x) z_{t,n}^j(x)  \sum_{n=1}^N \pi_n q_{t,n}(x) z_{t,n}^i(x) z_{t,n}^k(x) + \sum_{n=1}^N \pi_n q_{t,n}(x) z_{t,n}^i(x)  \sum_{n=1}^N \pi_n q_{t,n}(x) z_{t,n}^j(x) z_{t,n}^k(x) \\
    &\qquad + q_t(x) \sum_{n=1}^N \pi_n q_{t,n}(x) [\Sigma_{t,n}^{-1}]^{ij} z_{t,n}^k(x) + \sum_{n=1}^N \pi_n q_{t,n}(x) [\Sigma_{t,n}^{-1}]^{ij} \sum_{n=1}^N \pi_n q_{t,n}(x) z_{t,n}^k(x) \\
    &\qquad + q_t(x) \sum_{n=1}^N \pi_n q_{t,n}(x) [\Sigma_{t,n}^{-1}]^{ik} z_{t,n}^j(x) - \sum_{n=1}^N \pi_n q_{t,n}(x) [\Sigma_{t,n}^{-1}]^{ik} \sum_{n=1}^N \pi_n q_{t,n}(x) z_{t,n}^j(x) \\
    &\qquad + q_t(x) \sum_{n=1}^N \pi_n q_{t,n}(x) [\Sigma_{t,n}^{-1}]^{jk} z_{t,n}^i(x) - \sum_{n=1}^N \pi_n q_{t,n}(x) z_{t,n}^i(x)  \sum_{n=1}^N \pi_n q_{t,n}(x) [\Sigma_{t,n}^{-1}]^{jk}.
\end{align*}
Since
\begin{align*}
    \partial^3_{ijk} \log q_t (x) &= \partial_{k} \brc{ \frac{q_t(x) \text{N1} - \text{N2}}{q_t^2(x)}} \\
    &= \frac{1}{q_t^3(x)} \brc{\partial_{k} (q_t(x)\text{N1}-\text{N2}) q_t(x) + 2 (q_t(x)\text{N1}-\text{N2}) \sum_{n=1}^N \pi_n q_{t,n}(x) z_{t,n}^k(x)},
\end{align*}
we get
\begin{align*}
    &q_t^3(x) \partial^3_{ijk} \log q_t (x)\\
    &= - q^2_t(x) \sum_{n=1}^N \pi_n q_{t,n}(x) z_{t,n}^i(x) z_{t,n}^j(x) z_{t,n}^k(x) + q_t(x) \sum_{n=1}^N \pi_n q_{t,n}(x) z_{t,n}^k(x)  \sum_{n=1}^N \pi_n q_{t,n}(x) z_{t,n}^i(x) z_{t,n}^j(x) \\
    &\qquad  + q_t(x) \sum_{n=1}^N \pi_n q_{t,n}(x) z_{t,n}^j(x)  \sum_{n=1}^N \pi_n q_{t,n}(x) z_{t,n}^i(x) z_{t,n}^k(x) \\
    &\qquad + q_t(x) \sum_{n=1}^N \pi_n q_{t,n}(x) z_{t,n}^i(x)  \sum_{n=1}^N \pi_n q_{t,n}(x) z_{t,n}^j(x) z_{t,n}^k(x) \\
    &\qquad - 2 \brc{\sum_{n} \pi_n q_{t,n}(x) z_{t,n}^i(x)} \brc{\sum_{n} \pi_n q_{t,n}(x) z_{t,n}^j(x)} \brc{\sum_{n=1}^N \pi_n q_{t,n}(x) z_{t,n}^k(x)}\\
    &\qquad + q_t^2(x) \sum_{n=1}^N \pi_n q_{t,n}(x) [\Sigma_{t,n}^{-1}]^{ij} z_{t,n}^k(x) - q_t(x) \sum_{n=1}^N \pi_n q_{t,n}(x) [\Sigma_{t,n}^{-1}]^{ij} \sum_{n=1}^N \pi_n q_{t,n}(x) z_{t,n}^k(x) \\
    &\qquad + q_t^2(x) \sum_{n=1}^N \pi_n q_{t,n}(x) [\Sigma_{t,n}^{-1}]^{ik} z_{t,n}^j(x) - q_t(x) \sum_{n=1}^N \pi_n q_{t,n}(x) [\Sigma_{t,n}^{-1}]^{ik} \sum_{n=1}^N \pi_n q_{t,n}(x) z_{t,n}^j(x) \\
    &\qquad + q_t^2(x) \sum_{n=1}^N \pi_n q_{t,n}(x) [\Sigma_{t,n}^{-1}]^{jk} z_{t,n}^i(x) - q_t(x) \sum_{n=1}^N \pi_n q_{t,n}(x) z_{t,n}^i(x)  \sum_{n=1}^N \pi_n q_{t,n}(x) [\Sigma_{t,n}^{-1}]^{jk}.
\end{align*}

Below, we write $\xi_t(x,i) := \max_n \abs{z_{t,n}^i(x)}$ and $\Bar{\Sigma}$ to be a matrix such that $\Bar{\Sigma}^{ij} := \max_n \abs{ [\Sigma_{t,n}^{-1}]^{ij} }$. Also write $h_{t,n} (x) = \pi_n q_{t,n}(x) / q_t(x)$. Note that for any $x$, $\sum_{n=1}^N h_{t,n} (x) = 1$. 
Therefore, we take $\max_n$ within each summation above and get
\[ \abs{\partial^3_{ijk} \log q_t (x)} \leq 6 \xi_t(x,i) \xi_t(x,j) \xi_t(x,k) + 2 \Bar{\Sigma}^{ij} \xi_t(x,k) + 2 \Bar{\Sigma}^{ik} \xi_t(x,j) + 2 \Bar{\Sigma}^{jk} \xi_t(x,i). \]

\subsubsection{Asymptotic Equivalence of \texorpdfstring{$\mu_t(x_t)$}{\mu_t} and \texorpdfstring{$x_t$}{x_t}}

Intuitively, $\mu_t(x_t)$ and $x_t$ are asymptotically close when $1-\alpha_t$ is small, which will be useful for later analysis. In this subsubsection, we will show that $\xi_{t-1}(\mu_t,i) - \xi_{t}(x_t,i) = \Tilde{O}(1-\alpha_t)$.

Note that for each $n$ and fixed $x_t$ (writing $\mu_t(x_t) = \mu_t$),
\begin{align} \label{eq:gaussian-mixture-diff-zn}
    &z_{t-1,n}(\mu_t) - z_{t,n}(x_t) \nonumber\\
    &= \Sigma_{t-1,n}^{-1}(\mu_t-\mu_{t-1,n}) - \Sigma_{t,n}^{-1}(x_t-\mu_{t,n}) \nonumber\\
    &= (\Sigma_{t-1,n}^{-1} - \Sigma_{t,n}^{-1})(\mu_t-\mu_{t-1,n}) - \Sigma_{t,n}^{-1}((x_t-\mu_{t,n}) - (\mu_t-\mu_{t-1,n})).
\end{align}
Here, since $\Sigma_{t-1,n}$ is real symmetric, we can write the eigen-decomposition as $\Sigma_{t-1,n} = U D U^\T$, where $U$ is an orthonormal matrix (having unit 2-norm) and $D$ is a diagonal matrix (with all diagonal elements positive). In the same notation, $\Sigma_{t-1,n}^{-1} = U D^{-1} U^\T$, and $\Sigma_{t,n}^{-1} = (\alpha_t \Sigma_{t-1,n} + (1-\alpha_t) I_d)^{-1} = U (\alpha_t D + (1-\alpha_t) I_d)^{-1} U^\T$. Since
\begin{align*}
    \abs{[D^{-1}]^{ii} - [(\alpha_t D + (1-\alpha_t) I_d)^{-1}]^{ii}} &= \abs{\frac{1}{D^{ii}} - \frac{1}{\alpha_t D^{ii} + (1-\alpha_t)}}\\
    &\leq \frac{(1-\alpha_t)(\abs{D^{ii}}+1)}{\alpha_t (D^{ii})^2 + (1-\alpha_t) D^{ii}}\\
    &= \Tilde{O}(1-\alpha_t),
\end{align*}
the following holds:
\[ \norm{\Sigma_{t-1,n}^{-1} - \Sigma_{t,n}^{-1}} = \Tilde{O}(1-\alpha_t). \]
Denote $[A]^{i*}$ as the $i$-th row of a matrix $A$. Thus, following from \eqref{eq:gaussian-mixture-diff-zn}, for any $i \in [d]$,
\begin{align}
    \norm{[\Sigma_{t-1,n}^{-1}]^{i*} - [\Sigma_{t,n}^{-1}]^{i*}} \stackrel{(i)}{\leq} \norm{\Sigma_{t-1,n}^{-1} - \Sigma_{t,n}^{-1}} &= \Tilde{O}(1-\alpha_t), \label{eq:gaussian-mixture-row-close}\\
    \abs{\mu_t^i - x_t^i} = \abs{\frac{1 - \sqrt{\alpha_t}}{\sqrt{\alpha_t}} x_t^i - \frac{1 - \alpha_t}{\sqrt{\alpha_t}} \partial_i \log q_t(x_t)} &= \Tilde{O}(1-\alpha_t), \nonumber\\
    \abs{\mu_{t,n}^i - \mu_{t-1,n}^i} = \abs{(1-\sqrt{\alpha_t}) \mu_{t-1,n}^i} &= \Tilde{O}(1-\alpha_t), \nonumber
\end{align}
where $(i)$ follows from the definition of matrix 2-norm and from the fact that $[\Sigma_{t,n}^{-1}]^{i*} = \Sigma_{t,n}^{-1} \bm{1}_i$ ($\bm{1}_i$ is the unit vector where the $i$-th element is 1, and recall that $\Sigma_{t,n}^{-1}$ is symmetric). This implies that $\abs{z_{t-1,n}^i(\mu_t) - z_{t,n}^i(x_t)} = \Tilde{O}(1-\alpha_t),\forall i$. Thus,
\begin{align} \label{eq:gaussian-mixture-xi-small}
    \xi_{t-1}(\mu_t,i) - \xi_{t}(x_t,i) &= \max_n \abs{z_{t-1,n}^i(\mu_t)} - \max_n \abs{z_{t,n}^i(x_t)} \nonumber\\
    &\leq \max_n \abs{z_{t-1,n}^i(\mu_t) - z_{t,n}^i(x_t)} = \Tilde{O}(1-\alpha_t),
\end{align}
where the last inequality follows because $\max_n |a_n| + \max_n |b_n| \geq \max_n (|a_n| + |b_n|) \geq \max_n |a_n + b_n|$.

Following from \cref{thm:accl-main}, we have 
\begin{align} \label{eq:gaussian-mixture-accl-main}
    &\E_{X_t \sim Q_t}\sbrc{\sum_{i,j,k=1}^d \partial^3_{ijk} \log q_{t-1}(\mu_t(X_t)) \partial^3_{ijk} \log q_t(X_t) } \nonumber\\
    &\leq \E_{X_t \sim Q_t}\Bigg[ \sum_{i,j,k=1}^d \Big(6 \xi(\mu_t(X_t),i) \xi(\mu_t(X_t),j) \xi(\mu_t(X_t),k) + 2 \Bar{\Sigma}^{ij} \xi(\mu_t(X_t),k) + 2 \Bar{\Sigma}^{ik} \xi(\mu_t(X_t),j) \nonumber \\
    &\qquad + 2 \Bar{\Sigma}^{jk} \xi(\mu_t(X_t),i)\Big) \brc{6 \xi(X_t,i) \xi(X_t,j) \xi(X_t,k) + 2 \Bar{\Sigma}^{ij} \xi(X_t,k) + 2 \Bar{\Sigma}^{ik} \xi(X_t,j) + 2 \Bar{\Sigma}^{jk} \xi(X_t,i) }\Bigg] \nonumber\\
    &\stackrel{(ii)}{\asymp} \E_{X_t \sim Q_t}\sbrc{\sum_{i,j,k=1}^d \brc{\xi(X_t,i) \xi(X_t,j) \xi(X_t,k) + \Bar{\Sigma}^{ij} \xi(X_t,k) + \Bar{\Sigma}^{ik} \xi(X_t,j) + \Bar{\Sigma}^{jk} \xi(X_t,i)}^2} \nonumber\\
    &\leq 2 \E_{X_t \sim Q_t}\sbrc{\sum_{i,j,k=1}^d \xi(X_t,i)^2 \xi(X_t,j)^2 \xi(X_t,k)^2 + (\Bar{\Sigma}^{ij})^2 \xi(X_t,k)^2 + (\Bar{\Sigma}^{ik})^2 \xi(X_t,j)^2 + (\Bar{\Sigma}^{jk})^2 \xi(X_t,i)^2}
\end{align}
where $(ii)$ follows from \eqref{eq:gaussian-mixture-xi-small}.

\subsubsection{Explicit Parameter Dependency}

We are now ready for the explicit parameter dependency for Gaussian mixture $Q_0$. In the following, we provide two different ways to upper-bound the terms in \eqref{eq:gaussian-mixture-accl-main} depending on how $N$ is compared to $d$. The first approach can be applied when $N < d$. For the $\xi(x,\cdot)~(\forall x \in \mbR^d)$ terms,
\begin{align*}
    \sum_{i=1}^d \xi(x,i)^2 &= \sum_{i=1}^d \max_n ([\Sigma_{t,n}^{-1}]^{i*} (x-\mu_{t,n}))^2 \leq \sum_{i=1}^d \sum_{n=1}^N ([\Sigma_{t,n}^{-1}]^{i*} (x-\mu_{t,n}))^2\\
    &= \sum_{n=1}^N \norm{\Sigma_{t,n}^{-1} (x-\mu_{t,n})}^2 \leq N \max_n \norm{\Sigma_{t,n}^{-1}}^2 \max_n \norm{x-\mu_{t,n}}^2 \\
    &\stackrel{(i)}{\lesssim} N \max_n \norm{x-\mu_{t,n}}^2,
\end{align*}
where $(i)$ follows because of the following.
Since $\Sigma_{t,n}$ is a (full-rank) covariance matrix, all its eigenvalues are positive. Let $\lambda_{n,\text{min}} > 0$ be the smallest eigenvalue of $\Sigma_{0,n}$, and thus
\begin{equation} \label{eq:gaussian-mixture-cov-inv-max-norm}
    \max_n \norm{\Sigma_{t,n}^{-1}}_2 \leq \frac{1}{\Bar{\alpha}_t \min_n \lambda_{n,\text{min}} + (1-\Bar{\alpha}_t)} \leq \frac{1}{\min\{1, \min_n \lambda_{n,\text{min}}\}} < \infty.
\end{equation}
In particular, this bound does not depend on $d$ or $T$. 
Also, for the $\Bar{\Sigma}$ terms,
\begin{equation*}
    \sum_{i,j=1}^d (\Bar{\Sigma}^{ij})^2 = \sum_{i,j=1}^d \max_n ( [\Sigma_{t,n}^{-1}]^{ij} )^2 \leq \sum_{i,j=1}^d \sum_{n=1}^N ( [\Sigma_{t,n}^{-1}]^{ij} )^2 = \sum_{n=1}^N \norm{\Sigma_{t,n}^{-1}}_F^2 \lesssim N d,
\end{equation*}
where the last inequality follows from \eqref{eq:gaussian-mixture-cov-inv-max-norm} and the fact that for any matrix full-rank $A$, $\norm{A}_F \leq \sqrt{d} \norm{A}_2$.
The second approach can be applied when $N \geq d$, where we can bound the $\xi(x,\cdot)~(\forall x \in \mbR^d)$ terms instead as
\begin{align*}
    &\sum_{i=1}^d \xi(x,i)^2 = \sum_{i=1}^d \max_n ([\Sigma_{t,n}^{-1}]^{i*} (x-\mu_{t,n}))^2\\
    &\stackrel{(ii)}{\leq} \sum_{i=1}^d \max_n \brc{ \norm{[\Sigma_{t,n}^{-1}]^{i*}}^2 \norm{x-\mu_{t,n}}^2} \leq \sum_{i=1}^d \max_n \norm{[\Sigma_{t,n}^{-1}]^{i*}}^2 \max_n \norm{x-\mu_{t,n}}^2\\
    &\stackrel{(iii)}{\leq} \sum_{i=1}^d \max_n \norm{\Sigma_{t,n}^{-1}}^2 \max_n \norm{x-\mu_{t,n}}^2 \stackrel{(iv)}{\lesssim} d \max_n \norm{x-\mu_{t,n}}^2.
\end{align*}
Here $(ii)$ follows from Cauchy-Schwartz inequality, $(iii)$ follows from definition of matrix 2-norm and the fact that $[\Sigma_{t,n}^{-1}]^{i*} = \Sigma_{t,n}^{-1} \bm{1}_i$ ($\bm{1}_i$ is the unit vector where the $i$-th element is 1), 
and $(iv)$ follows from \eqref{eq:gaussian-mixture-cov-inv-max-norm}. Also, for the second term, we can obtain an alternative upper bound as follows.
Write the eigen-decomposition as $\Sigma_{0,n} = Q_n \mathrm{diag}(\lambda_{n,1},\dots,\lambda_{n,d}) Q_n^\T$, where $Q_n$ here is an orthonormal matrix (that does not depend on $T$). Then,
\begin{align*}
    \Sigma_{t,n}^{-1} &= Q_n (\Bar{\alpha}_t \mathrm{diag}(\lambda_{n,1},\dots,\lambda_{n,d}) + (1-\Bar{\alpha}_t) I_d)^{-1} Q_n^\T \\
    &= Q_n \mathrm{diag}((\Bar{\alpha}_t \lambda_{n,1} + (1-\Bar{\alpha}_t))^{-1},\dots,(\Bar{\alpha}_t \lambda_{n,d} + (1-\Bar{\alpha}_t))^{-1}) Q_n^\T,
\end{align*}
and thus
\begin{align*}
    \max_{n \in [N]} \abs{ [\Sigma_{t,n}^{-1}]^{ij}} &= \max_{n \in [N]} \abs{ \sum_{k=1}^d (\Bar{\alpha}_t \lambda_{n,k} + (1-\Bar{\alpha}_t))^{-1} Q_n^{ik} Q_n^{kj} }\\
    &\leq (\min\{1,\min_n \lambda_{n,\text{min}}\} )^{-1} \max_{n \in [N], i,j\in [d]} \abs{(Q_n^{i*})^\T (Q_n^{j*})}\\
    &\leq (\min\{1,\min_n \lambda_{n,\text{min}}\} )^{-1} \max_{n \in [N], i \in [d]} \norm{Q_n^{i*}}^2\\
    &= (\min\{1,\min_n \lambda_{n,\text{min}}\} )^{-1},
\end{align*}
where the last line follows because $Q_n$ is orthonormal for all $n \in [N]$. Note that this is a uniform bound that does not depend on $N$, $T$ or $d$, which further implies that
\[ \sum_{i,j=1}^d (\Bar{\Sigma}^{ij})^2 \lesssim d^2. \]
Combining the two cases, we get
\begin{align} 
    \sum_{i=1}^d \xi(x,i)^2 &\lesssim \min\{d, N\} \max_n \norm{x-\mu_{t,n}}^2, \label{eq:gaussian-mixture-sum-xi-bound}\\
    \sum_{i,j=1}^d (\Bar{\Sigma}^{ij})^2 &\lesssim d \min\{d, N\}. \label{eq:gaussian-mixture-bar-cov-inv-max-sum}
\end{align}
Therefore, using \eqref{eq:gaussian-mixture-sum-xi-bound} and \eqref{eq:gaussian-mixture-bar-cov-inv-max-sum}, we can continue from \eqref{eq:gaussian-mixture-accl-main} and get 
\begin{multline*}
    \E_{X_t \sim Q_t}\sbrc{\sum_{i,j,k=1}^d \partial^3_{ijk} \log q_{t-1}(\mu_t(X_t)) \partial^3_{ijk} \log q_t(X_t) } \\
    \lesssim \min\{d, N\}^3 \E_{X_t \sim Q_t}\sbrc{\norm{X_t}^6 + \max_n \norm{\mu_{t,n}}^6} + (d \min\{d, N\})(d \min\{d,N\}).
\end{multline*}

Now, note that
\[ \max_n \norm{\mu_{t,n}}^6 \leq \max_n \norm{\mu_{0,n}}^6 \lesssim d^3 \]
since $\mu_{0,n} < \infty$ is a fixed vector. Also, the expected sixth power of the norm can be bounded as
\[ \E \norm{X_t}^6 = \E \sbrc{\brc{\norm{\sqrt{\Bar{\alpha}_t} X_0+ \sqrt{1-\Bar{\alpha}_t} \Bar{W}_t}^2}^3} \lesssim \E\norm{X_0}^6 + \E \norm{\Bar{W}_t}^6 \lesssim \E\norm{X_0}^6 + d^3, \]
and, when $Q_0$ is a Gaussian mixture,
\[ \int \norm{x_0}^6 q_0(x_0) \d x_0 = \sum_{n=1}^N \pi_n \int \norm{x_0}^6 q_{0,n}(x_0) \d x_0 \asymp d^3. \]
Therefore, we finally obtain a bound on the reverse-step error with explicit system parameters:
\[ \sum_{t=1}^T \E_{X_{t-1},X_t \sim Q_{t-1,t}} \sbrc{\log \frac{q_{t-1|t}(X_{t-1}|X_t)}{p'_{t-1|t}(X_{t-1}|X_t)}} \lesssim \frac{d^3 \min\{d,N\}^3 \log^3 T}{T^2}. \]

\subsection{Proof of \texorpdfstring{\cref{cor:accl_bdd_supp}}{Theorem 3}}
\label{app:proof_cor_accl_bdd_supp}

Throughout the proof of \cref{cor:accl_bdd_supp} we adopt the noise schedule $\alpha_t$ defined in \eqref{eq:alpha_genli}. We first investigate some nice properties of the noise schedule in \eqref{eq:alpha_genli}. Since $c \asymp \log(1/\delta)$, we have $1-\alpha_t \lesssim \log(1/\delta) \log T / T$.
Using a similar argument from \cite[Equation (39)]{li2023faster},
\begin{align}
    &\frac{1 - \alpha_t}{\alpha_t - \Bar{\alpha}_{t}}, \frac{1 - \alpha_t}{1 - \Bar{\alpha}_{t}}, \frac{1 - \alpha_t}{1 - \Bar{\alpha}_{t-1}} \lesssim \frac{\log(1/\delta) \log T}{T},\quad \forall 2 \leq t \leq T, \label{eq:li2023faster_eq39}\\
    &\frac{1 - \Bar{\alpha}_t}{1 - \Bar{\alpha}_{t-1}} - 1 = \frac{\Bar{\alpha}_{t-1} (1-\alpha_t)}{1 - \Bar{\alpha}_{t-1}} \leq \frac{1-\alpha_t}{1 - \Bar{\alpha}_{t-1}} = \Tilde{O}\brc{ \frac{\log T}{T}},\quad \forall 2 \leq t \leq T. \nonumber
\end{align}
We note that \cite{li2023faster} does not highlight $\delta$ dependency in their results. Also, note that if $T$ is large,
\[ \delta \brc{1 + \frac{c \log T}{T}}^{\frac{T}{\log T}} \asymp \delta e^c \geq 1. \]
Thus, with any fixed $r \in (0,1)$ such that $t \geq r T~(\geq \frac{T}{\log T})$, we have
\[ 1-\alpha_t = \frac{c \log T}{T} \min\cbrc{\delta \brc{1 + \frac{c \log T}{T}}^t, 1} = \frac{c \log T}{T}.\]
As a result,
\begin{equation} \label{eq:alpha_genli_rate_alphabar}
    \Bar{\alpha}_T \leq \prod_{t= \floor{r T}}^T \alpha_t = \brc{1 - \frac{c \log T}{T}}^{\ceil{(1-r) T}} \asymp \exp \brc{\ceil{(1-r) T} \brc{- \frac{c \log T}{T}}} = \Tilde{O}( T^{-(1-r)c} ). 
\end{equation}
Given any $c > 2$, we can always find some $r$ such that $(1-r)c > 2$. For example, this is satisfied when $r = (c-2)/4$ if $c \in (2,4)$ and $r = 1/4$ otherwise. This shows that the $\alpha_t$ in \eqref{eq:alpha_genli} satisfies $\Bar{\alpha}_T = o\brc{T^{-2}}$ if $c > 2$. Therefore, the $\alpha_t$ in \eqref{eq:alpha_genli} satisfies \cref{def:accl-noise-smooth}.


Since the parameter dependency is clear in the bound for the initialization and estimation errors (\cref{lem:init_err,lem:accl-score-est}), it remains to provide a bound on the reverse-step error that depends explicitly on the system parameters, which is the main topic below.

\subsubsection{Checking \texorpdfstring{\cref{ass:regular-drv-plus}}{Assumption 4}}
\label{subsec:check-reg-der-bdd}

Instead of \Cref{ass:regular-drv}, we check the more general \Cref{ass:regular-drv-plus} below. In particular, we verify \cref{ass:regular-drv-plus} with the $\alpha_t$ in \eqref{eq:alpha_genli}. 
In the following, \cref{lem:mat-exp-partial-non-smooth} is used to establish the first half of \cref{ass:regular-drv-plus}. Next, the following \cref{lem:ptb-mm-non-smooth} is used to establish the behavior of the expected moments under the perturbed posterior $Q_{0|t-1}(\cdot|\mu_t(X_t))$ when $X_t \sim Q_t$. Both \cref{lem:mat-exp-partial-non-smooth,lem:all-deriv-bounded-exp-non-smooth} will be useful for establishing the second half of \cref{ass:regular-drv-plus} with the $\alpha_t$ in \eqref{eq:alpha_genli}.

\begin{lemma} \label{lem:mat-exp-partial-non-smooth}
For all $t \geq 1$, $\ell \geq 1$, and $\bm{a} \in [d]^p$ such that $\abs{\bm{a}} = p \geq 1$,
\[ \E_{X_t \sim Q_t} \abs{\partial_{\bm{a}}^p \log q_t(X_t) }^\ell \lesssim \frac{d^{p \ell / 2}}{(1-\Bar{\alpha}_{t})^{p \ell / 2}}. \]
\end{lemma}

\begin{proof}
See \cref{app:proof-lem-mat-exp-partial-non-smooth}.
\end{proof}

\begin{lemma} \label{lem:ptb-mm-non-smooth}
For all $t \geq 2$ and $p \geq 1$, with the $\alpha_t$ in \eqref{eq:alpha_genli},
\[ \int_{x_0,x_t} \norm{\mu_t(x_t) - \sqrt{\Bar{\alpha}_{t-1}} x_0}^p \d Q_{0|t-1}(x_0|\mu_t(x_t)) \d Q_t(x_t) \lesssim d^{p/2} (1-\Bar{\alpha}_{t-1})^{p/2}. \]
\end{lemma}

\begin{proof}
See \cref{app:proof-lem-ptb-mm-non-smooth}.
\end{proof}

Finally, the following \cref{lem:all-deriv-bounded-exp-non-smooth} verifies the second half of \cref{ass:regular-drv-plus} with the $\alpha_t$ defined in \eqref{eq:alpha_genli}.

\begin{lemma} \label{lem:all-deriv-bounded-exp-non-smooth}
For all $t \geq 2$, $\ell \geq 1$, and $\bm{a} \in [d]$ such that $\abs{\bm{a}} = p \geq 1$, with the $\alpha_t$ in \eqref{eq:alpha_genli},
\[ \E_{X_t \sim Q_t} \abs{\partial_{\bm{a}}^p \log q_{t-1}(\mu_t(X_t)) }^\ell \lesssim \frac{d^{p \ell / 2}}{(1-\Bar{\alpha}_{t-1})^{p \ell / 2}}. \]
Combining this with \cref{lem:mat-exp-partial-non-smooth}, \cref{ass:regular-drv-plus} holds.
\end{lemma}

\begin{proof}
See \cref{app:proof-lem-all-deriv-bounded-exp-non-smooth}.
\end{proof}

Now, \cref{ass:regular-drv-plus} is satisfied since $\frac{1}{1-\Bar{\alpha}_t} \leq \frac{1}{1-\alpha_1} = \delta^{-1}$ for all $t \geq 1$ if $\delta$ is constant. Thus, if $\delta$ is a constant, \Cref{ass:regular-drv} is already satisfied (as is \Cref{ass:regular-drv-plus}). This is not necessary, however, when $\delta = 1/\mathrm{poly}(T)$ is vanishing with $T$. Fortunately, in this case, from \eqref{eq:li2023faster_eq39}, we still get $\frac{1-\alpha_t}{1-\Bar{\alpha}_{t-1}} = \Tilde{O}(1-\alpha_t)$. Thus, \Cref{ass:regular-drv-plus} is still satisfied.

\subsubsection{Expressing \texorpdfstring{$\partial^3_{ijk} \log q_t$}{Third-order Partial Derivatives}}

We begin by investigating $\nabla^2 \log q_t~(t \geq 2)$, for which we can derive the Hessian of $\log q_t(x)$ as
\begin{align} \label{eq:log_qt_hessian}
    &\nabla^2 \log q_t (x) = \frac{\partial}{\partial x} \brc{ \frac{\int_{x_0 \in \mbR^d} \nabla q_{t|0}(x|x_0) \d Q_0(x_0)}{\int_{x_0 \in \mbR^d} q_{t|0}(x|x_0) \d Q_0(x_0)} } \nonumber\\
    &= \frac{q_t(x) \int_{x_0 \in \mbR^d} \nabla^2 q_{t|0}(x|x_0) \d Q_0(x_0) - \brc{\int_{x_0 \in \mbR^d} \nabla q_{t|0}(x|x_0) \d Q_0(x_0)}\brc{\int_{x_0 \in \mbR^d} \nabla q_{t|0}(x|x_0) \d Q_0(x_0)}^\T}{q_t^2(x)} \nonumber \\
    &= \frac{1}{(1-\Bar{\alpha}_t)^2 q_t^2(x)} \bigg( q_t(x) \int_{x_0 \in \mbR^d} q_{t|0}(x|x_0) \brc{ (x-\sqrt{\Bar{\alpha}_t} x_0) (x-\sqrt{\Bar{\alpha}_t} x_0)^\T - (1-\Bar{\alpha}_t) I_d } \d Q_0(x_0) \nonumber \\
    &\quad - \brc{\int_{x_0 \in \mbR^d} q_{t|0}(x|x_0) (x-\sqrt{\Bar{\alpha}_t} x_0) \d Q_0(x_0)}\brc{\int_{x_0 \in \mbR^d} q_{t|0}(x|x_0) (x-\sqrt{\Bar{\alpha}_t} x_0) \d Q_0(x_0)}^\T \bigg) \nonumber\\
    &= - \frac{1}{1-\Bar{\alpha}_t} I_d + \frac{1}{(1-\Bar{\alpha}_t)^2} \bigg(\E_{X_0\sim Q_{0|t}(\cdot|x)} \sbrc{(x-\sqrt{\Bar{\alpha}_t} X_0) (x-\sqrt{\Bar{\alpha}_t} X_0)^\T} \nonumber\\
    &\quad - \brc{\E_{X_0\sim Q_{0|t}(\cdot|x)} \sbrc{x-\sqrt{\Bar{\alpha}_t} X_0 }} \brc{\E_{X_0\sim Q_{0|t}(\cdot|x)} \sbrc{x-\sqrt{\Bar{\alpha}_t} X_0 }}^\T \bigg).
\end{align}
For the third-order partial derivatives, we employ the notation
\[ z := \frac{x - \sqrt{\Bar{\alpha}_t} x_0}{1-\Bar{\alpha}_t}. \]
Note that $\partial_{k} q_{t|0}(x|x_0) = q_{t|0}(x|x_0) (-z^k)$. Then, we can write \eqref{eq:log_qt_hessian} as
\begin{multline*}
    \partial^2_{ij} \log q_t (x) = \frac{1}{q_t^2(x)} \bigg(q_t(x) \underbrace{\int q_{t|0}(x|x_0) z^i z^j \d Q_0(x_0)}_{\text{N1}} \\ - \underbrace{\int q_{t|0}(x|x_0) z^i \d Q_0(x_0) \int q_{t|0}(x|x_0) z^j \d Q_0(x_0)}_{\text{N2}} \bigg) - \frac{1}{1-\Bar{\alpha}_t} I_d.
\end{multline*}
Note that the last term is a constant. The derivative for term N1 is given by
\begin{align*}
    &\partial_{k} \int q_{t|0}(x|x_0) z^i z^j \d Q_0(x_0)\\
    &= \int q_{t|0}(x|x_0) (-z^k) z^i z^j + \ind(k=i) q_{t|0}(x|x_0) (1-\Bar{\alpha}_t)^{-1} z^j \\
    &\qquad + \ind(k=j) q_{t|0}(x|x_0) (1-\Bar{\alpha}_t)^{-1} z^i \d Q_0(x_0),
\end{align*}
and the derivative for term N2 is given by
\begin{align*}
    &\partial_{k} \brc{\int q_{t|0}(x|x_0) z^i \d Q_0(x_0) \int q_{t|0}(x|x_0) z^j \d Q_0(x_0)}\\
    &= \int q_{t|0}(x|x_0) \brc{(-z^k) z^i + \ind(k=i) (1-\Bar{\alpha}_t)^{-1} } \d Q_0(x_0) \int q_{t|0}(x|x_0) z^j \d Q_0(x_0) \\
    &\qquad + \int q_{t|0}(x|x_0) z^i \d Q_0(x_0) \int q_{t|0}(x|x_0) \brc{(-z^k) z^j + \ind(k=j) (1-\Bar{\alpha}_t)^{-1} }\d Q_0(x_0)\\
    &= \brc{\int q_{t|0}(x|x_0) (-z^k) z^i \d Q_0(x_0) + \ind(k=i) (1-\Bar{\alpha}_t)^{-1} q_t(x) } \int q_{t|0}(x|x_0) z^j \d Q_0(x_0) \\
    &\qquad + \int q_{t|0}(x|x_0) z^i \d Q_0(x_0) \brc{\int q_{t|0}(x|x_0) (-z^k) z^j \d Q_0(x_0) + \ind(k=j) (1-\Bar{\alpha}_t)^{-1} q_t(x) }.
\end{align*}
Combining these, the derivative for the numerator is given by
\begin{align*}
    &\partial_{k} (q_t(x)\text{N1}-\text{N2}) = \partial_{k} (q_t(x)) \text{N1} + q_t(x) \partial_{k}(\text{N1}) - \partial_{k}(\text{N2})\\
    &= - q_t(x) \int q_{t|0}(x|x_0) z^i z^j z^k \d Q_0(x_0)\\
    &\qquad - \int q_{t|0}(x|x_0) z^k \d Q_0(x_0) \int q_{t|0}(x|x_0) z^i z^j \d Q_0(x_0)\\
    &\qquad + \int q_{t|0}(x|x_0) z^j \d Q_0(x_0) \int q_{t|0}(x|x_0) z^i z^k \d Q_0(x_0)\\
    &\qquad + \int q_{t|0}(x|x_0) z^i \d Q_0(x_0) \int q_{t|0}(x|x_0) z^j z^k \d Q_0(x_0).
\end{align*}
Thus,
\begin{align} \label{eq:log_qt_third_order}
    &\partial^3_{ijk} \log q_t (x) = \partial_{k} \frac{q_t(x) \text{N1} - \text{N2}}{q_t^2(x)} \nonumber\\
    &= \frac{1}{q_t^3(x)} \brc{\partial_{k} (q_t(x)\text{N1}-\text{N2}) q_t(x) + 2 (q_t(x)\text{N1}-\text{N2}) \int q_{t|0}(x|x_0) z^k \d Q_0(x_0)} \nonumber\\
    &= \frac{1}{q_t^3(x)} \bigg( - q_t^2(x) \int q_{t|0}(x|x_0) z^i z^j z^k \d Q_0(x_0) \nonumber \\
    &\qquad + q_t(x) \sum_{\substack{a_1 = i,j,k \\ a_2 < a_3,~ a_2,a_3 \neq a_1}} \int q_{t|0}(x|x_0) z^{a_1} \d Q_0(x_0) \int q_{t|0}(x|x_0) z^{a_2} z^{a_3} \d Q_0(x_0) \nonumber \\
    &\qquad - 2 \int q_{t|0}(x|x_0) z^i \d Q_0(x_0) \int q_{t|0}(x|x_0) z^j \d Q_0(x_0) \int q_{t|0}(x|x_0) z^k \d Q_0(x_0)\bigg) \nonumber \\
    &= - \int z^i z^j z^k \d Q_{0|t}(x_0|x) \nonumber \\
    &\qquad + \sum_{\substack{a_1 = i,j,k \\ a_2 < a_3,~ a_2,a_3 \neq a_1}} \int z^{a_1} \d Q_{0|t}(x_0|x) \int z^{a_2} z^{a_3} \d Q_{0|t}(x_0|x) \nonumber \\
    &\qquad - 2 \int z^i \d Q_{0|t}(x_0|x) \int z^j \d Q_{0|t}(x_0|x) \int z^k \d Q_{0|t}(x_0|x)
\end{align}

\subsubsection{Explicit Parameter Dependency}

By Cauchy-Schwartz inequality, we have
\begin{align} \label{eq:accl-bdd-main}
    &\E_{X_t \sim Q_t}\sbrc{\sum_{i,j,k=1}^d \partial^3_{ijk} \log q_{t-1}(\mu_t(X_t)) \partial^3_{ijk} \log q_t(X_t)} \nonumber\\
    &\leq \sqrt{\E_{X_t \sim Q_t}\sbrc{\sum_{i,j,k=1}^d \Big(\partial^3_{ijk} \log q_{t-1}(\mu_t(X_t))\Big)^2 }} \times \sqrt{\E_{X_t \sim Q_t}\sbrc{\sum_{i,j,k=1}^d \Big(\partial^3_{ijk} \log q_t(X_t)\Big)^2 }}.
\end{align}
We now analyze the two terms in \eqref{eq:accl-bdd-main} separately.

We begin with the second term in \eqref{eq:accl-bdd-main}. Recall that $Z = \frac{X_t - \sqrt{\Bar{\alpha}_t} X_0}{1-\Bar{\alpha}_t}$ is standard Gaussian under $Q_{0,t}$. Also note that for a standard Gaussian random variable $Z$, $\E \norm{Z}^6 = d(d+2)(d+4) \lesssim d^3$.
Now, substituting \eqref{eq:log_qt_third_order} into the second term of \eqref{eq:accl-bdd-main}, we get
\begin{align*}
    &\sum_{i,j,k=1}^d \E_{X_t \sim Q_t} \brc{\int z^i z^j z^k \d Q_{0|t}(x_0|X_t)}^2 \\
    &\leq \frac{1}{(1-\Bar{\alpha}_t)^{3}} \E_{X_0,X_t \sim Q_{0,t}} \sbrc{ \sum_{i,j,k=1}^d \brc{\frac{X_t^i - \sqrt{\Bar{\alpha}_t} X_0^i}{\sqrt{1-\Bar{\alpha}_t}}}^2 \brc{\frac{X_t^j - \sqrt{\Bar{\alpha}_t} X_0^j}{\sqrt{1-\Bar{\alpha}_t}}}^2 \brc{\frac{X_t^k - \sqrt{\Bar{\alpha}_t} X_0^k}{\sqrt{1-\Bar{\alpha}_t}}}^2 }\\
    &= \frac{1}{(1-\Bar{\alpha}_t)^{3}} \E_{X_0,X_t \sim Q_{0,t}} \norm{\frac{X_t - \sqrt{\Bar{\alpha}_t} X_0}{\sqrt{1-\Bar{\alpha}_t}}}^{6}\\
    &= \frac{1}{(1-\Bar{\alpha}_t)^{3}} \E \norm{Z}^6 \\
    &\lesssim \frac{d^3}{(1-\Bar{\alpha}_t)^{3}},
\end{align*}
and
\begin{align*}
    &\sum_{i,j,k=1}^d \E_{X_t \sim Q_t} \brc{ \int z^{i} \d Q_{0|t}(x_0|x) \int z^{j} z^{k} \d Q_{0|t}(x_0|x) }^2\\
    &= \E_{X_t \sim Q_t} \sbrc{ \norm{\int z \d Q_{0|t}(x_0|x)}^2 \sum_{j,k=1}^d \brc{ \int z^{j} z^{k} \d Q_{0|t}(x_0|x) }^2 }\\
    &\leq \brc{ \E_{X_t \sim Q_t} \norm{\int z \d Q_{0|t}(x_0|x)}^6 }^{1/3} \brc{ \E_{X_t \sim Q_t} \brc{\sum_{j,k=1}^d \brc{ \int z^{j} z^{k} \d Q_{0|t}(x_0|x) }^2}^{3/2} }^{2/3}\\
    &\leq \E_{X_0,X_t \sim Q_{0,t}} \norm{\frac{X_t - \sqrt{\Bar{\alpha}_t} X_0}{1-\Bar{\alpha}_t}}^6\\
    &= \frac{1}{(1-\Bar{\alpha}_t)^{3}} \E \norm{Z}^6 \\
    &\lesssim \frac{d^3}{(1-\Bar{\alpha}_t)^{3}},
\end{align*}
and
\begin{align*}
    &\sum_{i,j,k=1}^d \E_{X_t \sim Q_t} \brc{ \int z^i \d Q_{0|t}(x_0|X_t) \int z^j \d Q_{0|t}(x_0|X_t) \int z^k \d Q_{0|t}(x_0|X_t) }^2\\
    &= \E_{X_t \sim Q_t} \brc{ \sum_{i=1}^d \brc{ \int \frac{X_t^i - \sqrt{\Bar{\alpha}_t} x_0^i}{1-\Bar{\alpha}_t} \d Q_{0|t}(x_0|X_t) }^2 }^3 \\
    &= \frac{1}{(1-\Bar{\alpha}_t)^3} \E_{X_t \sim Q_t} \norm{ \int \frac{X_t - \sqrt{\Bar{\alpha}_t} x_0}{\sqrt{1-\Bar{\alpha}_t}} \d Q_{0|t}(x_0|X_t) }^6\\
    &\leq \frac{1}{(1-\Bar{\alpha}_t)^{3}} \E \norm{Z}^6 \\
    &\lesssim \frac{d^3}{(1-\Bar{\alpha}_t)^{3}}.
\end{align*}
Thus, the second term of \eqref{eq:accl-bdd-main} satisfies that
\[ \E_{X_t \sim Q_t}\sbrc{\sum_{i,j,k=1}^d \Big(\partial^3_{ijk} \log q_t(X_t)\Big)^2 } \lesssim \frac{d^3}{(1-\Bar{\alpha}_t)^{3}}. \]

Now we turn to the first term in \eqref{eq:accl-bdd-main}. Note that $Z = \frac{\mu_t(X_t) - \sqrt{\Bar{\alpha}_{t-1}} X_0}{1-\Bar{\alpha}_{t-1}}$. While $Z$ is no longer standard Gaussian under $Q_{0,t}$, we can still achieve moment bounds using \cref{lem:ptb-mm-non-smooth}.
Now, substituting \eqref{eq:log_qt_third_order} into the first term of \eqref{eq:accl-bdd-main}, we apply \cref{lem:ptb-mm-non-smooth} and get
\begin{align*}
    &\sum_{i,j,k=1}^d \E_{X_t \sim Q_t} \brc{\int z^i z^j z^k \d Q_{0|t-1}(x_0|\mu_t(X_t))}^2 \\
    &\qquad \leq \frac{1}{(1-\Bar{\alpha}_{t-1})^{3}} \E_{\substack{X_0 \sim Q_{0|t-1}(\cdot|\mu_t(X_t)) \\ X_t \sim Q_t }} \norm{\frac{\mu_t(X_t) - \sqrt{\Bar{\alpha}_{t-1}} X_0}{\sqrt{1-\Bar{\alpha}_{t-1}}}}^{6} \lesssim \frac{d^3}{(1-\Bar{\alpha}_{t-1})^{3}},
\end{align*}
and similarly,
\begin{align*}
    &\sum_{i,j,k=1}^d \E_{X_t \sim Q_t} \brc{ \int z^{i} \d Q_{0|t-1}(x_0|\mu_t(X_t)) \int z^{j} z^{k} \d Q_{0|t-1}(x_0|\mu_t(X_t)) }^2\\
    &\qquad \lesssim \frac{d^3}{(1-\Bar{\alpha}_{t-1})^{3}},\\
    &\sum_{i,j,k=1}^d \E_{X_t \sim Q_t} \brc{ \int z^i \d Q_{0|t-1}(x_0|\mu_t(X_t)) \int z^j \d Q_{0|t-1}(x_0|\mu_t(X_t)) \int z^k \d Q_{0|t-1}(x_0|\mu_t(X_t)) }^2\\
    &\qquad \lesssim \frac{d^3}{(1-\Bar{\alpha}_{t-1})^{3}}.
\end{align*}
Thus, the first term of \eqref{eq:accl-bdd-main} satisfies that
\[ \E_{X_t \sim Q_t}\sbrc{\sum_{i,j,k=1}^d \Big(\partial^3_{ijk} \log q_{t-1}(\mu_t(X_t))\Big)^2 } \lesssim \frac{d^3}{(1-\Bar{\alpha}_{t-1})^{3}}. \]
Finally, since $\frac{1-\alpha_t}{1-\Bar{\alpha}_t},\frac{1-\alpha_t}{1-\Bar{\alpha}_{t-1}} \lesssim \frac{\log(1/\delta) \log T}{T}$, we arrive at
\[ (1-\alpha_t)^3 \E_{X_t \sim Q_t}\sbrc{\sum_{i,j,k=1}^d \partial^3_{ijk} \log q_{t-1}(\mu_t(X_t)) \partial^3_{ijk} \log q_t(X_t)} \lesssim \frac{d^3 \log^3(1/\delta) \log^3 T}{T^3}. \]
Summation over $t \geq 2$ gives us the desirable result.

\subsection{\texorpdfstring{\Cref{cor:accl_lip_int_hess}}{Theorem 5} and its Proof}
\label{app:accl_lip_int_hess}

Before we enter the proof of \Cref{cor:accl_lip_q0_hess}, we introduce an intermediate result which might have independent interest.
Previously, for regular samplers, linear dimensional dependency can be shown when all $Q_t$'s ($\forall t \geq 0$) have Lipschitz score \citep{chen2023improved,chen2023sampling}.
The following \cref{cor:accl_lip_int_hess} provides an accelerated convergence guarantee when \textit{all} $Q_t$'s ($\forall t \geq 0$) have Lipschitz Hessians.


\begin{theorem}[Accelerated Sampler for All-Path Lipschitz Hessians] \label{cor:accl_lip_int_hess}
Suppose that $\nabla^2 \log q_t(x),~\forall t \geq 0$ is 2-norm $M$-Lipschitz, i.e., $\exists M > 0$ such that 
\begin{align} \label{def:accl_lip_int_score_hess}
    \norm{\nabla^2 \log q_t(x) - \nabla^2 \log q_t(y)} &\leq M \norm{x-y}
\end{align}
for all $x,y \in \mbR^d$ and $t \geq 0$.
Then, under \cref{ass:m2,ass:regular-drv-plus,ass:accl-score-hess}, if the $\alpha_t$ satisfies \cref{def:accl-noise-smooth}, the distribution $\widehat{P}'_0$ from the accelerated sampler satisfies
\[ \KL{Q_0}{\widehat{P}_0'} \lesssim \frac{d^2 M^2 \log^3 T}{T^2} + (\log T) \eps^2 + \frac{\log^2 T}{T} \eps_H^2. \]
\end{theorem}

\subsubsection{Proof of \texorpdfstring{\Cref{cor:accl_lip_int_hess}}{Theorem 5}}

In order to continue from \cref{thm:accl-main} (in particular, the reverse-step error in \eqref{eq:accl_reverse_step_err}), we need to introduce some useful notations for the distribution class in \eqref{def:accl_lip_int_score_hess}. For a matrix $A$, define its vectorization as $\mathrm{vec}(A) := [A^{11},\dots,A^{1d},\dots,A^{d1},\dots,A^{dd}]^\T \in \mbR^{d^2}$. Define $K_t \in \mbR^{d^2 \times d}$ to be the matrix that reorganizes the third-order partial derivative tensor, i.e.,
\[ [K_t(x)]^{mk} := \partial^3_{ijk} \log q_t(x),~\text{s.t.}~ m = (i-1) d + j,~\forall i,j,k \in [d]. \]
With these notations, consider $y = x + \xi u$ where $u \in \mbR^d$ satisfies $\norm{u}^2 = 1$ and $\xi \in \mbR$ is some small constant. Then,
\[ \mathrm{vec}(\nabla^2 \log q_t(y)) - \mathrm{vec}(\nabla^2 \log q_t(x)) = K_t(x^*) (y-x) = \xi K_t(x^*) u. \]
Here $x^* = \gamma x + (1-\gamma) y$ for some $\gamma \in (0,1)$. Also, we have
\begin{align*}
    &\norm{\mathrm{vec}(\nabla^2 \log q_t(y)) - \mathrm{vec}(\nabla^2 \log q_t(x))}\\
    &= \norm{\nabla^2 \log q_t(y) - \nabla^2 \log q_t(x)}_F \\
    &\leq \sqrt{d} \norm{\nabla^2 \log q_t(y) - \nabla^2 \log q_t(x)} \leq \sqrt{d} M \norm{y-x}
\end{align*}
where the last inequality comes from \eqref{def:accl_lip_int_score_hess}. Thus, noting that $y = x + \xi u$ and that $\norm{u}^2 = 1$, we take the limit of $\xi$ to 0 and get
\begin{equation} \label{eq:accl_lip_Kt_norm}
    \norm{K_t(x)} \leq \sqrt{d} M,\quad \forall x \in \mbR^d,~\forall t \geq 0.
\end{equation}

We now derive an explicit upper bound on the reverse-step error. Using Cauchy-Schwartz inequality, for any $t \geq 1$ and $x_t \in \mbR^d$, we have
\begin{align} \label{eq:accl-lip-main}
    &\sum_{i,j,k=1}^d \partial^3_{ijk} \log q_{t-1}(\mu_t) \partial^3_{ijk} \log q_t(x_t) \nonumber\\
    &\leq \sqrt{\sum_{i,j,k=1}^d (\partial^3_{ijk} \log q_{t-1}(\mu_t))^2 } \sqrt{\sum_{i,j,k=1}^d (\partial^3_{ijk} \log q_t(x_t))^2 } \nonumber\\
    &= \norm{K_{t-1}(\mu_t)}_F \times \norm{K_t(x_t)}_F \nonumber\\
    &\leq (\sqrt{d} \norm{K_{t-1}(\mu_t)}) \times (\sqrt{d} \norm{K_t(x_t)}) \nonumber\\
    &\leq d^2 M^2.
\end{align}
Therefore, following from \cref{thm:accl-main}, we obtain
\[ \sum_{t=1}^T \E_{X_{t-1},X_t \sim Q_{t-1,t}} \sbrc{\log \frac{q_{t-1|t}(X_{t-1}|X_t)}{p_{t-1|t}(X_{t-1}|X_t)}} \lesssim \frac{d^2 M^2 \log^3 T}{T^2}. \]

\subsection{Proof of \texorpdfstring{\Cref{cor:accl_lip_q0_hess}}{Theorem 4}}

Throughout the proof of \cref{cor:accl_lip_q0_hess} we adopt the noise schedule $\alpha_t$ defined in \eqref{eq:alpha_genli} with $\delta = 1 / (M^{\frac{2}{3}} T^{\frac{3}{2}})$ and $c \geq \log(M^{\frac{2}{3}} T^{\frac{3}{2}})$. Note that such $\alpha_t$ satisfies \Cref{def:accl-noise-smooth} for all $t \geq 1$, and thus the bound on the estimation error still applies. Also, \Cref{ass:regular-drv-plus} is satisfied for $t \geq 2$, as shown in \Cref{subsec:check-reg-der-bdd}. Thus, \Cref{cor:accl_bdd_supp} can be applied and the reverse-step error at $t \geq 2$ satisfies, $\forall t = T,\dots,2$,
\begin{multline} \label{eq:accl_lip_q0_hess_thm4}
    (1-\alpha_t)^3 \E_{X_t \sim Q_t}\sbrc{\sum_{i,j,k=1}^d \partial^3_{ijk} \log q_{t-1}(\mu_t(X_t)) \partial^3_{ijk} \log q_t(X_t)} \\
    \lesssim \frac{d^3 (\log^3 M + \log^3 T) \log^3 T}{T^3}.
\end{multline}
In order to determine the dimensional dependency of the reverse-step error, the key is thus to establish a similar upper bound at $t = 1$. 



Now, we provide a modified version of \Cref{thm:accl-main} which does not require $q_0$ to be analytic (as in \Cref{ass:cont}) or to have regular partial derivatives (as in \Cref{ass:regular-drv-plus}). We recall from \eqref{eq:rev_err_zeta} that the reverse-step error at time $t=1$ can be upper-bounded as
\[ \E_{X_{0} \sim Q_{0|1}} \sbrc{\log \frac{q_{0|1}(X_{0}|x_1)}{p'_{0|1}(X_{0}|x_1)}} \leq \E_{X_{0} \sim Q_{0|1}} [\zeta'_{1,0}] -  \E_{X_{0} \sim P'_{0|1}} [\zeta'_{1,0}]. \]
Instead of the Taylor expansion in \eqref{eq:accl_zeta_prime_taylor_expansion}, we employ the following different expansion from Taylor's theorem. The only difference is that the expansion stops at the third-order term.
\begin{align} \label{eq:accl_zeta_prime_taylor_expansion_smooth}
    \zeta'_{1,0} &= (\nabla \log q_{0}(\mu_1) - \sqrt{\alpha_0} \nabla \log q_1(x_1))^\T (x_{0}-\mu_1) \nonumber \\
    &\qquad + \frac{1}{2} (x_{0}-\mu_1)^\T \brc{\nabla^2 \log q_{0}(\mu_1) - \frac{\alpha_0}{1-\alpha_0} B_t} (x_{0}-\mu_1) \nonumber\\
    &\qquad + \frac{1}{3!} \sum_{i,j,k=1}^d \partial^3_{ijk} \log q_{0}(\mu_1^*) (x_0^i-\mu_1^i) (x_0^j-\mu_1^j) (x_0^k-\mu_1^k).
\end{align}
Here $\mu_1^*(x_1,x_0) := \varsigma \mu_1(x_1) + (1-\varsigma) x_0$ for some $\varsigma \in [0,1]$. Note that $\mu_1^*$ is a function of both $x_1$ and $x_0$. 

A remarkable difference from the proof of \Cref{thm:accl-main} is that we do not require $q_0$ to be analytic for this expansion. Indeed, it only requires that the third-order partial derivative exists.
With this new expansion, we have the following lemma, which serves as a counterpart of \Cref{lem:accl-Eq-Ep-zeta}.

\begin{lemma} \label{lem:lip-q0-kl-first-step}
Suppose that $q_0$ exists and $\nabla^2 \log q_0$ is 2-norm $M$-Lipschitz. Then, with the $\alpha_t$ in \eqref{eq:alpha_genli}, we have
\begin{equation*}
    \E_{X_0 \sim Q_0} \brc{ \E_{X_{0} \sim Q_{0|1}} - \E_{X_{0} \sim P'_{0|1}} } [\zeta'_{1,0}]
    \lesssim \frac{(1-\alpha_1)^{3/2}}{3!\alpha_1^{3/2}} d^4 M.
\end{equation*}
\end{lemma}

\begin{proof}
See \Cref{app:proof_lem-lip-q0-kl-first-step}.
\end{proof}

Finally, with the chosen $\delta = 1-\alpha_1 = 1 / (M^{\frac{2}{3}} T^{\frac{3}{2}})$, the rate at the first step satisfies
\[ \frac{(1-\alpha_1)^{3/2}}{3!\alpha_1^{3/2}} d^4 M \lesssim \frac{d^4}{T^{9/4}} = o(T^{-2}). \]
As $T$ becomes large, the rate of the total reverse-step error, which decays as $\Tilde{O}(T^{-2})$, is not affected.
The proof is now complete.

\section{Auxiliary Proofs of \texorpdfstring{\cref{cor:accl_gauss_mix,cor:accl_bdd_supp,cor:accl_lip_q0_hess}}{Theorems 2 to 4}}

In this section, we provide the proofs for the lemmas in the proofs for \cref{cor:accl_gauss_mix,cor:accl_bdd_supp,cor:accl_lip_q0_hess}. 

\subsection{Proof of \texorpdfstring{\cref{lem:all-deriv-bounded-general}}{Lemma 13}}
\label{app:proof-lem-all-deriv-bounded-general}

Fix $k \geq 1$ and $\bm{a} \in [d]^k$. Recall that $u \leq \det(\Sigma_z) \leq U$, $\norm{\Sigma_z^{-1}} \leq V$, and $\sup_{z \in \calZ, i,j \in [d]^2} \abs{[\Sigma_z^{-\frac{1}{2}}]^{ij} } \leq w$ for all $z \in \calZ$. 
Also write $\phi(y)$ as the p.d.f. of the unit Gaussian.
We are interested in upper-bounding the absolute partial derivatives of $\log q(x)$ with a function of $x$ where
\[ q(x) = \int g(x|z) \d \Pi(z), \]
where, using the change-of-variable formula,
\begin{equation} \label{eq:all-deriv-bounded-change-var-formula}
    g(x|z) = \frac{1}{\det(\Sigma_z)^{\frac{1}{2}}} \phi\brc{\Sigma_z^{-\frac{1}{2}} (x-\mu_z) }.
\end{equation}

We first identify an upper bound on the absolute partial derivatives of $q(x)$.
Now,
\begin{align*}
    \partial^k_{\bm{a}} q(x) &\stackrel{(i)}{=} \int \partial^k_{\bm{a}} g(x|z) \d \Pi(z)\\
    &\stackrel{(ii)}{\leq} \frac{1}{\inf_{z \in \calZ} \det(\Sigma_z)^{\frac{1}{2}}} \int \partial^k_{\bm{a}} \phi\brc{\Sigma_z^{-\frac{1}{2}} (x-\mu_z) } \d \Pi(z)
\end{align*}
where $(i)$ follows from the dominated convergence theorem (see \eqref{eq:finite_der_log_q_change_int_der}), and $(ii)$ follows from \eqref{eq:all-deriv-bounded-change-var-formula}.
To obtain an upper bound on the $k$-th derivative of Gaussian density, we invoke the multivariate version of the Fa\'a di Bruno's formula \cite[Theorem~2.1]{constantine1996multi-faadi-bruno}. Since $y = \Sigma_z^{-\frac{1}{2}} (x-\mu_z)$ is linear in $x$, only the first-order partial derivative is non-zero and is equal to an entry in $\Sigma_z^{-\frac{1}{2}}$. Thus, we have
\begin{align*}
    \abs{\partial^k_{\bm{a}} \phi\brc{\Sigma_z^{-\frac{1}{2}} (x-\mu_z) }} &= \abs{ \sum_{\bm{a}' \in [d]^k} \phi^{(k)}_{\bm{a}'} (y) \prod_{s=1}^k \frac{\partial}{\partial x_{a_s}} [\Sigma_z^{-\frac{1}{2}} (x-\mu_z)]^{a'_s} } \\
    &\leq \abs{\sum_{\bm{a}' \in [d]^k} \phi^{(k)}_{\bm{a}'}\brc{\Sigma_z^{-\frac{1}{2}} (x-\mu_z) } }  \max\cbrc{w,1}^k,~\forall \bm{a}: \abs{\bm{a}} = k.
\end{align*}
Here we define $\phi^{(k)}_{\bm{a}}(y) := \partial^k_{\bm{a}} \phi(y)$. Since $\phi(y)$ is a Gaussian density which is infinitely differentiable and decays exponentially at the tail, its $k$-th order derivative satisfies $\phi^{(k)}_{\bm{a}}(y) = \mathrm{poly}_k(y) \phi(y)$ where $\mathrm{poly}_k(y)$ is a $k$-th order polynomial function in $y_1,\dots,y_d$ (and thus in $x_1,\dots,x_d$ by linearity). Also note that, for any $\bm{a} \in [d]^k$,
\[\lim_{\norm{y} \to \infty} \abs{\phi^{(k)}_{\bm{a}}(y)} = \lim_{\norm{y} \to \infty} \abs{\mathrm{poly}_k(y) \phi(y)} = 0. \]
By the continuity of $\phi^{(k)}_{\bm{a}}(y)$, there exists $\Bar{y}_{\bm{a}}$ such that $\abs{\phi^{(k)}_{\bm{a}}(y)} \leq \abs{\phi^{(k)}_{\bm{a}}(\Bar{y}_{\bm{a}})} \leq \mathrm{poly}_k(\Bar{y}_{\bm{a}})$ for all $y \in \mbR^d$. 
Now, for all $x \in \mbR^d$,
\begin{align}
    \abs{\partial^k_{\bm{a}} q(x)} &\leq \int \det(\Sigma_z)^{-\frac{1}{2}} \abs{\partial^k_{\bm{a}} \phi\brc{\Sigma_z^{-\frac{1}{2}} (x-\mu_z) } } \d \Pi(z) \nonumber\\
    &\leq \max\cbrc{w,1}^k \int \det(\Sigma_z)^{-\frac{1}{2}} \brc{\sum_{\bm{a} \in [d]^k} \abs{\mathrm{poly}_k\brc{\Sigma_z^{-\frac{1}{2}} (x-\mu_z) } } } \phi \brc{\Sigma_z^{-\frac{1}{2}} (x-\mu_z) } \d \Pi(z) \label{eq:all-deriv-bounded-general-qt-upper-1} \\
    &\leq \frac{d^k \max\cbrc{w,1}^k}{\sqrt{u}} \abs{\mathrm{poly}_k\brc{\Bar{y}_{\bm{a}} } } \phi \brc{\Bar{y}_{\bm{a}} }. \label{eq:all-deriv-bounded-general-qt-upper-2}
\end{align}
We have thus obtained a constant upper bound on all partial derivatives of $q(x)$ of order $k$.

Next, we convert the partial derivative bound into that for $\log q(x)$. We again invoke Fa\'a di Bruno's formula \cite{constantine1996multi-faadi-bruno}. Note that
\begin{equation} \label{eq:all-deriv-bounded-def-r}
    \partial^k_{\bm{a}} \log q(x) = q(x)^{-k} \sum_{\bm{b_1},\dots,\bm{b_k}} \prod_{j=1}^k \partial^{\abs{\bm{b_j}}}_{\bm{b_j}} q(x) =: \sum_{\bm{b_1},\dots,\bm{b_k}} r_{\bm{b_1},\dots,\bm{b_k}} (x)
\end{equation}
in which we define each summation term as $r$.
Here $\{\bm{b_1},\dots,\bm{b_k}\}$ is some (possibly empty) partition of $\bm{a}$, i.e., $\sum_j \bm{b_j} = \bm{a}$ and $\sum_j \abs{\bm{b_j}} = k$ (thus, at most $k$ partitions). We order this partition such that $k \geq \abs{\bm{b_1}} \geq \dots \geq \abs{\bm{b_k}} \geq 0$.
Note that the total number of partition can be upper-bounded by $d^k \sum_{l=1}^k B_{k,l}(1,\dots,1) = d^k B_k$, where $B_{k,l}(\cdot)$ and $B_k$ are the Bell polynomials and the Bell number, respectively.

We first showcase a simple yet useful upper bound. 
From \eqref{eq:all-deriv-bounded-general-qt-upper-2}, we get,
\begin{align*}
    \abs{\prod_{j=1}^k \partial^{\abs{\bm{b_j}}}_{\bm{b_j}} q(x)} &\leq \prod_{j=1}^k \abs{\partial^{\abs{\bm{b_j}}}_{\bm{b_j}} q(x)} \\
    &\leq \frac{(d \max\{w,1\})^{\sum_j \abs{\bm{b_j}}}}{\min\{u, 1\}^{k/2}} \max\{\max_y \phi(y),1\}^{k} \prod_{j=1}^k \abs{\mathrm{poly}_{\abs{\bm{b_j}}}(\Bar{y}_{\bm{b_j}})} \\
    &\leq \frac{(d \max\{w,1\})^{\sum_j \abs{\bm{b_j}}}}{\min\{u, 1\}^{k/2}} \max\{\max_y \phi(y),1\}^{k} \max_{j} \abs{\mathrm{poly}_{\abs{\bm{b_j}}}(\Bar{y}_{\bm{b_j}})}^k \\
    &\leq C_{\bm{b_1},\dots,\bm{b_j}}^k \frac{d^k \max\cbrc{w,1}^k}{\min\{u, 1\}^{k/2}}
\end{align*}
where, as noted above, $\Bar{y}_{\bm{b_j}}$ does not depend on $x$.
Here $C_{\bm{b_1},\dots,\bm{b_j}}$ is some constant which depends only on the partition $\{\bm{b_1},\dots,\bm{b_j}\}$ and is independent of $x$.
On the other hand, we can also obtain a simple lower bound on $q(x)$. Observe that $q(x)$ is continuous and always positive. Recall that $b = \sup_{z \in \calZ} \norm{\mu_z}$. Thus,
\begin{align*}
    q(x) &= \int_{\calZ} g(x|z) \d \Pi(z) \\
    &\geq \frac{1}{(2 \pi)^{d/2} \sup_{z \in \calZ} \det(\Sigma_z)^{\frac{1}{2}}} \int_{\calZ} \exp \brc{-\frac{1}{2} \sup_{z \in \calZ} (x-\mu_z)^\T \Sigma_z^{-1} (x-\mu_z)} \d \Pi(z)\\
    &\geq \frac{1}{(2 \pi)^{d/2} \sup_{z \in \calZ} \det(\Sigma_z)^{\frac{1}{2}}} \int_{\calZ} \exp \brc{-\frac{1}{2} \sup_{z \in \calZ} \norm{\Sigma_z^{-1}} (\norm{x}^2 + \norm{\mu_z}^2)} \d \Pi(z)\\
    &\geq \frac{1}{(2 \pi)^{d/2} \sup_{z \in \calZ} \det(\Sigma_z)^{\frac{1}{2}}} \int_{\calZ} \exp \brc{- \frac{1}{2} \sup_{z \in \calZ} \norm{\Sigma_z^{-1}} (\norm{x}^2+b^2)} \d \Pi(z)\\
    &\geq \frac{1}{(2 \pi)^{d/2} U} \exp \brc{- \frac{V}{2} (\norm{x}^2 + b^2)}.
\end{align*}
Therefore, if we set $C := \max_{\bm{b_1},\dots,\bm{b_j}} C_{\bm{b_1},\dots,\bm{b_j}}$, we obtain
\begin{equation} \label{eq:all-deriv-bounded-general-res-1}
    \abs{\partial^k_{\bm{a}} \log q(x)} \leq C^k B_k \frac{d^{2 k} \max\cbrc{w,1}^k}{\min\{u, 1\}^{k/2}} U^k e^{k \frac{V}{2} (\norm{x}^2 + b^2)}.
\end{equation}

The upper bound above, though it depends only on parameters $u,U,V,w$, has an exponential dependency on $x$, which is not desirable. We next derive a more refined bound in $x$. For brevity of analysis, we re-express $r$ (defined in \eqref{eq:all-deriv-bounded-def-r}) to avoid empty partitions:
\[ r_{\bm{b_1},\dots,\bm{b_k}} (x) = q(x)^{-p} \prod_{j=1}^p \partial^{\abs{\bm{b_j}}}_{\bm{b_j}} q(x),~\text{s.t.}~\abs{\bm{b_{p+1}}} = \dots = \abs{\bm{b_{k}}} = 0. \]
Now, by the boundedness of $\norm{\mu_z}$ and $\norm{\Sigma_z^{-1/2}}$ on $\calZ$, for each $x$, there exist (bounded) $\Bar{\Sigma}_{\bm{b_j}}$ and $\Bar{\mu}_{\bm{b_j}}$ such that, $\forall z \in \calZ$,
\[ \sum_{\bm{b} \in [d]^{\abs{\bm{b_j}}}} \abs{\mathrm{poly}_{\abs{\bm{b_j}}}\brc{\Sigma_{z}^{-\frac{1}{2}} (x-\mu_{z}) } } \leq \sum_{\bm{b} \in [d]^{\abs{\bm{b_j}}}} \abs{\mathrm{poly}_{\abs{\bm{b_j}}}\brc{\Bar{\Sigma}_{\bm{b_j}}^{-\frac{1}{2}} (x-\Bar{\mu}_{\bm{b_j}}) } } < \infty. \]
Then, following from \eqref{eq:all-deriv-bounded-general-qt-upper-1}, we obtain
\begin{align*}
    &\abs{r_{\bm{b_1},\dots,\bm{b_k}} (x)} = q(x)^{-p} \abs{\prod_{j=1}^p \partial^{\abs{\bm{b_j}}}_{\bm{b_j}} q(x)} \leq q(x)^{-p} \prod_{j=1}^p \abs{\partial^{\abs{\bm{b_j}}}_{\bm{b_j}} q(x)} \\
    &\leq \frac{(d \max\{w,1\})^{\sum_{j=1}^p \abs{\bm{b_j}}}}{u^{p/2}} \times\\
    &\qquad \prod_{j=1}^p \frac{\int \det(\Sigma_z)^{-\frac{1}{2}} \sum_{\bm{c_j} \in [d]^{\abs{\bm{b_j}}}} \abs{\mathrm{poly}_{\abs{\bm{b_j}}}\brc{\Sigma_z^{-\frac{1}{2}} (x-\mu_z) } } \phi \brc{\Sigma_z^{-\frac{1}{2}} (x-\mu_z) } \d \Pi(z)}{\int \det(\Sigma_z)^{-\frac{1}{2}} \phi\brc{\Sigma_z^{-\frac{1}{2}} (x-\mu_z) } \d \Pi(z)} \\
    &\leq \frac{(d \max\{w,1\})^{k}}{\min\{1,u\}^{k/2}} \times\\
    &\qquad \prod_{j=1}^p \frac{\sum_{\bm{c_j} \in [d]^{\abs{\bm{b_j}}}} \abs{\mathrm{poly}_{\abs{\bm{b_j}}}\brc{\Bar{\Sigma}_{\bm{b_j}}^{-\frac{1}{2}} (x-\Bar{\mu}_{\bm{b_j}}) } } \brc{ \int \det(\Sigma_z)^{-\frac{1}{2}} \phi \brc{\Sigma_z^{-\frac{1}{2}} (x-\mu_z) } \d \Pi(z) }}{\int \det(\Sigma_z)^{-\frac{1}{2}} \phi\brc{\Sigma_z^{-\frac{1}{2}} (x-\mu_z) } \d \Pi(z)}\\
    &= \frac{(d \max\{w,1\})^{k}}{\min\{1,u\}^{k/2}} \prod_{j=1}^p \sum_{\bm{c_j} \in [d]^{\abs{\bm{b_j}}}} \abs{\mathrm{poly}_{\abs{\bm{b_j}}}\brc{\Bar{\Sigma}_{\bm{b_j}}^{-\frac{1}{2}} (x-\Bar{\mu}_{\bm{b_j}}) } }
\end{align*}
Note that for each $j$, the number of terms in the summation above is upper-bounded by $d^{\abs{\bm{b_j}}}$. Thus, expanding the product of summations would result in no more than $\prod_{j=1}^p d^{\abs{\bm{b_j}}} = d^k$ terms. Also, since $\abs{\mathrm{poly}_{k_1}(y)} \cdot \abs{\mathrm{poly}_{k_2}(y)} = \abs{\mathrm{poly}_{k_1+k_2}(y)}$, and since any $\Bar{\Sigma}_{\bm{b_j}}^{-\frac{1}{2}} (x-\Bar{\mu}_{\bm{b_j}})$ is linear in $x$ and independent in $z$, each product term is a $k$-th order polynomial in $x$. Therefore, we obtain
\[ \abs{r_{\bm{b_1},\dots,\bm{b_k}} (x)} \leq \frac{d^{2k} \max\{w,1\}^{k}}{\min\{1,u\}^{k/2}} \max_{\bm{c_j} \in [d]^{\abs{\bm{b_j}}},\forall j=1,\dots,p} \abs{\mathrm{poly}_{k}\brc{x} } \]
and thus
\begin{equation} \label{eq:all-deriv-bounded-general-res-2}
    \abs{\partial^k_{\bm{a}} \log q(x)} \leq B_k \frac{d^{2k} \max\{w,1\}^{k}}{\min\{1,u\}^{k/2}} \max_{\bm{b_1},\dots,\bm{b_k}} \max_{\bm{c_j} \in [d]^{\abs{\bm{b_j}}},\forall j=1,\dots,p} \abs{\mathrm{poly}_{k}\brc{x} }.
\end{equation}
We have thus identified an upper bound on $\abs{\partial^k_{\bm{a}} \log q(x)}$ which is polynomial in $x$. 
The proof is now complete by combining \eqref{eq:all-deriv-bounded-general-res-1} and \eqref{eq:all-deriv-bounded-general-res-2}.

\subsection{Proof of \texorpdfstring{\cref{cor:all-deriv-bounded-gauss-mix}}{Lemma 14}}
\label{app:proof-cor-all-deriv-bounded-gauss-mix}
We first identify $u,U,V,w$ for $\Sigma_{t,n}$ such that they are independent of $T$ and $k$ for all $t \geq 1$. Fix $t \geq 1$. We use the fact that $\Sigma_{t,n} = \Bar{\alpha}_t \Sigma_{0,n} + (1-\Bar{\alpha}_t) I_d$. If we let $\lambda_{n,1} \geq \dots \geq \lambda_{n,d} > 0$ as the eigenvalues of $\Sigma_{0,n}$ (which do not depend on $T$), the eigenvalues of $\Sigma_{t,n}$ are $\{\Bar{\alpha}_t \lambda_{n,i} + (1-\Bar{\alpha}_t) \}_{i=1}^d$. Therefore, for any $n=1,\dots,N$ and $t \geq 1$,
\[ (u :=) \prod_{i=1}^d \min\{\min_n \lambda_{n,i},1\} \leq \det(\Sigma_{t,n}) \leq \prod_{i=1}^d \max\{\max_n \lambda_{n,i},1\} (=: U). \]
Also, following from \eqref{eq:gaussian-mixture-cov-inv-max-norm}, we have $V := \frac{1}{\min\{1,\min_n \lambda_{n,d}\}}$.
Next, write the eigen-decomposition as $\Sigma_{0,n} = Q_n \mathrm{diag}(\lambda_{n,1},\dots,\lambda_{n,d}) Q_n^\T$, where $Q_n$ here is an orthonormal matrix (that does not depend on $T$). Then, for any $t \geq 1$,
\begin{align*}
    \Sigma_{t,n}^{-\frac{1}{2}} &= Q_n (\Bar{\alpha}_t \mathrm{diag}(\lambda_{n,1},\dots,\lambda_{n,d}) + (1-\Bar{\alpha}_t) I_d)^{-\frac{1}{2}} Q_n^\T \\
    &= Q_n \mathrm{diag}((\Bar{\alpha}_t \lambda_{n,1} + (1-\Bar{\alpha}_t))^{-\frac{1}{2}},\dots,(\Bar{\alpha}_t \lambda_{n,d} + (1-\Bar{\alpha}_t))^{-\frac{1}{2}}) Q_n^\T
\end{align*}
and thus, for all $t \geq 1$,
\begin{align*}
    [\Sigma_{t,n}^{-\frac{1}{2}}]^{ij} &= \sum_{k=1}^d (\Bar{\alpha}_t \lambda_{n,k} + (1-\Bar{\alpha}_t))^{-\frac{1}{2}} Q_n^{ik} Q_n^{kj}\\
    &\leq (\min\{1,\min_n \lambda_{n,d}\} )^{-\frac{1}{2}} \max_{n \in [N],i,j\in [d]} \abs{\sum_{k=1}^d Q_n^{ik} Q_n^{kj}} =: w.
\end{align*}
Since the identified $u,U,V,w$ are all independent of $T$ and $k$, by \cref{lem:all-deriv-bounded-general} we have obtained an upper bound on $\abs{\partial^k_{\bm{a}} \log q(x)}$ for any fixed $x$ which is independent of $T$. Thus,
\begin{multline*}
    (1-\alpha_t)^{k/2} \E_{X_t \sim Q_t} \abs{\partial_{\bm{a}}^k \log q_{t}(X_t) },~~(1-\alpha_t)^{k/2} \E_{X_t \sim Q_t} \abs{\partial_{\bm{a}}^k \log q_{t-1}(\mu_t(X_t)) } \\
    = \Tilde{O}\brc{(1-\alpha_t)^{k/2}} = \Tilde{O}\brc{\frac{1}{T^{k/2}}}.
\end{multline*}
Hence, we have shown \cref{ass:regular-drv-plus}.

\subsection{Proof of \texorpdfstring{\cref{lem:mat-exp-partial-non-smooth}}{Lemma 15}}
\label{app:proof-lem-mat-exp-partial-non-smooth}

Fix $t \geq 1$. We will draw some notations introduced in \cref{lem:all-deriv-bounded-general}. Specifically, we recall from \eqref{eq:all-deriv-bounded-def-r} that
\begin{align} \label{eq:mat-exp-partial-non-smooth-partial-log}
    \partial^p_{\bm{a}} \log q_t(x_t) &= q_t(x_t)^{-p} \sum_{\bm{b_1},\dots,\bm{b_p}} \prod_{j=1}^p \partial^{\abs{\bm{b_j}}}_{\bm{b_j}} q_t(x_t) \nonumber \\
    &= q_t(x_t)^{-p} \sum_{\bm{b_1},\dots,\bm{b_p}} \prod_{j=1}^p \int_{x_0} q_{t|0}(x_t|x_0) \mathrm{poly}_{\abs{\bm{b_j}}}\brc{\frac{x_t-\sqrt{\Bar{\alpha}_t} x_0}{1-\Bar{\alpha}_t}} \d Q_0(x_0) \nonumber\\
    &= \sum_{\bm{b_1},\dots,\bm{b_p}} \frac{1}{(1-\Bar{\alpha}_t)^{\frac{p}{2}}} \prod_{j=1}^p \int_{x_0} \mathrm{poly}_{\abs{\bm{b_j}}}\brc{\frac{x_t-\sqrt{\Bar{\alpha}_t} x_0}{\sqrt{1-\Bar{\alpha}_t}}} \d Q_{0|t}(x_0|x_t)
\end{align}
in which we have defined $\mathrm{poly}_k(y)$ as a $k$-th order polynomial function in $y_1,\dots,y_d$.
Recall that here $\{\bm{b_1},\dots,\bm{b_p}\}$ is some (possibly empty) partition of $\bm{a}$, i.e., $\sum_j \bm{b_j} = \bm{a}$ and $\sum_j \abs{\bm{b_j}} = p$.

Thus,
\begin{align*}
    &\E_{X_t \sim Q_t} \abs{\partial_{\bm{a}}^p \log q_t(X_t) }^\ell \\
    &\leq \frac{1}{(1-\Bar{\alpha}_t)^{\frac{p \ell}{2}}} p^\ell \sum_{\bm{b_1},\dots,\bm{b_p}} \E_{X_t \sim Q_t} \sbrc{ \prod_{j=1}^p \abs{ \int_{x_0} \mathrm{poly}_{\abs{\bm{b_j}}}\brc{\frac{X_t-\sqrt{\Bar{\alpha}_t} x_0}{\sqrt{1-\Bar{\alpha}_t}}} \d Q_{0|t}(x_0|X_t) }^\ell }\\
    &\stackrel{(i)}{\leq} \frac{1}{(1-\Bar{\alpha}_t)^{\frac{p \ell}{2}}} p^\ell \sum_{\bm{b_1},\dots,\bm{b_p}} \prod_{j=1}^p \brc{ \E_{X_t \sim Q_t}  \abs{ \int_{x_0} \mathrm{poly}_{\abs{\bm{b_j}}}\brc{\frac{X_t-\sqrt{\Bar{\alpha}_t} x_0}{\sqrt{1-\Bar{\alpha}_t}}} \d Q_{0|t}(x_0|X_t) }^{\frac{p \ell}{\abs{\bm{b_j}}}} }^{\frac{\abs{\bm{b_j}}}{p}}\\
    &\stackrel{(ii)}{\leq} \frac{1}{(1-\Bar{\alpha}_t)^{\frac{p \ell}{2}}} p^\ell \sum_{\bm{b_1},\dots,\bm{b_p}} \prod_{j=1}^p \brc{ \E_{X_0,X_t \sim Q_{0,t}} \abs{ \mathrm{poly}_{\abs{\bm{b_j}}}\brc{\frac{X_t-\sqrt{\Bar{\alpha}_t} X_0}{\sqrt{1-\Bar{\alpha}_t}}} }^{\frac{p \ell}{\abs{\bm{b_j}}}} }^{\frac{\abs{\bm{b_j}}}{p}}\\
    &= \frac{1}{(1-\Bar{\alpha}_t)^{\frac{p \ell}{2}}} p^\ell \sum_{\bm{b_1},\dots,\bm{b_p}} \prod_{j=1}^p \brc{ \E \abs{ \mathrm{poly}_{\abs{\bm{b_j}}}\brc{Z} }^{\frac{p \ell}{\abs{\bm{b_j}}}} }^{\frac{\abs{\bm{b_j}}}{p}} \\
    &\lesssim \frac{d^{\frac{p \ell}{2}}}{(1-\Bar{\alpha}_t)^{\frac{p \ell}{2}}}
\end{align*}
where $Z \sim \calN(0,I_d)$ is a standard Gaussian random variable (that does not depend on $T$ here) and any $r$-th order of polynomial of $Z_1,\dots,Z_d$ has finite expectation (that does not depend on $T$ and with at most $d^{r/2}$ dimensional dependency).
Here $(i)$ holds by H\"older's inequality, and $(ii)$ holds by Jensen's inequality since $p \ell / \abs{\bm{b_j}} \geq 1$ for all $\bm{b_j}$ and $\ell \geq 1$. 
The proof is now complete.

\subsection{Proof of \texorpdfstring{\cref{lem:ptb-mm-non-smooth}}{Lemma 16}}
\label{app:proof-lem-ptb-mm-non-smooth}

Fix $t \geq 2$. We first introduce the following notations. Write $\mu_t = \mu_t(x_t)$. Let $Q_{\mu_t}$ be the distribution of $\mu_t(X_t)$ where $X_t \sim Q_t$, and let $q_{\mu_t}$ be the corresponding p.d.f. (w.r.t. the Lebesgue measure). Let $Q_{\mu_t,x_0}$ be the joint distribution of $\mu_t$ and $x_0$.

Now, we can re-write the integral as
\begin{align} \label{eq:lem-ptb-mm-non-smooth-main}
    &\int_{x_0,x_t} \norm{\mu_t(x_t) - \sqrt{\Bar{\alpha}_{t-1}} x_0}^p \d Q_{0|t-1}(x_0|\mu_t(x_t)) \d Q_t(x_t) \nonumber\\
    &= \int_{x_0,\mu_t} \norm{\mu_t - \sqrt{\Bar{\alpha}_{t-1}} x_0}^p \d Q_{0|t-1}(x_0|\mu_t) \d Q_{\mu_t}(\mu_t) \nonumber\\
    &= \int_{x_0,\mu_t} \norm{\mu_t - \sqrt{\Bar{\alpha}_{t-1}} x_0}^p \frac{q_{\mu_t}(\mu_t)}{q_{t-1}(\mu_t)} \d Q_{0|t-1} (x_0|\mu_t) \d Q_{t-1}(\mu_t) \nonumber\\
    &\leq \sqrt{ \int_{x_0,\mu_t} \norm{\mu_t - \sqrt{\Bar{\alpha}_{t-1}} x_0}^{2 p} \d Q_{0|t-1} (x_0|\mu_t) \d Q_{t-1}(\mu_t) } \nonumber\\
    &\qquad \times \sqrt{ \int_{x_0,\mu_t} \brc{ \frac{q_{\mu_t}(\mu_t)}{q_{t-1}(\mu_t)} }^2 \d Q_{0|t-1} (x_0|\mu_t) \d Q_{t-1}(\mu_t) }
\end{align}
where the last line follows from Cauchy-Schwartz inequality.

Now, for the first term of \eqref{eq:lem-ptb-mm-non-smooth-main} we recovered the matched moment, and we have
\begin{align*}
    &\sqrt{ \int_{x_0,\mu_t} \norm{\mu_t - \sqrt{\Bar{\alpha}_{t-1}} x_0}^{2 p} \d Q_{0|t-1} (x_0|\mu_t) \d Q_{t-1}(\mu_t) } \\
    &= \sqrt{ \int_{x_0,x_{t-1}} \norm{x_{t-1} - \sqrt{\Bar{\alpha}_{t-1}} x_0}^{2 p} \d Q_{0,t-1} (x_0,x_{t-1}) }\\
    &= (1-\Bar{\alpha}_{t-1})^{\frac{p}{2}} \sqrt{ \int_{x_0,x_{t-1}} \norm{\frac{x_{t-1} - \sqrt{\Bar{\alpha}_{t-1}} x_0}{\sqrt{1 - \Bar{\alpha}_{t-1}}}}^{2 p} \d Q_{0,t-1} (x_0,x_{t-1}) }\\
    &= (1-\Bar{\alpha}_{t-1})^{\frac{p}{2}} \sqrt{\E \norm{Z}^{2p}} \lesssim d^{\frac{p}{2}} (1-\Bar{\alpha}_{t-1})^{\frac{p}{2}}
\end{align*}
where $Z \sim \calN(0,I_d)$ is a Gaussian random variable.

Now we upper bound the second term in \eqref{eq:lem-ptb-mm-non-smooth-main}, whose square is equal to
\begin{align*}
    &\int_{x_0,\mu_t} \brc{ \frac{q_{\mu_t}(\mu_t)}{q_{t-1}(\mu_t)} }^2 \d Q_{0|t-1} (x_0|\mu_t) \d Q_{t-1}(\mu_t)\\
    &= \int_{x_{t-1}} \brc{ \frac{q_{\mu_t}(x_{t-1})}{q_{t-1}(x_{t-1})} }^2 q_{t-1}(x_{t-1}) \d x_{t-1}\\
    &= 1 + \chi^2(Q_{\mu_t} || Q_{t-1})\\
    &\stackrel{(i)}{\leq} 1 + \chi^2(Q_{\mu_t,x_0} || Q_{t-1,0})\\
    &= \int_{x_0} \brc{ \int_{\mu_t} \brc{ \frac{q_{\mu_t|x_0}(\mu_t|x_0)}{q_{t-1|0}(\mu_t|x_0)} }^2 q_{t-1|0} (\mu_t|x_0) \d \mu_t } \d Q_0(x_0)\\
    &= \int_{x_0} \brc{ \int_{x_{t}} \frac{(q_{t|0}(x_t|x_0))^2}{q_{t-1|0}(\mu_t(x_t)|x_0)} \det \brc{\frac{\d \mu_t(x_t)}{\d x_t}}^{-1} \d x_t } \d Q_0(x_0)\\
    &\stackrel{(ii)}{\leq} \sqrt{ \int_{x_0,x_{t}} \brc{ \frac{q_{t|0}(x_t|x_0)}{q_{t-1|0}(\mu_t(x_t)|x_0)} }^2 \d Q_{t,0}(x_t,x_0) } \times \\
    &\qquad \sqrt{ \int_{x_0,x_{t}} \det \brc{\frac{\d \mu_t(x_t)}{\d x_t}}^{-2} \d Q_{t,0}(x_t,x_0) }
\end{align*}
where $\chi^2(P||Q)$ is the chi-squared divergence between $P$ and $Q$. Here $(i)$ follows from the data processing inequality for f-divergence, and $(ii)$ again follows from Cauchy-Schwartz inequality. We can calculate the determinant term above as
\begin{align*}
    \det \brc{\frac{\d \mu_t}{\d x_t}}^{-2} &= \det \brc{ \frac{1}{\sqrt{\alpha_t}} I_d + \frac{1-\alpha_t}{\sqrt{\alpha_t}} \nabla^2 \log q_t(x_t) }^{-2}\\
    &= \brc{\frac{1}{\alpha_t^{\frac{d}{2}}} \brc{1 + (1-\alpha_t) \Tr(\nabla^2 \log q_t(x_t)) + \epsilon_T(x_t) } }^{-2}\\
    &\leq \alpha_t^{\frac{d}{2}} \brc{1 - 2 (1-\alpha_t) \Tr(\nabla^2 \log q_t(x_t)) + \epsilon_T(x_t) }
\end{align*}
where we denote the residual terms as $\epsilon_T(x_t) := \sum_{p=2}^\infty (1-\alpha_t)^p \sum_{\bm{I}:|\bm{I}| = p} c_{\bm{I}} \prod_{(i,j) \in \bm{I}} \partial^2_{i j} \log q_t(x_t)$, where $c_{\bm{I}}$ is some coefficient that does not depend on $T$.
Since from \cref{lem:mat-exp-partial-non-smooth},
\[ \E_{X_t \sim Q_t} \abs{ \partial^2_{i j} \log q_t(X_t) }^\ell = \Tilde{O}\brc{\frac{1}{(1-\Bar{\alpha}_t)^\ell}},\quad \forall i,j \in [d],~\forall \ell \geq 1, \]
and note that $\frac{1-\alpha_t}{1-\Bar{\alpha}_t} = \Tilde{O}\brc{\frac{\log T}{T}}$ with the $\alpha_t$ in \eqref{eq:alpha_genli}, we have that
\begin{align*}
    \E_{X_t \sim Q_t} \abs{\epsilon_T(X_t)} &\leq \sum_{p=2}^\infty (1-\alpha_t)^p \sum_{\bm{I}:|\bm{I}| = p} c_{\bm{I}} \E_{X_t \sim Q_t} \prod_{(i,j) \in \bm{I}} \abs{ \partial^2_{i j} \log q_t(X_t)} \\
    &\leq \sum_{p=2}^\infty (1-\alpha_t)^p \sum_{\bm{I}:|\bm{I}| = p} c_{\bm{I}} \prod_{(i,j) \in \bm{I}} \brc{ \E_{X_t \sim Q_t} \abs{ \partial^2_{i j} \log q_t(X_t)}^{p} }^{\frac{1}{p}} \\
    &= \sum_{p=2}^\infty \Tilde{O}\brc{\frac{(1-\alpha_t)^p}{(1-\Bar{\alpha}_t)^p}} \\
    &= \Tilde{O}\brc{\frac{(\log T)^2}{T^2} },
\end{align*}
and thus
\[ \E_{X_t \sim Q_t} \det \brc{\frac{\d \mu_t}{\d x_t}}^{-2} = \alpha_t^{\frac{d}{2}} + \Tilde{O}\brc{\frac{\log T}{T}} \leq 1 + \Tilde{O}\brc{\frac{\log T}{T}}. \]
Also, since
\begin{align*}
    &\brc{\frac{q_{t|0}(x_t|x_0)}{q_{t-1|0}(\mu_t|x_0)} }^2 = \frac{\frac{1}{(1-\Bar{\alpha}_t)^d} \exp\brc{-\frac{\norm{x_t - \sqrt{\Bar{\alpha}_t} x_0}^2 }{1-\Bar{\alpha}_t}}}{\frac{1}{(1-\Bar{\alpha}_{t-1})^d} \exp\brc{-\frac{\norm{x_t + (1-\alpha_t) \nabla \log q_t(x_t) - \sqrt{\Bar{\alpha}_t} x_0}^2 }{\alpha_t-\Bar{\alpha}_t}}}\\
    &= \brc{\frac{1-\Bar{\alpha}_{t-1}}{1-\Bar{\alpha}_t}}^d \exp\brc{\norm{x_t - \sqrt{\Bar{\alpha}_t} x_0}^2 \brc{\frac{1}{\alpha_t-\Bar{\alpha}_t}-\frac{1}{1-\Bar{\alpha}_t} } } \times\\
    &\qquad \exp\brc{\frac{ 2 (1-\alpha_t)  \nabla \log q_t(x_t)^\T (x_t - \sqrt{\Bar{\alpha}_t} x_0) + (1-\alpha_t)^2 \norm{\nabla \log q_t(x_t)}^2 }{\alpha_t-\Bar{\alpha}_t}}\\
    &\stackrel{(iii)}{\leq} \exp\brc{\norm{\frac{x_t - \sqrt{\Bar{\alpha}_t} x_0}{\sqrt{1-\Bar{\alpha}_t}}}^2 \frac{1-\alpha_t}{\alpha_t - \Bar{\alpha}_t} } \times\\
    &\qquad \exp\brc{\frac{ 2 (1-\alpha_t)  \nabla \log q_t(x_t)^\T (x_t - \sqrt{\Bar{\alpha}_t} x_0) + (1-\alpha_t)^2 \norm{\nabla \log q_t(x_t)}^2 }{\alpha_t-\Bar{\alpha}_t}}\\
    &\stackrel{(iv)}{=} \exp\brc{\norm{\frac{x_t - \sqrt{\Bar{\alpha}_t} x_0}{\sqrt{1-\Bar{\alpha}_t}}}^2 \frac{1-\alpha_t}{\alpha_t - \Bar{\alpha}_t} } \times\\
    &\qquad \brc{1 + \Tilde{O}\brc{\frac{ (1-\alpha_t)  \nabla \log q_t(x_t)^\T (x_t - \sqrt{\Bar{\alpha}_t} x_0) + (1-\alpha_t)^2 \norm{\nabla \log q_t(x_t)}^2 }{\alpha_t-\Bar{\alpha}_t} } }
\end{align*}
where $(iii)$ follows because $\frac{1-\Bar{\alpha}_{t-1}}{1-\Bar{\alpha}_t} < 1$, and $(iv)$ follows because $e^z = 1 + \Tilde{O}(z)$ when $z \to 0$ and because $\frac{1-\alpha_t}{\alpha_t-\Bar{\alpha}_t},\frac{1-\alpha_t}{1-\Bar{\alpha}_t} = \Tilde{O}\brc{ \frac{\log T}{T} }$ with the $\alpha_t$ in \eqref{eq:alpha_genli}. Thus,
\begin{align*}
    &\E_{X_t,X_0\sim Q_{t,0}} \brc{\frac{q_{t|0}(X_t|X_0)}{q_{t-1|0}(\mu_t(X_t)|X_0)} }^2 \\
    &\leq \sqrt{ \E_{X_t,X_0\sim Q_{t,0}} \exp\brc{2 \norm{\frac{X_t - \sqrt{\Bar{\alpha}_t} X_0}{\sqrt{1-\Bar{\alpha}_t}}}^2 \frac{1-\alpha_t}{\alpha_t - \Bar{\alpha}_t}  } } \times \\
    &\sqrt{ 1 + \Tilde{O}\brc{ \E_{X_t,X_0\sim Q_{t,0}} \sbrc{\frac{ (1-\alpha_t) \norm{ \nabla \log q_t(x_t)} \norm{x_t - \sqrt{\Bar{\alpha}_t} x_0} + (1-\alpha_t)^2 \norm{\nabla \log q_t(x_t)}^2 }{\alpha_t-\Bar{\alpha}_t} } } } \\
    &\stackrel{(v)}{=} \sqrt{ \E_{X_t,X_0\sim Q_{t,0}} \exp\brc{2 \norm{\frac{X_t - \sqrt{\Bar{\alpha}_t} X_0}{\sqrt{1-\Bar{\alpha}_t}}}^2 \frac{1-\alpha_t}{\alpha_t - \Bar{\alpha}_t}  } } \times \brc{ 1 + \Tilde{O}\brc{ \frac{\log T}{T} } }
\end{align*}
where $(v)$ follows from \cref{lem:mat-exp-partial-non-smooth} and Cauchy-Schwartz inequality, and
\begin{align*}
    &\E_{X_t,X_0\sim Q_{t,0}} \exp\brc{2 \norm{\frac{X_t - \sqrt{\Bar{\alpha}_t} X_0}{\sqrt{1-\Bar{\alpha}_t}}}^2 \frac{1-\alpha_t}{\alpha_t - \Bar{\alpha}_t}  }\\
    &= \frac{1}{(2 \pi)^{\frac{d}{2}}} \int_z e^{2 \frac{1-\alpha_t}{\alpha_t - \Bar{\alpha}_t} \norm{z}^2 - \frac{1}{2} \norm{z}^2} \d z\\
    &= \frac{1}{(2 \pi)^{\frac{d}{2}}} \int_z e^{- \frac{1}{2} \norm{z}^2 (1+\Tilde{O}(\log T / T))} \d z\\
    &= 1 + \Tilde{O}\brc{\frac{\log T}{T}}.
\end{align*}
Therefore, we arrive at a bound for the second term in \eqref{eq:lem-ptb-mm-non-smooth-main}:
\[ \sqrt{ \int_{x_0,\mu_t} \brc{ \frac{q_{\mu_t}(\mu_t)}{q_{t-1}(\mu_t)} }^2 \d Q_{0|t-1} (x_0|\mu_t) \d Q_{t-1}(\mu_t) } \leq 1 + \Tilde{O}\brc{\frac{\log T}{T}}. \]
and the lemma follows immediately.

\subsection{Proof of \texorpdfstring{\cref{lem:all-deriv-bounded-exp-non-smooth}}{Lemma 17}}
\label{app:proof-lem-all-deriv-bounded-exp-non-smooth}

Fix $t \geq 2$. From \eqref{eq:mat-exp-partial-non-smooth-partial-log}, we also have
\begin{align*}
    &\E_{X_t \sim Q_t} \abs{\partial_{\bm{a}}^p \log q_{t-1}(\mu_t(X_t)) }^\ell \\
    &\leq \frac{1}{(1-\Bar{\alpha}_{t-1})^{\frac{p \ell}{2}}} p^\ell \sum_{\bm{b_1},\dots,\bm{b_p}} \E_{X_t \sim Q_t} \sbrc{ \prod_{j=1}^p \abs{ \int_{x_0} \mathrm{poly}_{\abs{\bm{b_j}}}\brc{\frac{\mu_t(X_t)-\sqrt{\Bar{\alpha}_{t-1}} x_0}{\sqrt{1-\Bar{\alpha}_{t-1}}}} \d Q_{0|t-1}(x_0|\mu_t(X_t)) }^\ell }\\
    &\leq \frac{1}{(1-\Bar{\alpha}_{t-1})^{\frac{p \ell}{2}}} p^\ell \sum_{\bm{b_1},\dots,\bm{b_p}} \prod_{j=1}^p \brc{ \E_{X_t \sim Q_t}  \abs{ \int_{x_0} \mathrm{poly}_{\abs{\bm{b_j}}}\brc{\frac{\mu_t(X_t)-\sqrt{\Bar{\alpha}_{t-1}} x_0}{\sqrt{1-\Bar{\alpha}_{t-1}}}} \d Q_{0|t-1}(x_0|\mu_t(X_t)) }^{\frac{p \ell}{\abs{\bm{b_j}}}} }^{\frac{\abs{\bm{b_j}}}{p}}\\
    &\leq \frac{1}{(1-\Bar{\alpha}_{t-1})^{\frac{p \ell}{2}}} p^\ell \sum_{\bm{b_1},\dots,\bm{b_p}} \prod_{j=1}^p \brc{ \E_{X_t \sim Q_{t}} \int_{x_0} \abs{ \mathrm{poly}_{\abs{\bm{b_j}}}\brc{\frac{\mu_t(X_t)-\sqrt{\Bar{\alpha}_{t-1}} x_0}{\sqrt{1-\Bar{\alpha}_{t-1}}}} }^{\frac{p \ell}{\abs{\bm{b_j}}}} \d Q_{0|t-1}(x_0|\mu_t(X_t)) }^{\frac{\abs{\bm{b_j}}}{p}}\\
    &\leq \frac{1}{(1-\Bar{\alpha}_{t-1})^{\frac{p \ell}{2}}} p^\ell \sum_{\bm{b_1},\dots,\bm{b_p}} \max_{j \in [p]} \E_{X_t \sim Q_{t}} \int_{x_0} \abs{ \mathrm{poly}_{p \ell}\brc{\frac{\mu_t(X_t)-\sqrt{\Bar{\alpha}_{t-1}} x_0}{\sqrt{1-\Bar{\alpha}_{t-1}}}} } \d Q_{0|t-1}(x_0|\mu_t(X_t))\\
    &\lesssim \frac{1}{(1-\Bar{\alpha}_{t-1})^{\frac{p \ell}{2}}} \cdot \E_{X_t \sim Q_{t}} \int_{x_0} \norm{\frac{\mu_t(X_t)-\sqrt{\Bar{\alpha}_{t-1}} x_0}{\sqrt{1-\Bar{\alpha}_{t-1}}}}^{p \ell} \d Q_{0|t-1}(x_0|\mu_t(X_t)) \\
    &\lesssim \frac{d^{\frac{p \ell}{2}}}{(1-\Bar{\alpha}_{t-1})^{\frac{p \ell}{2}}}
\end{align*}
where the last line follows from \cref{lem:ptb-mm-non-smooth}.
Now, together with \cref{lem:mat-exp-partial-non-smooth}, \cref{ass:regular-drv-plus} is established noting that $\frac{1-\alpha_t}{1-\Bar{\alpha}_{t-1}} = \Tilde{O}\brc{\frac{\log T}{T}} = \Tilde{O}(1-\alpha_t)$ for all $t \geq 2$.

\subsection{Proof of \texorpdfstring{\cref{lem:lip-q0-kl-first-step}}{Lemma 18}}
\label{app:proof_lem-lip-q0-kl-first-step}

Recall the expansion of $\zeta'_{1,0}$ in \eqref{eq:accl_zeta_prime_taylor_expansion_smooth}. As in the proof of \Cref{lem:accl-Eq-Ep-zeta}, with the choice of $\mu_1$ and $\Sigma_1$, we still have
\begin{align*}
    \E_{X_{0} \sim P'_{0|1}}[T_1] &= \E_{X_{0} \sim Q_{0|1}}[T_1],\\
    \E_{X_{0} \sim P'_{0|1}}[T'_2] &= \E_{X_{0} \sim Q_{0|1}}[T'_2].
\end{align*}
Define $T_3' := \frac{1}{3!} \sum_{i,j,k=1}^d \partial^3_{ijk} \log q_{0}(\mu_1^*) (x_0^i-\mu_1^i) (x_0^j-\mu_1^j) (x_0^k-\mu_1^k)$. Here $\mu_1^* = \mu_1^*(x_1,x_0)$ is a function of both $x_1$ and $x_0$. A useful result from \Cref{lem:mat-exp-partial-non-smooth} is that, with the $\alpha_t$ in \eqref{eq:alpha_genli}, we have, $\forall i,j,k \in [d]$ and $\ell \geq 1$,
\begin{align}
    &(1-\alpha_1)^\ell \E_{X_1 \sim Q_1} \abs{\partial^2_{ij} \log q_1 (X_1)}^\ell \lesssim \frac{(1-\alpha_1)^\ell d^\ell}{(1-\Bar{\alpha}_1)^\ell} = d^\ell, \label{eq:proof_lem-lip-q0-kl-first-step-logq1_1}\\
    &(1-\alpha_1)^3 \E_{X_1 \sim Q_1} \abs{\partial^3_{ijk} \log q_1 (X_1)}^2 \lesssim \frac{(1-\alpha_1)^3 d^3}{(1-\Bar{\alpha}_1)^3} = d^3. \label{eq:proof_lem-lip-q0-kl-first-step-logq1_2}
\end{align}

First, using \cref{lem:tweedie}, we have that
\begin{align*}
    &\E_{X_{0},X_1 \sim Q_{0,1}} \sbrc{T_3'} \\
    &= \frac{(1-\alpha_1)^3}{3! \alpha_1^{3/2}} \sum_{i,j,k=1}^d \E_{X_{0},X_1 \sim Q_{0,1}} [\partial^3_{ijk} \log q_{0}(\mu_1^*(X_1,X_0)) \partial^3_{ijk} \log q_1(X_1) ] \\
    &\leq \frac{(1-\alpha_1)^3}{3! \alpha_1^{3/2}} \sqrt{\E_{X_{0},X_1 \sim Q_{0,1}} \sum_{i,j,k=1}^d (\partial^3_{ijk} \log q_{0}(\mu_1^*(X_1,X_0)))^2 } \sqrt{\E_{X_1 \sim Q_1} \sum_{i,j,k=1}^d (\partial^3_{ijk} \log q_1(X_1))^2 } \nonumber\\
    &\leq \frac{(1-\alpha_1)^3}{3! \alpha_1^{3/2}} d M \sqrt{\E_{X_1 \sim Q_1} \sum_{i,j,k=1}^d (\partial^3_{ijk} \log q_1(X_1))^2 }.
\end{align*}
Here in the last line we have used a similar technique in \eqref{eq:accl-lip-main}, which assumes that $\nabla^2 \log q_0$ is 2-norm $M$-Lipschitz. Now, from \eqref{eq:proof_lem-lip-q0-kl-first-step-logq1_2} we have
\[ \E_{X_{0},X_1 \sim Q_{0,1}} \sbrc{T_3'} \lesssim \frac{(1-\alpha_1)^{3/2}}{3! \alpha_1^{3/2}} d^4 M. \]

Also,
\begin{align*}
    &\E_{\substack{X_0 \sim P'_{0|1} \\ X_1 \sim Q_1}} \sbrc{T_3'} \\
    &= \frac{1}{3!} \sum_{i,j,k=1}^d \E_{\substack{X_0 \sim P'_{0|1} \\ X_1 \sim Q_1}} \sbrc{ \partial^3_{ijk} \log q_{0}(\mu_1^*(X_1,X_0)) \prod_{c=i,j,k} (X_0^c-\mu_1^c(X_1)) } \\
    &\stackrel{(i)}{\leq} \frac{1}{3!} d M \sqrt{\E_{\substack{X_0 \sim P'_{0|1} \\ X_1 \sim Q_1}} \norm{X_0 - \mu_1(X_1)}^6 } \\
    &\leq \frac{1}{3!} d^2 M \sqrt{\sum_{i=1}^d \E_{\substack{X_0 \sim P'_{0|1} \\ X_1 \sim Q_1}} \brc{X_0^i - \mu_1(X_1)^i}^6 } \\
    &\stackrel{(ii)}{=} \frac{1}{3!} d^2 M \sqrt{\sum_{i=1}^d 15 \brc{\frac{1-\alpha_1}{\alpha_1}}^3 \E_{X_1 \sim Q_1} \brc{1 + (1-\alpha_1) \partial^2_{ii} \log q_1(X_1) }^3 } \\
    &\stackrel{(iii)}{\lesssim} \frac{(1-\alpha_1)^{3/2}}{3!\alpha_1^{3/2}} d^4 M
\end{align*}
where $(i)$ holds with a similar technique in \eqref{eq:accl-lip-main} assuming $\nabla^2 \log q_0$ is $M$-Lipschitz, $(ii)$ holds by \Cref{lem:accl-Ep-moments}, and $(iii)$ holds by \eqref{eq:proof_lem-lip-q0-kl-first-step-logq1_1}. 
The proof is now complete.

\end{document}